\documentclass{article}

\usepackage[nonatbib, final]{neurips_data_2023}

\usepackage[utf8]{inputenc} 
\usepackage[T1]{fontenc}    
\usepackage{hyperref}       
\usepackage{url}            
\usepackage{booktabs}       
\usepackage{amsfonts}       
\usepackage{nicefrac}       
\usepackage{microtype}      
\usepackage[dvipsnames]{xcolor}
\usepackage{enumitem,kantlipsum}
\usepackage{wrapfig}
\usepackage{tikzsymbols}
\usepackage{multirow}
\usepackage{diagbox}

\usepackage{subcaption}
\usepackage{microtype}
\usepackage{graphicx}
\usepackage{booktabs} 
\usepackage{cancel}
\usepackage{hyperref}


\newcommand*{\defeq}{\stackrel{\text{def}}{=}}

\usepackage{amsmath}
\usepackage{amssymb}
\usepackage{mathtools}
\usepackage{amsthm}

\usepackage[capitalize,noabbrev]{cleveref}

\theoremstyle{plain}
\newtheorem{theorem}{Theorem}[section]
\newtheorem{proposition}[theorem]{Proposition}

\newtheorem{corollary}[theorem]{Corollary}
\theoremstyle{definition}

\theoremstyle{remark}

\usepackage[textsize=tiny]{todonotes}

\usepackage{wasysym}
\usepackage{pifont}
\usepackage{amssymb}

\newcommand{\xmark}{\ding{55}}

\usepackage{booktabs}
\usepackage{siunitx}

\usepackage{cases}

\def\sP{{\mathbb{P}}}
\def\sR{{\mathbb{R}}}

\DeclareMathOperator*{\arginf}{\arg\!\inf}

\DeclareMathOperator*{\Tr}{Tr}
\newcommand{\KL}[2]{\text{KL}\left(#1\Vert #2\right)}
\newcommand{\RKL}[2]{\text{RKL}\left(#1\Vert #2\right)}

\newcommand{\calcfactor}[1]{%
  \dimexpr#1\textwidth-2\tabcolsep-1.5\arrayrulewidth\relax
}
\newcolumntype{P}[1]{p{\calcfactor{#1}}}
\usepackage{makecell}
\everypar{\looseness=-1}
\allowdisplaybreaks

\title{Building the Bridge of Schrödinger: \\
         A Continuous Entropic Optimal Transport Benchmark}

\author{%
    Nikita Gushchin \\
    Skoltech\thanks{Skolkovo Institute of Science and Technology}\\
    Moscow, Russia \\
    \texttt{n.gushchin@skoltech.ru} \\
    \And
    Alexander Kolesov \\
    Skoltech$^{*}$\\
    Moscow, Russia \\
    \texttt{a.kolesov@skoltech.ru} \\
    \And
    Petr Mokrov \\
    Skoltech$^{*}$\\
    Moscow, Russia \\
    \texttt{petr.mokrov@skoltech.ru} \\
    \And
    Polina Karpikova \\
    Skoltech$^{*}$\\
    Moscow, Russia \\
    \texttt{polina.karpikova@skoltech.ru} \\
    \And
    Andrey Spiridonov \\
    Skoltech$^{*}$\\
    Moscow, Russia \\
    \texttt{andrew.spiridonov@skoltech.ru} \\
    \And
    Evgeny Burnaev \\
    Skoltech$^{*}$\\
    AIRI\thanks{Artificial Intelligence Research Institute}\\
    Moscow, Russia \\
    \texttt{e.burnaev@skoltech.ru}
    \And
    Alexander Korotin \\
    Skoltech$^{*}$\\
    AIRI$^{\dagger}$\\
    Moscow, Russia \\
    \texttt{a.korotin@skoltech.ru} \\
}

\begin{document}

\maketitle

\begin{abstract}
    Over the last several years, there has been significant progress in developing neural solvers for the Schrödinger Bridge (SB) problem and applying them to generative modelling. This new research field is justifiably fruitful as it is interconnected with the practically well-performing diffusion models and theoretically grounded entropic optimal transport (EOT). Still, the area lacks non-trivial tests allowing a researcher to understand how well the methods solve SB or its equivalent continuous EOT problem. We fill this gap and propose a novel way to create pairs of probability distributions for which the ground truth OT solution is known by the construction. Our methodology is generic and works for a wide range of OT formulations, in particular, it covers the EOT which is equivalent to SB (the main interest of our study). This development allows us to create continuous benchmark distributions with the known EOT and SB solutions on high-dimensional spaces such as spaces of images. As an illustration, we use these benchmark pairs to test how well existing neural EOT/SB solvers actually compute the EOT solution. Our code for constructing benchmark pairs under different setups is available at: \newline\centerline{\hspace*{-25mm}\url{https://github.com/ngushchin/EntropicOTBenchmark}}.
\end{abstract}

Diffusion models are a powerful tool to solve image synthesis \cite{ho2020denoising,rombach2022high} and image-to-image translation \cite{su2023dual,saharia2022palette} tasks.  Still, they suffer from the time-consuming inference which requires modeling thousands of diffusion steps. Recently, the \textbf{Schrodinger Bridge} (SB) has arisen as a promising framework to cope with this issue \cite{de2021diffusion,chen2022likelihood,wang2021deep}. Informally, SB is a special diffusion which has rather \textit{straight trajectories} and \textit{finite time horizon}. Thus, it may require fewer discretization steps to infer the diffusion.

In addition to promising practical features, SB is known to have good and well-studied theoretical properties. Namely, it is \underline{equivalent} \cite{leonard2013survey} to the \textbf{Entropic Optimal Transport} problem (EOT, \cite{cuturi2013sinkhorn,genevay2019entropy}) about moving the mass of one probability distribution to the other in the most efficient way. This problem has gained a genuine interest in the machine learning community thanks to its nice sample complexity properties, convenient dual form and a wide range of applications \cite{khamis2023earth,bonneel2023survey,peyre2019computational}.

Expectidely, recent \textbf{neural EOT/SB} solvers start showing promising performance in various tasks \cite{de2021diffusion,vargas2021solving,chen2022likelihood,daniels2021score,gushchin2023entropic,mokrov2023energy}. However, it remains unclear to which extent this success is actually attributed to the fact that these methods properly solve EOT/SB problem rather than to a good choice of parameterization, regularization, tricks, etc. This ambiguity exists because of the \textbf{lack of ways to evaluate the performance of solvers qualitatively} in solving EOT/SB. Specifically, the class of continuous distributions with the analytically known EOT/SB solution is narrow (Gaussians \cite{chen2015optimal,mallasto2022entropy,janati2020entropic,bunne2023schrodinger}) and these solutions have been obtained only recently. Hence, although papers in the field of neural EOT/SB frequently appear, we never know how well they actually solve EOT/SB. 

\textbf{Contributions}. We develop a generic methodology for evaluating continuous EOT/SB solvers.
\begin{enumerate}[leftmargin=*]
\item We propose a generic method to create continuous pairs of probability distributions with analytically known (by our construction) EOT solution between them (\wasyparagraph\ref{sec:general-theorem}, \wasyparagraph\ref{sec:eot-benchmark}). 
\item We use log-sum-exp of quadratic functions (\wasyparagraph\ref{sec:lse-potentials}, \wasyparagraph\ref{sec:sb-benchmark}) to construct pairs of distributions (\wasyparagraph\ref{sec:benchmark-construction}) that we use as a benchmark with analytically-known EOT/SB solution for the quadratic cost.
\item We use these \textbf{benchmark pairs} to evaluate (\wasyparagraph\ref{sec:evaluation}) many popular neural EOT/SB solvers (\wasyparagraph\ref{sec:methods}) in high-dimensional spaces, including the space of 64 × 64 celebrity faces.
\end{enumerate}

In the field of neural OT, there already exist several \underline{benchmarks} for the Wasserstein-2 \cite{korotin2021neural}, the Wasserstein-1 \cite{korotin2022kantorovich} and the Wasserstein-2 barycenter \cite{korotin2022kantorovich} OT tasks. Their benchmark construction methodologies work \textbf{only} for specific OT formulations and \textbf{do not} generalize to EOT which we study.

\vspace{-3mm}
\section{Background: Optimal Transport and Schrödinger Bridges Theory}
\label{sec-background}
\vspace{-3mm}

We work in Euclidean space $\mathcal{X}=\mathcal{Y}=\mathbb{R}^{D}$ equipped with the standard Euclidean norm $\|\cdot\|$. We use $\mathcal{P}(\mathcal{X})=\mathcal{P}(\mathcal{Y})=\mathcal{P}(\mathbb{R}^{D})$ to denote the sets of Borel probability distributions on $\mathcal{X},\mathcal{Y}$, respectively. 

\textbf{Classic (Kantorovich) OT formulation} \cite{kantorovich1942translocation,villani2008optimal,santambrogio2015optimal}. For two distributions ${\sP_0 \in  \mathcal{P}(\mathcal{X})}$, ${\sP_1 \in \mathcal{P}(\mathcal{Y})}$ and a cost function ${c : \mathcal{X} \times \mathcal{Y} \rightarrow \sR}$, consider the following problem (Fig. \ref{fig:classic-ot}):
\begin{equation}\label{kantarovich-ot}
\text{OT}_{c}(\mathbb{P}_0,\mathbb{P}_1)\defeq\inf_{\pi \in \Pi(\sP_0, \sP_1)} \int_{\mathcal{X} \times \mathcal{Y}} c(x, y) d\pi(x,y),
\end{equation}
where the optimization is performed over the set $\Pi(\mathbb{P}_0,\sP_1)$ of transport plans, i.e., joint distributions on $\mathcal{X} \times \mathcal{Y}$ with marginals ${\sP_0\text{, }\sP_1}$, respectively. The set ${\Pi(\sP_0, \sP_1)}$ is non-empty as it always contains the trivial plan ${\sP_0 \times \sP_1}$. With mild assumptions, a minimizer $\pi^{*}$ of \eqref{kantarovich-ot} exists and is called an \textit{OT plan}. Typical examples of $c$ are powers of Euclidean norms, i.e., $c(x,y)=\frac{1}{q}\|x-y\|^{q}$, $q\geq 1$.

\begin{figure}[!h]
\vspace{-2mm}
\begin{subfigure}[b]{0.48\linewidth}
\centering
\includegraphics[width=0.9\linewidth]{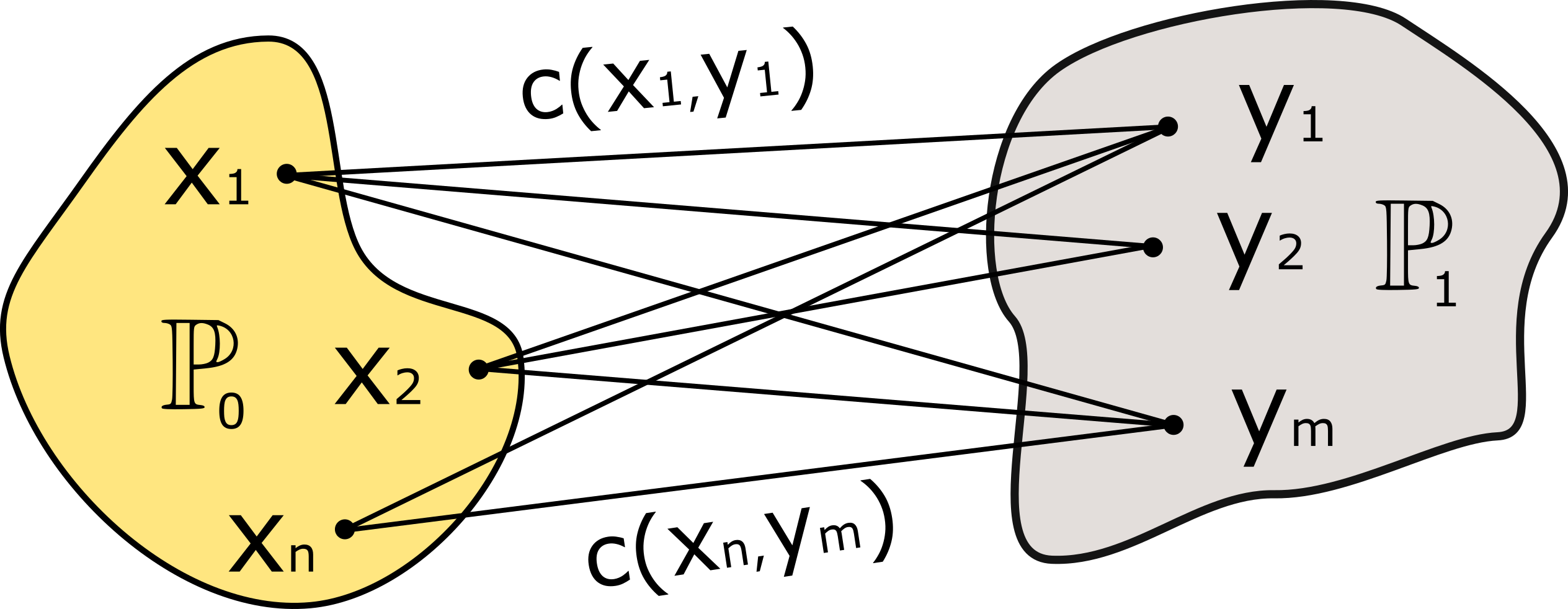}
\caption{\centering Classic OT formulation \eqref{kantarovich-ot}.}
\label{fig:classic-ot}
\end{subfigure}
\begin{subfigure}[b]{0.48\linewidth}
\centering
\includegraphics[width=0.9\linewidth]{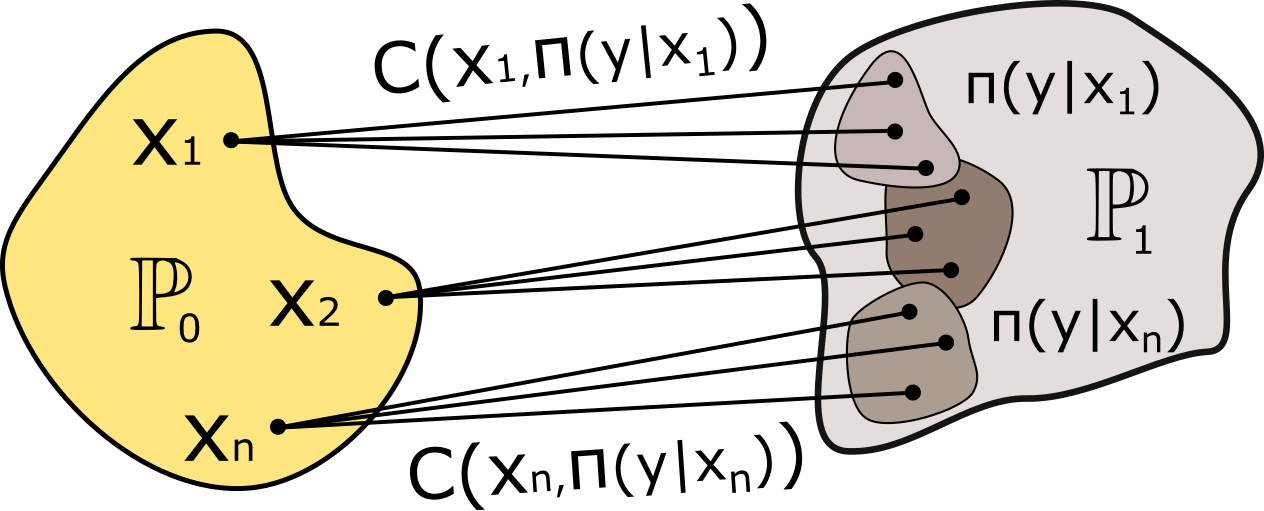}
\caption{\centering Weak OT formulation \eqref{weak-ot}.}
\label{fig:weak-ot}
\end{subfigure}
\caption{Classic (Kantorovich's) and weak OT formulations.}
\vspace{-2mm}
\end{figure}

\textbf{Weak OT formulation} \cite{gozlan2017kantorovich,backhoff2019existence,backhoff2022applications}.
Let $C : \mathcal{X} \times \mathcal{P}(\mathcal{Y}) \rightarrow \sR \cup\{+\infty\}$ be a weak cost  which takes a point $x \in \mathcal{X}$ and a distribution of $y \in \mathcal{Y}$ as inputs. The weak OT cost between $\sP_0$, $\sP_1$ is  (Fig. \ref{fig:weak-ot})
\begin{equation}
\text{WOT}_{C}(\mathbb{P}_0,\mathbb{P}_1)\defeq\inf_{\pi \in \Pi(\sP_0, \sP_1)} \int_{\mathcal{X}} C(x, \pi(\cdot|x)) d\pi_{0}(x)=\inf_{\pi \in \Pi(\sP_0, \sP_1)} \int_{\mathcal{X}} C(x, \pi(\cdot|x)) d\mathbb{P}_{0}(x),
\label{weak-ot}
\end{equation}
where $\pi(\cdot| x)$ denotes the conditional distribution of $y\in\mathcal{Y}$ given $x\in\mathcal{X}$ and $\pi_{0}$ is the projection of $\pi$ to $\mathcal{X}$ which equals $\mathbb{P}_0$ since $\pi\in\Pi(\mathbb{P}_0,\sP_1)$. Weak OT formulation \eqref{weak-ot} generalizes classic OT \eqref{kantarovich-ot}: it suffices to pick $C\big(x,\pi(\cdot|x)\big)=\int_{\mathcal{Y}}c(x,y)d\pi(y|x)$ to obtain \eqref{kantarovich-ot} from \eqref{weak-ot}. A more general case of a weak cost is $C\big(x,\pi(\cdot|x)\big)=\int_{\mathcal{Y}}c(x,y)d\pi(y|x)+\epsilon \mathcal{R}\big(\pi(\cdot|x)\big),$
where $\epsilon>0$ and $\mathcal{R}:\mathcal{P}(\mathcal{Y})\rightarrow\mathbb{R}$ is some functional (a.k.a. regularizer), e.g., variance \cite{backhoff2022applications,korotin2023neural}, kernel variance \cite{korotin2023kernel} or entropy \cite{backhoff2022applications}. With mild assumptions on the weak cost function  $C$, an OT plan $\pi^{*}$ in \eqref{weak-ot} exists. We say that the family of its conditional distributions $\{\pi^{*}(\cdot|x)\}_{x\in\mathcal{X}}$ is the \textit{conditional OT plan}.

\textbf{Entropic OT formulation} \cite{cuturi2013sinkhorn,genevay2019entropy}.
It is common to consider entropy-based regularizers for \eqref{kantarovich-ot}:
\begin{numcases}{
\begin{cases}
         \text{EOT}_{c, \epsilon}^{(1)}(\sP_0, \sP_1) \\
         \text{EOT}_{c, \epsilon}^{(2)}(\sP_0, \sP_1) \\
         \text{EOT}_{c, \epsilon}(\sP_0, \sP_1)
    \end{cases} \defeq \min\limits_{\pi \in \Pi(\sP_0, \sP_1)} \int_{\mathcal{X} \times \mathcal{Y}} c(x, y) \pi(x, y) + 
}
    + \epsilon \KL{\pi}{\sP_0 \!\times\! \sP_1}, \label{EOT_primal_1}\\
    - \epsilon H(\pi), \label{EOT_primal_2} \\
    - \epsilon \resizebox{.03\hsize}{!}{$\int_{\mathcal{X}}$} H\big(\pi(\cdot\vert x)\big) d\sP_0(x). \label{EOT_primal}
\end{numcases}
Here $\text{KL}$ is the Kullback–Leibler divergence and $H$ is the differential entropy, i.e., the minus KL divergence with the Lebesgue measure. Since $\pi \in \Pi(\sP_0, \sP_1)$, it holds that 
$\KL{\pi}{\sP_0 \!\times\! \sP_1} = H(\mathbb{P}_0)-\int_{\mathcal{X}}H\big(\pi(y\vert x)\big) d\sP_0(x)=-H(\pi) + H(\sP_0) + H(\sP_1),$
i.e., these formulations are equal up to an additive constant when $\sP_0 \in \mathcal{P}_{ac}(\mathcal{X})$ and $\sP_1 \in \mathcal{P}_{ac}(\mathcal{Y})$ and have finite entropy. Here we introduce "\textit{ac}" subscript to indicate the subset of absolutely continuous distributions. With mild assumptions on $c,\sP_0,\sP_1$, the minimizer $\pi^{*}$ exists, it is  \textbf{unique} and called the entropic OT plan. It is important to note that \textit{entropic OT} \eqref{EOT_primal} \textit{is a case of weak OT} \eqref{weak-ot}. Indeed, for the weak cost
\begin{eqnarray}
C_{c,\epsilon}(x, \pi(\cdot|x)) \defeq 
\int_{\mathcal{Y}} c(x, y)d\pi(y|x) - \epsilon H\big(\pi(\cdot|x)\big),
\label{weak-entropic-ot-cost} 
\end{eqnarray}
formulation \eqref{weak-ot} immediately turns to \eqref{EOT_primal}. This allows us to apply the theory of weak OT to EOT.

\textbf{Dual OT formulation.} There exists a wide range of dual formulations of OT \cite{villani2008optimal,santambrogio2015optimal}, WOT \cite{backhoff2019existence,gozlan2017kantorovich} and EOT \cite{genevay2019entropy,peyre2019computational}. We only recall the particular dual form for WOT from \cite{backhoff2022applications,backhoff2019existence} which serves as the main theoretical ingredient for our paper. For technical reasons, from now on we consider only $\sP_{1}\in\mathcal{P}_{p}(\mathcal{Y})\subset \mathcal{P}(\mathcal{Y})$ for some $p\geq 1$, where subscript "$p$" indicates distributions with a finite $p$-th moment. We also assume that the weak cost $C:\mathcal{X}\times\mathcal{P}_{p}(\mathcal{Y})\rightarrow \sR\cup\{+\infty\}$ is lower bounded, \textit{convex} in the second argument and jointly lower-semicontinuous in $\mathcal{X}\times\mathcal{P}_{p}(\mathcal{Y})$. In this case, a minimizer $\pi^{*}$ of WOT \eqref{weak-ot} exists \cite[Theorem 3.2]{backhoff2022applications} and the following dual formulation holds \cite[Eq.  3.3]{backhoff2022applications}:
\begin{eqnarray}\label{eq:dual-form-ot}
    \text{WOT}_{C}(\mathbb{P}_0,\mathbb{P}_1)=\sup_{f} \bigg\lbrace\int_{\mathcal{X}} f^{C}(x) d\sP_0(x) + \int_{\mathcal{Y}} f(y) d \sP_1(y)\bigg\rbrace,
    \label{dual-ot}
\end{eqnarray}
 where $f\in\mathcal{C}_{p}(\mathcal{Y})\defeq \{f:\mathcal{Y}\rightarrow\mathbb{R} \text{ continuous s.t. }\exists \alpha,\beta\in\mathbb{R}:\text{ } |f(\cdot)|\leq \alpha\|\cdot\|^{p}+\beta\}$ and $f^{C}$ is the so-called \textit{weak $C$-transform} of $f$ which is defined by
\begin{eqnarray}\label{C-transform}
    f^{C}(x) \defeq \inf_{\nu \in \mathcal{P}_{p}(\mathcal{Y})} \{C(x, \nu) - \int_{\mathcal{Y}} f(y) d\nu(y) \}.
\end{eqnarray}
Function $f$ in \eqref{dual-ot} is typically called the dual variable or the \textit{Kantorovich potential}.

\textbf{SB problem with Wiener prior} \cite{leonard2013survey,chen2016relation}. Let $\Omega$ be the space of $\sR^D$-valued functions of time $t\in [0,1]$ describing trajectories in $\sR^D$, which start at time ${t=0}$ and end at time ${t=1}$. We use ${\mathcal{P}(\Omega)}$ to denote the set of probability distributions on $\Omega$, i.e., stochastic processes.

\begin{wrapfigure}{r}{0.44\textwidth}
  \vspace{-5mm}
  \begin{center}
    \includegraphics[width=\linewidth]{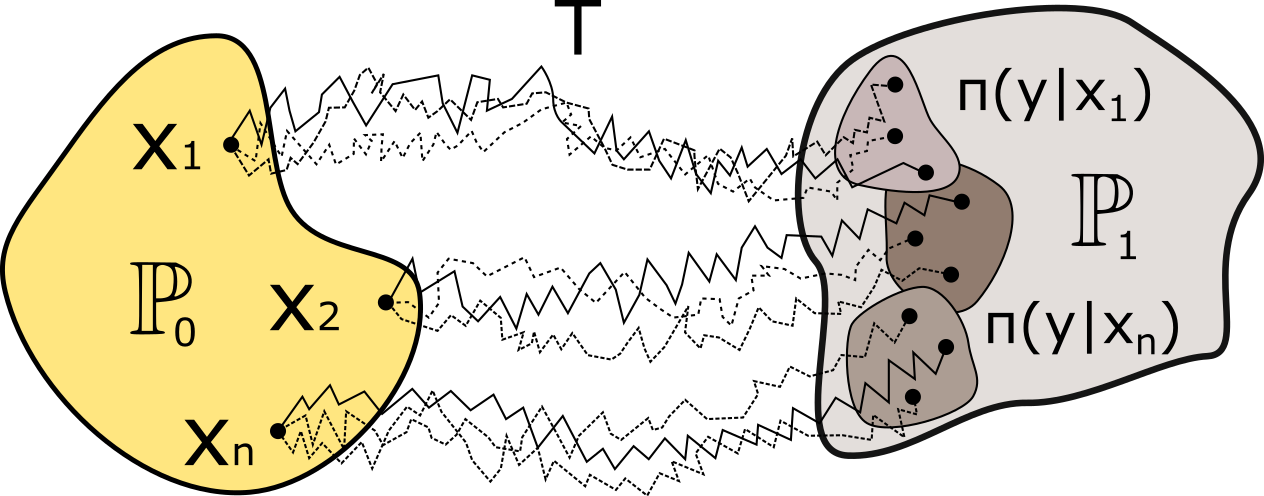}
  \end{center}\vspace{-2.4mm}
  \caption{\centering The bridge of Schr\"odinger.}
  \label{fig:stochastic-process}
  \vspace{-4mm}
\end{wrapfigure}

Consider two distributions $\mathbb{P}_{0}\in\mathcal{P}_{2,ac}(\mathcal{X})$ and $\mathbb{P}_{1}\in\mathcal{P}_{2,ac}(\mathcal{Y})$ with finite entropy.
Let $\mathcal{F}(\sP_0,\sP_1)\subset \mathcal{P}(\Omega)$ be the subset of processes which have marginals $\mathbb{P}_{0}$ and $\mathbb{P}_{1}$ at times $t=0$ and $t=1$, respectively. 
Let $dW_t$ be the differential of the standard $\mathbb{R}^{D}$-valued Wiener process. Let ${W^{\epsilon} \in \mathcal{P}(\Omega)}$ be the Wiener process with the variance $\epsilon> 0$ which starts at $\sP_0$ at time ${\color{black}t=0}$. It can be represented via the following stochastic differential equation (SDE): $dX_t = \sqrt{\epsilon}dW_t$ with $X_0 \sim \sP_0$.

The Schrödinger Bridge problem with the \textbf{Wiener prior} is the following:
\begin{equation}\label{general-SB}
\inf_{T \in \mathcal{F}(\sP_0, \sP_1)}\KL{T}{W^{\epsilon}}.
\end{equation}
The $\inf$ is attained uniquely at some process $T^{*}$ \cite[Proposition 4.1]{leonard2013survey}. This process turns out to be a \textbf{diffusion} process and can be (uniquely) represented as the following SDE:
\begin{equation}\label{diff}
T^{*}\hspace{2mm}:\hspace{2mm}dX_t = v^{*}(X_t, t)dt + \sqrt{\epsilon}dW_t,
\end{equation}
where ${v^{*}: \sR^D \times [0, 1] \rightarrow \sR^D}$ is its drift function which we call the \textit{optimal drift}. Hence, in \eqref{general-SB}, one may consider only diffusion processes $\subset\mathcal{F}(\sP_0,\sP_1)$ with the volatility $\epsilon$ coinciding with the volatility of the Wiener prior $W^{\epsilon}$. In turn, solving SB can be viewed as finding the optimal drift $v^{*}$.

\textbf{Link between SB and EOT problem.} The process $T^{*}$ solving SB \eqref{general-SB} is related to the solution $\pi^{*}$ of EOT problem \eqref{EOT_primal} \textit{with the quadratic cost function} $c(x,y)=\frac{1}{2}\|x-y\|^{2}$. We start with some notations. For a process $T\in\mathcal{P}(\Omega)$, denote the joint distribution at time moments $t=0,1$ by $\pi^{T}\in\mathcal{P}(\mathcal{X}\times\mathcal{Y})$. Let $T_{|x,y}$ be the distribution of $T$ for $t\in(0,1)$ conditioned on $T$'s values $x,y$ at $t=0,1$. 

\hspace{-1mm}\fbox{%
    \parbox{\linewidth}{
    \centering
    For the solution $T^{*}$ of SB \eqref{general-SB}, it holds that $\pi^{T^{*}}=\pi^{*}$, where $\pi^{*}$ is the EOT plan solving \eqref{EOT_primal}. Moreover, $T^{*}_{|x,y}=W^{\epsilon}_{|x,y}$, i.e., informally, the "\textit{inner}" part of $T^{*}$ matches that of the prior $W^{\epsilon}$.
    }
}

Conditional process $W_{|x,y}^{\epsilon}$ is well-known as the \textbf{Brownian Bridge}. Due to this, given $x,y$, simulating the trajectories of $W_{|x,y}^{\epsilon}$ is rather straightforward. Thanks to this aspect, SB and EOT can be treated as \textit{nearly} equivalent problems. Still EOT solution $\pi^{*}$ does not directly yield the optimal drift $v^{*}$. However, it is known that the density $\frac{d\pi^*(x,y)}{d(x,y)}$ of $\pi^*$ has the specific form \cite[Theorem 2.8]{leonard2013survey}, namely,
$\frac{d\pi^*(x,y)}{d(x,y)} = \widetilde{\varphi}^*(x) \mathcal{N}(y|x, \epsilon I) \varphi^*(y)$,
where functions $\varphi^*, \widetilde{\varphi}^{*}:\mathbb{R}^{D}\rightarrow\mathbb{R}$ are called the \textit{Schrödinger potentials}.
From this equality one gets the expression for $\varphi^{*}(\cdot)$ and the density of $\pi^{*}(\cdot|x)$:
\begin{equation}\frac{d\pi^*(y|x)}{dy} \propto \mathcal{N}(y|x, \epsilon I)\varphi^*(y)\qquad \Longrightarrow \qquad\varphi^*(y)\propto \frac{d\pi^*(y|x)}{dy} \cdot \big[\mathcal{N}(y|x, \epsilon I)\big]^{-1}
\label{schrodinger-vs-plan}
\end{equation}
up to multiplicative constants. One may recover the optimal drift $v^{*}$ via \cite[Proposition 4.1]{leonard2013survey}
\begin{equation}v^*(x, t) = \epsilon \nabla \log \int_{\mathbb{R}^{D}} \mathcal{N}(y|x, (1-t)\epsilon I_{D}) \varphi^*(y) 
dy.\label{sb-drift}\end{equation}
Here the normalization constant vanishes when one computes $\nabla\log(\cdot)$. Thus, technically, knowing the (unnormalized) density of $\pi^{*}$, one may recover the optimal drift $v^{*}$ for SB \eqref{general-SB}.

\vspace{-2mm}
\section{Background: Solving Continuous OT and SB Problems}\label{sec:methods}
\vspace{-2mm}
Although OT \eqref{weak-ot}, EOT \eqref{EOT_primal} and SB \eqref{general-SB} problems are well-studied in theory, solving them in practice is challenging. Existing OT solvers are of two main types: \textit{discrete} \cite{peyre2019computational} and \textit{continuous} \cite{korotin2021neural}. \textbf{Our benchmark is designed for continuous EOT solvers}; discrete OT/EOT is out of the scope of the paper.

Continuous OT assumes that distributions $\mathbb{P}_{0}$ and $\mathbb{P}_{1}$ are continuous and accessible only via their random samples $X=\{x_1,\dots,x_{N}\}\sim\sP_{0}$ and $Y=\{y_1,\dots,y_{M}\}\sim\sP_{1}$. The goal is to recover an OT plan $\pi^{*}$ between \textit{entire} $\sP_0$ and $\sP_1$ but using only $X$ and $Y$. Most continuous OT solvers do this via employing neural networks to implicitly learn the conditional distributions $\widehat{\pi}(\cdot|x)\approx \pi^{*}(\cdot|x)$. In turn, SB solvers learn the optimal drift $\widehat{v}\approx v^{*}$ but it is anyway used to produce samples $y\sim\widehat{\pi}(\cdot|x)$ via solving SDE $dX_{t}=\widehat{v}(x,t)dt+\sqrt{\epsilon}dW_t$ starting from $X_0=x$ (sampled from $\sP_0$) at time $t=0$.

After training on available samples $X$ and $Y$, continuous solvers may produce $y\sim \widehat{\pi}(\cdot|x_{\text{test}})$ for previously unseen samples $x_{\text{test}}\sim\mathbb{P}_0$. This is usually called the out-of-sample estimation. It allows applying continuous OT solver to generative modelling problems such as the \textbf{image synthesis} (\textit{noise-to-data}) and \textbf{translation} (\textit{data-to-data}). In both these cases, $\mathbb{P}_{1}$ is a data distribution, and $\mathbb{P}_0$ is either a noise (in synthesis) or some other data distribution (in translation). Many recent OT solvers achieve competitive performance in synthesis \cite{choi2023generative,de2021diffusion,rout2022generative} and translation \cite{korotin2023neural,korotin2023kernel} tasks.

Continuous OT/SB solvers are usually referred to as \underline{\textbf{neural OT/SB}} because they employ neural networks. There exist a lot of neural OT solvers for classic OT \eqref{kantarovich-ot} \cite{rout2022generative,fan2023neural,xie2019scalable,makkuva2020optimal,korotin2019wasserstein,gazdieva2022unpaired}, see also \cite{korotin2021neural,korotin2022kantorovich} for \textbf{surveys}, weak OT \eqref{weak-ot} \cite{korotin2023kernel,korotin2023neural,asadulaev2022neural}, entropic OT \eqref{EOT_primal} \cite{seguy2018large,daniels2021score,mokrov2023energy} and SB \eqref{general-SB} \cite{vargas2021solving,de2021diffusion,chen2022likelihood,gushchin2023entropic}. Providing a concise but still explanatory overview of them is nearly impossible as the \textit{underlying principles of many of them are rather different and non-trivial}. We list only EOT/SB solvers which are relevant to our benchmark and provide a brief summary of them in Table \ref{table-methods}. In \wasyparagraph\ref{sec:evaluation}, we test all these solvers on our continuous benchmark distributions which we construct in subsequent \wasyparagraph\ref{sec:benchmark}.

\begin{table*}[h]
\hspace{-15mm}
\scriptsize
\color{black}
\begin{tabular}{ |c|c|c|c|c|c|c| } 
\hline
 \multirow{4}{*}{\centering \makecell{\vspace{11mm} \\ \textbf{EOT} \\ solvers}} & \textbf{Solver} & \shortstack{\textbf{Underlying principle}\\\textbf{and parameterization}} & \textbf{\shortstack{Evaluated as\\ EOT/SB}} & \textbf{\shortstack{Tested in generation\\(\textit{noise}$\rightarrow$\textit{data})}} & \textbf{\shortstack{Tested in translation\\(\textit{data}$\rightarrow$\textit{data})}} \\ 
\hline
 & LSOT \cite{seguy2018large,genevay2016stochastic} & \shortstack{
\vspace*{0.1mm}~\\Solves classic dual EOT \cite[\wasyparagraph 3.1]{genevay2019entropy} with 2 NNs.\\
Learns 1 more NN for the barycentric projection.} & {\large\xmark} & MNIST (32x32) & \makecell{MNIST$\rightarrow$USPS(16x16), \\ USPS$\rightarrow$MNIST (16x16), \\ SVHN$\rightarrow$MNIST (3x32x32)}  \\ 
\cline{2-6} 
 & SCONES \cite{daniels2021score} & \shortstack{\vspace*{0.1mm}~\\Combines LSOT's potentials with a score model for \\
$\sP_1$ to sample from $\pi^{*}(\cdot|x)$ via Langevin dynamics.} & Gaussians & {\large\xmark} & \makecell{CelebA  Upscale (\textbf{3x64x64})} \\ 
  \cline{2-6} 
 & NOT\textbf{*} \cite{korotin2023neural} & \shortstack{\vspace*{0.1mm}~\\Solves max-min reformulation of weak OT  \\ dual \eqref{dual-ot} with $2$ NNs (transport map and potential).}  & \multicolumn{3}{l|}{\makecell{* This is a generic neural solver for weak OT\\but it has \textbf{not} been tested with the \textbf{entropic} cost function.}} \\ 
\cline{2-6} 
 & EGNOT \cite{mokrov2023energy} & \shortstack{\vspace*{0.1mm}~\\Employs energy-based modeling (EBM \cite{lecun2006tutorial})\\ to solve weak EOT dual \eqref{dual-ot}; non-minimax; 1NN.} & Gaussians & {\large\xmark} & Colored MNIST 2$\rightarrow$3 (3x32x32) \\ 
\hline
  \multirow{4}{*}{\centering \makecell{\vspace{5mm} \\ \textbf{SB} \\ solvers}} & ENOT \cite{gushchin2023entropic} & \shortstack{\vspace*{0.1mm}~\\Solves max-min reformulation of SB \\ with $2$ NNs (potential and SDE drift).} & Gaussians & {\large\xmark} & \makecell{CelebA Upscale (\textbf{3x64x64}), \\  Colored MNIST 2$\rightarrow$3 (3x32x32)} \\ 
\cline{2-6} 
  & MLE-SB \cite{vargas2021solving}  &  \shortstack{
  \vspace*{0.1mm}~\\Alternate solving of two Half Bridge (HB) problems. \\
  HB is solved via drift estimation with GP \cite{williams2006gaussian}.}
  & {\large\xmark} & {\large\xmark} & \makecell{Single Cell data ($D=5$), \\ Motion Capture ($D=4$)} \\ 
\cline{2-6} 
 & DiffSB \cite{de2021diffusion} & \shortstack{\vspace*{0.1mm}~\\Iterative Mean-Matching Proportional Fitting \\ 2 NNs for forward and backward SDE drifts} & Gaussians & \makecell{CelebA (32x32)} & \makecell{MNIST$\rightarrow$EMNIST (32x32)} \\ 
 \cline{2-6} 
 & FB-SDE \cite{chen2022likelihood} &  \shortstack{\vspace*{0.1mm}~\\Likelihood training of SB \\ 2 NNs for the $\nabla \log$ of Schrödinger potentials} & {\large\xmark} & \makecell{MNIST (32x32), \\ CelebA (3x32x32), \\ CIFAR-10 (3x32x32)} & {\large\xmark} \\ 
 \hline
\end{tabular}
\captionsetup{justification=centering}
 \caption{Table of existing continuous (neural) solvers for EOT/SB.}
 \label{table-methods}
\vspace{-2mm}
\end{table*}

\textbf{Approaches to evaluate solvers.} As seen from Table \ref{table-methods}, each paper usually tests its solver on a restricted set of examples which rarely intersects with those from the other papers. In particular, some papers consider \textit{data}$\rightarrow$\textit{data} tasks, while the others focus on \textit{noise}$\rightarrow$\textit{data}. Due to this, there is no clear understanding of the superiority of one solver over the other. Importantly, in many cases, the \textit{quatitative} evaluation is done exclusively via the metrics of the downstream task. For example, \cite{de2021diffusion,chen2022likelihood,gushchin2023entropic,daniels2021score} consider image generation or translation tasks and test the quality of generated images via FID \cite{heusel2017gans}. That is, they compare generated \textit{marginal} distribution $\widehat{\pi}_{1}$ with target $\mathbb{P}_{1}$. This allows to access the generative performance of solvers but gives \textbf{no hint whether they actually learn the true EOT/SB solution}. Works \cite{daniels2021score,gushchin2023entropic,mokrov2023energy,de2021diffusion} do a step toward opening the veil of secrecy and test their solvers in the Gaussian case. Unfortunately, it is rather trivial and may not be representative.
\vspace{-3mm}
\section{Constructing Benchmark Pairs for OT and SB: Theory}
\label{sec:benchmark}
\vspace{-3mm}
In this section, we present our theoretical results allowing us to construct pairs of distributions with the EOT/SB solution known by the construction. We \underline{provide proofs} in Appendix~\ref{sec:proofs}.
\vspace{-3mm}
\subsection{Generic Optimal Transport Benchmark Idea}\label{sec:general-theorem}
\vspace{-3mm}
For a given distribution $\sP_0 \in \mathcal{P}(\mathcal{X})$, we want to construct a distribution $\sP_1 \in \mathcal{P}_{p}(\mathcal{Y})$ such that some OT plan ${\pi^* \in \Pi(\sP_0, \sP_1)}$ for a given weak OT cost function $C$ between them is known by the construction. That is, ${\pi^*_{0}=\sP_0}$, ${\pi^*_{1}=\sP_1}$ and $\pi^*$ minimizes \eqref{weak-ot}. In this case,  $(\mathbb{P}_0,\mathbb{P}_{1})$ may be used as a \textbf{benchmark pair} with a known OT solution. Our following main theorem provides a way to do so.

\begin{theorem}[Optimal transport benchmark constructor] \label{thm:general-benchamrk-theorem}
Let $\sP_0 \in \mathcal{P}(\mathcal{X})$ be a given distribution, $f^*\in \mathcal{C}_{p}(\mathcal{Y})$ be a given function and ${C: \mathcal{X} \times \mathcal{P}_{p}(\mathcal{Y}) \rightarrow \sR}$ be a given jointly lower semi-continuous, convex
in the second argument and lower bounded weak cost. Let $\pi^{*}\in\mathcal{P}(\mathcal{X}\times\mathcal{Y})$ be a distribution for which $\pi^{*}_{0}=\sP_0$ and for all ${x \in \mathcal{X}}$ it holds that\vspace{-2mm}
\begin{equation}
    \pi^{*}(\cdot|x) \in \arginf_{\mu \in \mathcal{P}_{p}(\mathcal{Y})} \{C(x, \mu) - \int_{\mathcal{Y}} f^{*}(y) d\mu(y) \}.
    \label{constructor-argmin}
\end{equation}

\vspace{-4mm}Let $\sP_1 \defeq \pi^*_{1}$ be the second marginal of $\pi^{*}$ and assume that $\sP_1\in\mathcal{P}_{p}(\mathcal{Y})$. Then 
$\pi^*$ is an \textbf{OT plan} between $\sP_0$ and $\sP_1$ (it minimizes \eqref{weak-ot}) and $f^*$ is an \textbf{optimal dual potential} (it maximizes \eqref{eq:dual-form-ot}).
\end{theorem}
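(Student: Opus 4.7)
The plan is to prove this by combining weak duality (which gives the inequality in one direction) with a direct computation showing that the candidate primal object $\pi^{*}$ and the candidate dual object $f^{*}$ achieve the same value. Since the paper has already recorded strong duality \eqref{eq:dual-form-ot} under the stated hypotheses on $C$, it suffices to show
\[
\int_{\mathcal{X}} C\bigl(x,\pi^{*}(\cdot|x)\bigr)\,d\mathbb{P}_{0}(x) \;=\; \int_{\mathcal{X}} (f^{*})^{C}(x)\,d\mathbb{P}_{0}(x) + \int_{\mathcal{Y}} f^{*}(y)\,d\mathbb{P}_{1}(y),
\]
because then the left-hand side (being the primal cost of a feasible plan) is $\geq \mathrm{WOT}_{C}(\mathbb{P}_{0},\mathbb{P}_{1})$ while the right-hand side (being the dual value at a specific $f^{*}$) is $\leq \mathrm{WOT}_{C}(\mathbb{P}_{0},\mathbb{P}_{1})$, and equality forces both to be optimal.

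First I would note that $\pi^{*}\in\Pi(\mathbb{P}_{0},\mathbb{P}_{1})$ by construction (its first marginal is $\mathbb{P}_{0}$ by assumption, and $\mathbb{P}_{1}$ is defined as its second marginal). The key observation is the defining property \eqref{constructor-argmin}: for every $x\in\mathcal{X}$, the conditional $\pi^{*}(\cdot|x)$ attains the infimum in the definition of the weak $C$-transform \eqref{C-transform} evaluated at $f^{*}$. Hence, pointwise in $x$,
\[
(f^{*})^{C}(x) \;=\; C\bigl(x,\pi^{*}(\cdot|x)\bigr) - \int_{\mathcal{Y}} f^{*}(y)\,d\pi^{*}(y|x).
\]
Integrating this identity against $\mathbb{P}_{0}$ and applying the tower property (disintegration) $\int_{\mathcal{X}}\!\int_{\mathcal{Y}} f^{*}(y)\,d\pi^{*}(y|x)\,d\mathbb{P}_{0}(x)=\int_{\mathcal{Y}} f^{*}(y)\,d\mathbb{P}_{1}(y)$ yields exactly the desired equality.

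To close the argument, I would invoke weak duality: for any $\pi\in\Pi(\mathbb{P}_{0},\mathbb{P}_{1})$ and any admissible $f$, the definition \eqref{C-transform} gives $(f)^{C}(x)\leq C(x,\pi(\cdot|x))-\int f\,d\pi(\cdot|x)$, so integrating and rearranging yields the $\geq$ side of \eqref{eq:dual-form-ot}; the $\leq$ side is the strong duality theorem cited from \cite{backhoff2022applications} and holds under the stated convexity, lower-semicontinuity and lower-boundedness of $C$ together with $\mathbb{P}_{1}\in\mathcal{P}_{p}(\mathcal{Y})$. Combined with the explicit equality above, this pins down $\pi^{*}$ as a minimizer of \eqref{weak-ot} and $f^{*}$ as a maximizer of \eqref{eq:dual-form-ot}.

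The main obstacle is verifying that all the integrals make sense and that $f^{*}\in\mathcal{C}_{p}(\mathcal{Y})$ is an admissible dual variable in the setting where the strong duality of \cite{backhoff2022applications} applies. In particular, one needs $\pi^{*}(\cdot|x)\in\mathcal{P}_{p}(\mathcal{Y})$ for $\mathbb{P}_{0}$-a.e.\ $x$ (so that the expression for $(f^{*})^{C}(x)$ really picks out the infimum), together with the measurability in $x$ of $x\mapsto\pi^{*}(\cdot|x)$; the former follows because $\mathbb{P}_{1}=\pi^{*}_{1}\in\mathcal{P}_{p}(\mathcal{Y})$ by assumption (so $\pi^{*}(\cdot|x)\in\mathcal{P}_{p}(\mathcal{Y})$ a.s.), and the latter is built into the fact that $\pi^{*}$ is a well-defined joint probability measure on $\mathcal{X}\times\mathcal{Y}$ admitting disintegration. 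Everything else is bookkeeping.
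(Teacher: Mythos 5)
Your proposal is correct and follows essentially the same route as the paper: both establish that the dual objective at $f^{*}$ equals the primal cost of $\pi^{*}$ via the defining argmin property \eqref{constructor-argmin} and disintegration, then sandwich this common value against $\text{WOT}_{C}(\mathbb{P}_0,\mathbb{P}_1)$ using the duality \eqref{eq:dual-form-ot}. The only cosmetic difference is that the paper writes this as a single chain of (in)equalities starting from the dual supremum, whereas you split it into the equality computation and the sandwiching step.
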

Thanks to our theorem, given a pair $(\mathbb{P}_0,f^{*})\in\mathcal{P}(\mathcal{X})\times\mathcal{C}_{p}(\mathcal{Y})$ of a distribution and a potential, one may produce a distribution $\mathbb{P}_{1}$ for which an OT plan between them is known by the construction. This may be done by picking $\pi^{*}\in \Pi(\mathbb{P})$ whose conditionals $\pi^{*}(\cdot|x)$ minimize \eqref{constructor-argmin}.

While our theorem works for rather general costs $C$, it may be non-trivial to compute a minimizer $\pi^*(\cdot|x)$ in the weak $C$-transform \eqref{C-transform}, e.g., to sample from it or to estimate its density. Also, we note that our theorem states that $\pi^*$ is optimal but does not claim that it is the unique OT plan. These aspects may complicate the usage of the theorem for constructing the benchmark pairs $(\mathbb{P}_0,\mathbb{P}_1)$ for general costs $C$. Fortunately, both these issues vanish when we consider EOT, see below.
\vspace{-3mm}
\subsection{Entropic Optimal Transport Benchmark Idea}\label{sec:eot-benchmark}
\vspace{-3mm}
For $C=C_{c,\epsilon}$ \eqref{weak-entropic-ot-cost} with $\epsilon>0$, the characterization of minimizers $\pi^{*}(\cdot|x)$ in \eqref{constructor-argmin} is almost explicit. 

\begin{theorem}[Entropic optimal transport benchmark constructor] 
\label{thm:entropic-benchamrk-theorem}Let $\sP_0 \in \mathcal{P}(\mathcal{X})$ be a given distribution
and $f^*\in\mathcal{C}_{p}(\mathcal{Y})$ be a given potential. Assume that $c:\mathcal{X}\times\mathcal{Y}\rightarrow\mathbb{R}$ is lower bounded and $(x,\mu)\mapsto \int_{\mathcal{Y}}c(x,y)d\mu(y)$ is lower semi-continuous in $\mathcal{X}\times\mathcal{P}_{p}(\mathcal{Y})$. Furthermore, assume that there exists $M\in\mathbb{R}_{+}$ such that for all $x\in\mathcal{X}$ it holds that $M_{x}\defeq\int_{\mathcal{Y}}\exp\big(-\frac{c(x,y)}{\epsilon}\big)dy\leq M$. Assume that  for all $x\in\mathcal{X}$ value $Z_{x}\defeq \int_{\mathcal{Y}}\exp \big(\frac{f^*(y) - c(x,y)}{\epsilon}\big)dy$ is finite. Consider the joint distribution $\pi^{*}\in\mathcal{P}(\mathcal{X}\times\mathcal{Y})$ whose first marginal distribution satisfies $\pi^{*}_{0}=\sP_{0}$ and for all $x\in\mathcal{X}$ it holds that
\begin{eqnarray}\label{eq:eot-C-transform-solution}
\frac{d\pi^{*}(y|x)}{dy} = \frac{1}{Z_{x}} \exp \bigg(\frac{f^*(y) - c(x,y)}{\epsilon}\bigg)
\end{eqnarray}
and $\pi^{*}(\cdot|x)\!\in\!\mathcal{P}_{p}(\mathcal{Y})$. Then if $\sP_{1}\!\defeq\!\pi_{1}^{*}$ belongs to $\mathcal{P}_{p}(\mathcal{Y})$, the distribution $\pi^{*}$ is an \textbf{EOT plan} for $\mathbb{P}_{0},\mathbb{P}_{1}$ and cost $C_{c,\epsilon}$. Moreover, if $\int_{\mathcal{X}}C_{c,\epsilon}\big(x,\pi^{*}(\cdot|x)\big)d\mathbb{P}_{0}(x)\!<\!\infty$, then $\pi^{*}$ is the \textbf{unique} EOT plan.
\end{theorem}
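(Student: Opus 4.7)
The plan is to derive Theorem \ref{thm:entropic-benchamrk-theorem} as a specialization of Theorem \ref{thm:general-benchamrk-theorem} applied to the weak entropic cost $C_{c,\epsilon}$ from \eqref{weak-entropic-ot-cost}. Two things must then be verified: (i) $C_{c,\epsilon}$ satisfies the hypotheses of the generic constructor (convexity in $\mu$, joint lower semi-continuity, lower boundedness), and (ii) the conditional $\pi^{*}(\cdot|x)$ defined in \eqref{eq:eot-C-transform-solution} is indeed a minimizer of the inner problem \eqref{constructor-argmin} for the given $f^{*}$. After that, uniqueness requires a separate argument that does not follow directly from Theorem \ref{thm:general-benchamrk-theorem}.

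For (i), the term $\mu\mapsto\int c(x,y)d\mu(y)$ is linear in $\mu$ and jointly lower semi-continuous by hypothesis, while $\mu\mapsto -H(\mu)=\int\log(d\mu/dy)d\mu(y)$ is convex and lower semi-continuous on $\mathcal{P}_{p}(\mathcal{Y})$ (it equals $\text{KL}(\mu\Vert\text{Leb})$). For lower boundedness I will exploit the assumption $M_{x}\leq M$. Define the probability density $\rho_{x}(y)\defeq M_{x}^{-1}\exp(-c(x,y)/\epsilon)$, which is well defined because $M_{x}<\infty$. A direct calculation gives the Gibbs identity
\begin{equation*}
C_{c,\epsilon}(x,\mu) \;=\; \epsilon\,\KL{\mu}{\rho_{x}} \;-\; \epsilon\log M_{x} \;\geq\; -\epsilon\log M,
\end{equation*}
using $\text{KL}\geq 0$. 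This simultaneously provides lower boundedness and the crucial rewriting I will exploit in step (ii).

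For (ii), observe that the objective in \eqref{constructor-argmin} is
\begin{equation*}
C_{c,\epsilon}(x,\mu)-\int_{\mathcal{Y}}f^{*}(y)d\mu(y) \;=\; \int\big(c(x,y)-f^{*}(y)\big)d\mu(y) + \epsilon\int\log\frac{d\mu}{dy}d\mu.
\end{equation*}
Adding and subtracting $\epsilon\log Z_{x}$ regroups this as $\epsilon\,\KL{\mu}{\pi^{*}(\cdot|x)} - \epsilon\log Z_{x}$, where $\pi^{*}(\cdot|x)$ is precisely the Gibbs measure from \eqref{eq:eot-C-transform-solution}. Since KL is non-negative and vanishes iff $\mu=\pi^{*}(\cdot|x)$, the unique minimizer of the inner problem in $\mathcal{P}_{p}(\mathcal{Y})$ is $\pi^{*}(\cdot|x)$ (the assumption $\pi^{*}(\cdot|x)\in\mathcal{P}_{p}(\mathcal{Y})$ guarantees admissibility). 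Theorem \ref{thm:general-benchamrk-theorem} then immediately yields that $\pi^{*}$ is an OT plan for the weak cost $C_{c,\epsilon}$ between $\sP_{0}$ and $\sP_{1}\defeq\pi^{*}_{1}$, i.e., an EOT plan in the sense of \eqref{EOT_primal}.

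For the uniqueness claim, I will invoke the standard strict convexity of the entropic objective. Up to the additive constants discussed after \eqref{EOT_primal}, minimizing the EOT functional over $\Pi(\sP_{0},\sP_{1})$ is equivalent to minimizing $\pi\mapsto\KL{\pi}{\sP_{0}\!\times\!\sP_{1}}$, which is strictly convex on the convex set $\Pi(\sP_{0},\sP_{1})$. The extra assumption $\int_{\mathcal{X}}C_{c,\epsilon}(x,\pi^{*}(\cdot|x))d\sP_{0}(x)<\infty$ ensures the minimum value is finite, so strict convexity yields uniqueness of the minimizer. The main technical obstacle will be the bookkeeping around the equivalence between formulations \eqref{EOT_primal_1}--\eqref{EOT_primal} (making sure the relevant entropies are finite so the additive constants are well defined) and, more subtly, checking that $\pi^{*}(\cdot|x)\in\mathcal{P}_{p}(\mathcal{Y})$ and the joint lower semi-continuity of $C_{c,\epsilon}$ carry over from the hypotheses on $c$; everything else is a clean application of the Gibbs variational principle.
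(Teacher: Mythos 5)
Your overall strategy — specialize Theorem~\ref{thm:general-benchamrk-theorem} by showing, via the Gibbs variational principle, that the conditional \eqref{eq:eot-C-transform-solution} is the unique minimizer of the weak $C$-transform, and separately verify that $C_{c,\epsilon}$ is lower bounded, jointly lower semi-continuous, and convex in $\mu$ — is exactly the route the paper takes. The Gibbs identity $C_{c,\epsilon}(x,\mu)=\epsilon\,\KL{\mu}{\rho_x}-\epsilon\log M_x$ appears verbatim (with $\rho_x$ written $\nu_x$) and serves the same dual purpose of lower-boundedness and the inner minimization. So parts (i) and (ii) match the paper essentially line for line.

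The one place you diverge is uniqueness, and that step has a gap. You argue via the equivalence of the functional with $\pi\mapsto\KL{\pi}{\sP_0\times\sP_1}$ "up to the additive constants discussed after \eqref{EOT_primal}". But that equivalence was stated in the paper under the extra assumptions that $\sP_0,\sP_1\in\mathcal{P}_{ac}$ with finite entropy; Theorem~\ref{thm:entropic-benchamrk-theorem} only assumes $\sP_0\in\mathcal{P}(\mathcal{X})$, so $\sP_0$ need not even be absolutely continuous and $H(\sP_1)$ need not be finite. The identity you need to make the two objectives differ by a constant is $\int_{\mathcal{X}}\KL{\pi(\cdot|x)}{\sP_1}\,d\sP_0(x)=-\int_{\mathcal{X}}H(\pi(\cdot|x))\,d\sP_0(x)+H(\sP_1)$, which requires $H(\sP_1)<\infty$ — something you have not verified (and which the theorem's hypotheses do not grant you). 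There is also the usual subtlety that "strict convexity implies uniqueness" only holds after restricting to the effective domain where the functional is finite. The paper sidesteps both issues by arguing directly on the functional $\pi\mapsto-\int_{\mathcal{X}}H(\pi(\cdot|x))\,d\sP_0(x)$: it introduces the set $U\subset\Pi(\sP_0,\sP_1)$ of plans on which this integral is finite, notes $U\ni\pi^*$ is convex, and uses strict convexity of $\nu\mapsto-H(\nu)$ on its finite domain, pointwise in $x$, to conclude strict convexity on $U$. Your uniqueness argument would be repaired by replacing the $\KL{\pi}{\sP_0\times\sP_1}$ reformulation with this direct argument on $-\int H(\pi(\cdot|x))\,d\sP_0(x)$ (which, like yours, uses strict convexity of negative entropy, but without needing $H(\sP_0)$ or $H(\sP_1)$ to exist).
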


Our result above requires some technical assumptions on $c$ and $f^{*}$ but a reader should not worry as they are easy to satisfy in popular cases such as the quadratic cost $c(x,y)=\frac{1}{2}\|x-y\|^{2}$ (\wasyparagraph\ref{sec:lse-potentials}). The important thing is that our result allows \textbf{sampling} $y\sim\pi^*(\cdot|x)$ from the conditional EOT plan by using MCMC methods \cite[\wasyparagraph 11.2]{bishop2006pattern} since \eqref{eq:eot-C-transform-solution} provides the \textbf{unnormalized density} of $\pi^*(y|x)$. Such sampling may be time-consuming, which is why we provide a clever approach to avoid MCMC below.
\vspace{-3mm}
\subsection{Fast Sampling With LogSumExp Quadratic Potentials.}
\vspace{-3mm}
\label{sec:lse-potentials}
In what follows, we propose a way to overcome the challenging sampling problem by considering the case $c(x, y) = \frac{||x-y||^2}{2}$ and the special family of functions $f^*$. For brevity, for a matrix $A\in\mathbb{R}^{D\times D}$ and $b\in\mathbb{R}^{D}$, we introduce $\mathcal{Q}(y|b, A) \! \defeq \! \exp\big[-\frac{1}{2}(y-b)^T A(y-b)\big].$
Henceforth, we choose the potential $f^*$ to be a weighted log-sum-exp (LSE) of $N$ quadratic functions: \vspace{-1mm}
\begin{eqnarray}\label{eq:general-potential-form}
    f^*(y) \! \defeq \! \epsilon \log \sum_{n=1}^N w_n \mathcal{Q}(y|b_n, \epsilon^{-1}A_n)
\end{eqnarray}
Here $w_n \geq 0$ and we put $A_{n}$ to be a symmetric matrix with eigenvalues in range $(-1 , +\infty)$. We say that such potentials $f^{*}$ are \textbf{appropriate}. One may also check that $f^{*}\in\mathcal{C}_2(\mathcal{Y})$ as it is just the LSE smoothing of quadratic functions.
Importantly, for this potential $f$ and the quadratic cost, $\pi^*(\cdot|x)$ is a Gaussian mixture, from which one can efficiently sample \textbf{without using MCMC methods}.
\vspace{-1mm}
\begin{proposition}[Entropic OT solution for LSE potentials]\label{prop:entropic-ot-solution-log-sum-exp}
Let $f^{*}$ be a given appropriate LSE potential \eqref{eq:general-potential-form} and let $\sP_0\!\in\!\mathcal{P}_{2}(\mathcal{X})\!\subset\!\mathcal{P}(\mathcal{X})$. Consider the plan $d\pi^*(x, y)=d\pi^*(y|x)d\sP_0(x)$, where 
\vspace{-2mm}
\begin{gather}
    \frac{d\pi^*(y|x)}{dy} = \sum_{n=1}^N \gamma_n \hspace{0.5mm} \mathcal{N}(y| \mu_n(x), \Sigma_n) \text{ with }
    \Sigma_{n} \defeq \epsilon(A_n + I)^{-1}  \text{, }
    \mu_{n}(x) \defeq (A_n + I)^{-1}(A_n b_n + x), \nonumber \\
    \gamma_{n} \defeq \widetilde{w}_n/\sum_{n=1}^N \widetilde{w}_n, \qquad \widetilde{w}_n \defeq w_n (2\pi)^{\frac{D}{2}}\sqrt{\det(\Sigma_n)} 
    \mathcal{Q}(x|b_n, \frac{1}{\epsilon}I - \frac{1}{\epsilon^2}\Sigma_n).
    \nonumber
\end{gather}
Then it holds that $\sP_1 \defeq \pi^*_{1}$ belongs to $\mathcal{P}_{2}(\mathcal{Y})$ and the joint distribution $\pi^{*}$
is the \textbf{unique EOT plan} between $\sP_0$ and $\mathbb{P}_{1}$ for cost $C_{c,\epsilon}$ with $c(x,y)=\frac{1}{2}\|x-y\|^{2}$.
\end{proposition}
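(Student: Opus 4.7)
The plan is to derive this proposition as a direct consequence of Theorem~\ref{thm:entropic-benchamrk-theorem} applied to the quadratic cost $c(x,y)=\tfrac{1}{2}\|x-y\|^{2}$ and the appropriate LSE potential $f^{*}$ from \eqref{eq:general-potential-form}. The main work is to show that the conditional density prescribed by \eqref{eq:eot-C-transform-solution} coincides, after an explicit completion of squares in each summand of the LSE, with the Gaussian mixture density stated.

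First I would verify the technical hypotheses of Theorem~\ref{thm:entropic-benchamrk-theorem}. The quadratic cost is non-negative (hence lower bounded), and $(x,\mu)\mapsto\int_{\mathcal{Y}}\tfrac{1}{2}\|x-y\|^{2}d\mu(y)$ is lower semi-continuous on $\mathcal{X}\times\mathcal{P}_{2}(\mathcal{Y})$. The integral $M_{x}=\int_{\mathcal{Y}}\exp\bigl(-\|x-y\|^{2}/(2\epsilon)\bigr)dy=(2\pi\epsilon)^{D/2}$ is constant in $x$, so trivially uniformly bounded. Finiteness of $Z_{x}$ follows because the spectral assumption on $A_{n}$ makes each $A_{n}+I$ strictly positive-definite, so every summand in $\exp\!\bigl((f^{*}-c(x,\cdot))/\epsilon\bigr)$ is a Gaussian-integrable exponent in $y$.

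The central algebraic step is then to rewrite
\begin{equation*}
\exp\!\Bigl(\tfrac{f^{*}(y)-c(x,y)}{\epsilon}\Bigr)=\sum_{n=1}^{N}w_{n}\exp\!\Bigl[-\tfrac{1}{2\epsilon}(y-b_{n})^{T}A_{n}(y-b_{n})-\tfrac{1}{2\epsilon}\|x-y\|^{2}\Bigr]
\end{equation*}
and complete the square in $y$ inside each summand. Collecting coefficients gives the precision matrix $\epsilon^{-1}(A_{n}+I)$, hence covariance $\Sigma_{n}=\epsilon(A_{n}+I)^{-1}$, and mean $\mu_{n}(x)=(A_{n}+I)^{-1}(A_{n}b_{n}+x)$. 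Using the commuting symmetric identity $(A_{n}+I)^{-1}A_{n}=I-(A_{n}+I)^{-1}=I-\epsilon^{-1}\Sigma_{n}$, the residual $y$-independent factor simplifies to a pure quadratic in $x-b_{n}$ with matrix $\epsilon^{-1}(A_{n}+I)^{-1}A_{n}=\epsilon^{-1}I-\epsilon^{-2}\Sigma_{n}$, i.e.\ to $\mathcal{Q}\bigl(x\mid b_{n},\epsilon^{-1}I-\epsilon^{-2}\Sigma_{n}\bigr)$. Multiplying by the Gaussian normalization $(2\pi)^{D/2}\sqrt{\det\Sigma_{n}}$ reproduces exactly $\widetilde{w}_{n}$, so after dividing through by $Z_{x}=\sum_{n}\widetilde{w}_{n}$ I recover the Gaussian mixture with weights $\gamma_{n}$ from the statement.

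Finally I would discharge the remaining requirements of Theorem~\ref{thm:entropic-benchamrk-theorem}. Each $\pi^{*}(\cdot|x)$ is a mixture of Gaussians with bounded covariances $\Sigma_{n}$ and means affine in $x$, hence lies in $\mathcal{P}_{2}(\mathcal{Y})$; Fubini combined with $\mathbb{P}_{0}\in\mathcal{P}_{2}(\mathcal{X})$ then gives $\mathbb{P}_{1}=\pi^{*}_{1}\in\mathcal{P}_{2}(\mathcal{Y})$. For uniqueness one must bound $\int_{\mathcal{X}}C_{c,\epsilon}\bigl(x,\pi^{*}(\cdot|x)\bigr)d\mathbb{P}_{0}(x)$: the transport piece $\int c\,d\pi^{*}(\cdot|x)$ is controlled by a quadratic in $\|x\|$, integrable against $\mathbb{P}_{0}$. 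The only mildly delicate point I expect is estimating the differential entropy $H(\pi^{*}(\cdot|x))$ of a Gaussian mixture (rather than a single Gaussian); this can be handled by two-sided bracketing against the entropy of any single component $\mathcal{N}(\mu_{n}(x),\Sigma_{n})$, or equivalently by noting that the mixture log-density is dominated pointwise by the log-density of a suitably enlarged Gaussian, yielding a polynomial-in-$\|x\|$ bound that integrates against $\mathbb{P}_{0}\in\mathcal{P}_{2}$.
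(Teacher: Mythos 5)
Your proposal is correct and follows the paper's overall structure: verify the hypotheses of Theorem~\ref{thm:entropic-benchamrk-theorem}, substitute the LSE potential into \eqref{eq:eot-C-transform-solution}, complete the square summand-by-summand to obtain the Gaussian mixture (including exactly the matrix identity $(A_n+I)^{-1}A_n = I-\epsilon^{-1}\Sigma_n$ used to produce the $\mathcal{Q}(x|b_n,\epsilon^{-1}I-\epsilon^{-2}\Sigma_n)$ factor), and deduce $\sP_1\in\mathcal{P}_2$ from the affine dependence of $\mu_n(x)$ on $x$. The one place where you take a genuinely different route from the paper is the finiteness check $\int_{\mathcal{X}}C_{c,\epsilon}\bigl(x,\pi^{*}(\cdot|x)\bigr)d\mathbb{P}_0(x)<\infty$ needed for uniqueness. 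You propose to estimate the two pieces of $C_{c,\epsilon}$ directly: the transport term via the second moment of the mixture, and the entropy term by bracketing $H(\pi^*(\cdot|x))$ between $\sum_n\gamma_n H\bigl(\mathcal{N}(\mu_n(x),\Sigma_n)\bigr)$ and the same plus $\log N$, which yields an $x$-independent bound since each component entropy depends only on $\Sigma_n$. This works and is more elementary and self-contained. The paper instead avoids mixture-entropy estimates altogether by invoking the already-established duality: it writes $\int C_{c,\epsilon}(x,\pi^*(\cdot|x))d\sP_0(x)=\int (f^*)^{C_{c,\epsilon}}(x)d\sP_0(x)+\int f^*(y)d\sP_1(y)=\int[-\epsilon\log Z_x]d\sP_0(x)+\int f^*(y)d\sP_1(y)$, then lower-bounds $Z_x$ via Jensen's inequality and uses $f^*\in\mathcal{C}_2(\mathcal{Y})$ together with $\sP_1\in\mathcal{P}_2(\mathcal{Y})$. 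The duality route is slicker given the machinery already in place; your route buys independence from the duality identity at the (small) cost of a two-sided mixture-entropy bound. Both are valid.

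One small wording caution: it is not true that the mixture entropy is two-sided sandwiched by the entropy of \emph{any single} component; the correct bracketing is between the weighted average of component entropies and that average plus the discrete entropy of the weights, $\sum_n\gamma_n H\bigl(\mathcal{N}(\mu_n,\Sigma_n)\bigr)\le H(\pi^*(\cdot|x))\le\sum_n\gamma_n H\bigl(\mathcal{N}(\mu_n,\Sigma_n)\bigr)+\log N$. This still gives the $x$-independent bound you need because the component entropies are $x$-independent, but stated as "bracketing against any single component" it would not hold.
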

\vspace{-2mm}
We emphasize that although each conditional distribution $\pi^{*}(\cdot|x)$ is a Gaussian mixture, in general, this \textbf{does not} mean that $\pi^{*}$ or $\mathbb{P}_{1}=\pi^{*}_{1}$ is a Gaussian mixture, even when $\mathbb{P}_{0}$ is Gaussian. This aspect does not matter for our construction, and we mention it only for the completeness of the exposition.
\vspace{-3mm}
\subsection{Schrödinger Bridge Benchmark Idea}
\label{sec:sb-benchmark}
\vspace{-3mm}
Since there is a link between EOT and SB, our approach allows us to immediately obtain a solution to the Schrödinger Bridge between $\sP_0$ and  $\sP_1$ (constructed with an LSE potential $f^{*}$).
\begin{corollary}[Solution for SB between $\sP_0$ and constructed $\sP_1$]\label{cor:sb-solution-general}
In the context of Theorem \ref{thm:entropic-benchamrk-theorem}, let $p=2$ and consider $c(x,y)=\frac{1}{2}\|x-y\|^{2}$. Assume that $\sP_0\in\mathcal{P}_{2,ac}(\mathcal{X})\subset\mathcal{P}(\mathcal{X})$ and both $\mathbb{P}_0$ and $\sP_{1}$ (constructed with a given $f^{*}$) have finite entropy. Then it holds that $\varphi^{*}(y)\defeq \exp\big(\frac{f^{*}(y)}{\epsilon}\big)$ is a Schrödinger potential providing the \textbf{optimal drift} $v^{*}$ for SB via formula \eqref{sb-drift}.
\end{corollary}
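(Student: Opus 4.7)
The plan is to reduce the corollary directly to the EOT$\leftrightarrow$SB correspondence boxed in the background, using Theorem~\ref{thm:entropic-benchamrk-theorem} to get a concrete formula for the conditional EOT plan, and then reading off the Schrödinger potential from the defining factorisation in \eqref{schrodinger-vs-plan}.

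First, I would invoke Theorem~\ref{thm:entropic-benchamrk-theorem} for $c(x,y)=\tfrac{1}{2}\|x-y\|^2$ and the given potential $f^{*}$. I need to check that the hypotheses of that theorem are met in this specialised setting: the quadratic cost is continuous and lower bounded by $0$; the integral $M_x=\int_{\mathcal{Y}}\exp(-\|x-y\|^2/(2\epsilon))dy=(2\pi\epsilon)^{D/2}$ is uniformly bounded in $x$; and the assumption that $\sP_1=\pi^{*}_{1}\in\mathcal{P}_2(\mathcal{Y})$ is part of the corollary's hypothesis (it is actually guaranteed in the LSE case by Proposition~\ref{prop:entropic-ot-solution-log-sum-exp}). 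The upshot is that the joint law $\pi^{*}$ with conditionals
\begin{equation*}
\frac{d\pi^{*}(y|x)}{dy} = \frac{1}{Z_x}\exp\!\Bigl(\frac{f^{*}(y)-\tfrac{1}{2}\|x-y\|^2}{\epsilon}\Bigr)
\end{equation*}
is an EOT plan for $(\sP_0,\sP_1)$ at regularisation $\epsilon$.

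Next, I would use the EOT--SB equivalence recalled in the boxed statement of the background: since $\sP_0\in\mathcal{P}_{2,ac}(\mathcal{X})$ and $\sP_1$ has finite entropy (and $\sP_1$ is automatically absolutely continuous because each conditional $\pi^{*}(\cdot|x)$ has a density, so the mixture $\sP_1$ does too), the SB problem \eqref{general-SB} is well-posed, and the joint marginal $\pi^{T^{*}}$ of its solution $T^{*}$ at $t=0,1$ equals the EOT plan $\pi^{*}$ above. Thus the Schrödinger potentials characterising $\pi^{*}$ also characterise $T^{*}$. Rewriting the conditional density as
\begin{equation*}
\frac{d\pi^{*}(y|x)}{dy} \;\propto\; \mathcal{N}(y\mid x,\epsilon I)\,\exp\!\Bigl(\frac{f^{*}(y)}{\epsilon}\Bigr),
\end{equation*}
and comparing with formula \eqref{schrodinger-vs-plan}, I can read off $\varphi^{*}(y)\propto \exp(f^{*}(y)/\epsilon)$ up to a multiplicative constant. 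Since this constant is independent of $y$, it drops out of $\nabla\log(\cdot)$ in \eqref{sb-drift}, so we may take $\varphi^{*}(y)\defeq \exp(f^{*}(y)/\epsilon)$ as a valid Schrödinger potential.

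Finally, plugging this $\varphi^{*}$ into \eqref{sb-drift} yields the optimal drift $v^{*}$, which is exactly the claim. The main subtlety, and probably the only non-routine point, is verifying that the hypotheses of the corollary indeed put us inside the regime where the EOT/SB link of the background applies (finite entropies, $\mathcal{P}_{2,ac}$ marginals, integrability of $\varphi^{*}$ against the Gaussian kernel so that \eqref{sb-drift} is well-defined). The first two are assumed outright; the integrability inside \eqref{sb-drift} follows from the finiteness of $Z_x$ which is exactly the assumption already verified when invoking Theorem~\ref{thm:entropic-benchamrk-theorem}, with $(1-t)\epsilon$ in place of $\epsilon$ handled identically. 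Everything else is bookkeeping.
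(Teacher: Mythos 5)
Your proposal is correct and follows essentially the same route as the paper's own proof: rewrite the conditional density from Theorem~\ref{thm:entropic-benchamrk-theorem} as $\frac{d\pi^{*}(y|x)}{dy}\propto \mathcal{N}(y|x,\epsilon I)\exp(f^{*}(y)/\epsilon)$, compare with \eqref{schrodinger-vs-plan} to read off $\varphi^{*}(y)=\exp(f^{*}(y)/\epsilon)$, and then invoke \eqref{sb-drift} (which the paper justifies via L\'eonard's Proposition~4.1). Your extra bookkeeping — checking $M_x=(2\pi\epsilon)^{D/2}$ for the quadratic cost and noting that integrability in \eqref{sb-drift} is dominated by $Z_x$ since $(1-t)\epsilon\le\epsilon$ — is more explicit than the paper but does not change the argument.
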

Although the drift is given in the closed form, its computation may be challenging, especially in high dimensions. Fortunately, as well as for EOT, for the quadratic cost $c(x, y) = \frac{||x-y||^2}{2}$ and our LSE \eqref{eq:general-potential-form} potentials $f^*$, we can derive the optimal drift explicitly. 

\begin{corollary}[SB solution for LSE potentials]
\label{cor:sb-closed-form-lse}
Let $f^{*}$ be a given appropriate LSE potential \eqref{eq:general-potential-form} and consider a distribution $\sP_0\in\mathcal{P}_{2,ac}(\mathcal{X})$ with finite entropy. Let $\mathbb{P}_{1}$ be the one constructed in Proposition \ref{prop:entropic-ot-solution-log-sum-exp}. Then it holds that $\sP_1$ has finite entropy, belongs to $\mathcal{P}_{2,ac}(\mathcal{Y})$ and\vspace{-2mm}
\begin{gather}
    v^*(x, t) = \nabla_x \log \sum_{{\color{black}n=1}}^N w_n \sqrt{\det(\Sigma_n^t)}
    \mathcal{Q}(x|b_n, {\color{black}\frac{1}{\epsilon(1 - t)}I - \frac{1}{\epsilon^2(1 - t)}\Sigma_n^t}) 
\end{gather}

\vspace{-4.5mm}is the optimal drift for the SB between $\sP_0$ and $\mathbb{P}_{1}$. Here $A_n^t \defeq (1-t)A_n$ and $\Sigma^t_n \defeq \epsilon(A_n^t + I)^{-1}$.
\end{corollary}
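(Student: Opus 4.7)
The plan is to invoke Corollary \ref{cor:sb-solution-general} and then evaluate its drift formula in closed form using the LSE structure of $f^{*}$. Before applying the corollary I must verify its hypotheses on the constructed $\sP_1$: absolute continuity, a finite second moment, and finite differential entropy. The first two are inherited from Proposition \ref{prop:entropic-ot-solution-log-sum-exp}, which exhibits $\pi^{*}(\cdot|x)$ explicitly as a Gaussian mixture, so $\sP_1$ admits the density $p_1(y) = \int p(y|x)\,d\sP_0(x)$ and inherits $p=2$ integrability from $\pi^{*}\in\mathcal{P}_2(\mathcal{X}\times\mathcal{Y})$. For finite entropy of $\sP_1$, I would combine a uniform-in-$x$ upper bound on the mixture density $p(y|x)$ (controlled by the smallest eigenvalue among the $\Sigma_n$) with the finite second moment to control $\int p_1|\log p_1|\,dy$.

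Once the hypotheses are in place, Corollary \ref{cor:sb-solution-general} supplies the Schr\"odinger potential $\varphi^{*}(y)=\exp(f^{*}(y)/\epsilon)=\sum_{n=1}^{N} w_n\,\mathcal{Q}(y|b_n,\epsilon^{-1}A_n)$. Plugging this into \eqref{sb-drift} and using linearity reduces the task to computing, for each $n$, the single integral
\[
I_n(x,t)=\int_{\mathbb{R}^D}\mathcal{N}\!\bigl(y\,\big|\,x,(1-t)\epsilon I_D\bigr)\,\mathcal{Q}\!\bigl(y\,\big|\,b_n,\epsilon^{-1}A_n\bigr)\,dy.
\]
I would compute $I_n(x,t)$ by completing the square in $y$. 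The combined $y$-precision is $P=\tfrac{1}{(1-t)\epsilon}I+\tfrac{1}{\epsilon}A_n=\tfrac{1}{(1-t)\epsilon}(I+A_n^{t})=\tfrac{1}{1-t}(\Sigma_n^{t})^{-1}$ using $A_n^{t}=(1-t)A_n$ and $\Sigma_n^{t}=\epsilon(A_n^{t}+I)^{-1}$. Integrating the resulting Gaussian in $y$ therefore contributes a factor proportional to $\sqrt{\det(\Sigma_n^{t})}$ and leaves an $x$-dependent exponential of the form $\mathcal{Q}(x|b_n,M_n^{t})$.

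The main obstacle is identifying the leftover precision matrix $M_n^{t}$ with the one in the statement, $\tfrac{1}{\epsilon(1-t)}I-\tfrac{1}{\epsilon^{2}(1-t)}\Sigma_n^{t}$. With $P_1=\tfrac{1}{(1-t)\epsilon}I$ scalar and $P_2=\tfrac{1}{\epsilon}A_n$, the standard completion-of-the-square formula gives $M_n^{t}=P_1 P^{-1} P_2$, which (using that $P_1$ commutes with everything) simplifies to $\tfrac{1}{\epsilon^{2}}\Sigma_n^{t}A_n$. The short algebraic identity $(A_n^{t}+I)^{-1}A_n^{t}=I-(A_n^{t}+I)^{-1}$ then rewrites $\Sigma_n^{t}A_n^{t}=\epsilon I-\Sigma_n^{t}$, and dividing by $(1-t)$ via $A_n=A_n^{t}/(1-t)$ produces exactly the claimed matrix. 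Summing over $n$ with weights $w_n$, all $x$- and $n$-independent prefactors (which collapse to $\epsilon^{-D/2}$) drop out under $\nabla_x\log$, and the outer factor of $\epsilon$ from \eqref{sb-drift} yields the stated closed-form drift $v^{*}(x,t)$.
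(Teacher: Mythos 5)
Your proposal is correct and follows essentially the same route as the paper: invoke Corollary~\ref{cor:sb-solution-general} after verifying absolute continuity, finite second moment, and finite entropy of $\sP_1$, then evaluate \eqref{sb-drift} by completing the square in $y$ with the LSE potential. Your identification of the residual precision as $P_1 P^{-1} P_2=\tfrac{1}{\epsilon^2}\Sigma_n^t A_n$ and the identity $(A_n^t+I)^{-1}A_n^t = I - (A_n^t+I)^{-1}$ is a somewhat cleaner packaging of the same algebra that the paper carries out by reusing the quadratic-form rearrangement \eqref{eq:quadratic-form-rearrangement} with $A_n^t$ in place of $A_n$; likewise, your entropy argument (uniform density bound from the fixed covariances $\Sigma_n$, plus the finite second moment) uses the same two ingredients that the paper combines via the KL-divergence against $\mathcal{N}(0,I)$ for the upper bound and the density bound for the lower bound.
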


\vspace{-3mm}
\section{Constructing Benchmark Pairs for OT and SB: Implementation}
\label{sec:benchmark-construction}
\vspace{-3mm}

Our benchmark is implemented using \texttt{PyTorch} framework and is publicly available at
\begin{center}
    \vspace{-1.5mm}\url{https://github.com/ngushchin/EntropicOTBenchmark}
\end{center}
\vspace{-1.5mm}It provides code to sample from our constructed continuous benchmark pairs $(\sP_0,\sP_1)$ for various $\epsilon$, see \wasyparagraph\ref{sec:constructed-pairs} for details of these \textbf{pairs}. In  \wasyparagraph\ref{sec:intended-usage}, we explain the \textbf{intended usage} of these pairs.
\vspace{-2.5mm}
\subsection{Constructed Benchmark Pairs}
\label{sec:constructed-pairs}
\vspace{-3mm}
We construct various pairs in dimensions $D$ up to $12288$ and $\epsilon\in\{0.1, 1, 10\}$. Our mixtures pairs simulate \textit{noise}$\rightarrow$\textit{data} setup and images pairs simulate \textit{data}$\rightarrow$\textit{data} case.

\textbf{Mixtures benchmark pairs.}
We consider EOT with ${\epsilon \in \{0.1, 1, 10\}}$ in space $\mathbb{R}^D$ with dimension ${D \in \{2, 16, 64, 128\}}$. We use a centered Gaussian as $\mathbb{P}_{0}$ and we use LSE function \eqref{eq:general-potential-form} with $N=5$ for constructing $\sP_1$ (Proposition \ref{prop:entropic-ot-solution-log-sum-exp}). In this case, the constructed distribution $\sP_1$ has $5$ modes (Fig. \ref{fig:map-input}, \ref{fig:map-gt}). Details of particular parameters ($A_{n},b_{n},w_{n},$ etc.) are given in Appendix \ref{sec:extra-mixtures-results}.

\textbf{Images benchmark pairs.}
We consider EOT with ${\epsilon \in \{0.1, 1, 10\}}$.
As distribution $\sP_0$, we use the approximation of the distribution of $64\times 64$ RGB images ($D=12288$) of CelebA faces dataset \cite{liu2015faceattributes}. Namely, we train a normalizing flow with Glow architecture \cite{kingma2018glow}. It is absolutely continuous by the construction and allows straightforward sampling from $\mathbb{P}_{0}$. For constructing distribution $\sP_1$, we also use LSE function $f^{*}$. We fix $N=100$ random samples from $\sP_0$ for ${\color{black} b_n}$ and choose all $A_n\equiv I$. Details of $w_n$ are given in Appendix~\ref{sec:extra-images-results}. For these parameters, samples from $\sP_1$ look like noised samples from $\sP_0$ which are shifted to one of $\mu_n$ (Fig. \ref{fig:celeba-benchmark-enot}).

By the construction of distribution $\mathbb{P}_{1}$, obtaining the conditional EOT plan $\pi^{*}(y|x)$ between $(\mathbb{P}_0,\mathbb{P}_{1})$ may be viewed as learning the \textit{noising} model. From the practical perspective, this looks less interesting than learning the \textit{de-noising} model $\pi^{*}(x|y)$. Due to this, working with images in \wasyparagraph \ref{sec:evaluation}, we always test EOT solvers in $\mathbb{P}_{1}\rightarrow\mathbb{P}_{0}$ direction, i.e., recovering $\pi^{*}(x|y)$ and generating clean samples $x$ form noised $y$. The reverse conditional OT plans $\pi^{*}(x|y)$ are not as tractable as $\pi^{*}(y|x)$. We overcome this issue with MCMC in the latent space of the normalizing flow (Appendix \ref{sec:extra-images-results}). 

\begin{table}[!t]
\hspace{-16mm}
\scriptsize
\begin{tabular}{|P{.05}|P{.13}|P{.09}|P{.11}|P{.11}|P{.10}|P{.11}|P{.13}|P{.12}|P{.12}|P{.12}| } 
\hline 
& & \multicolumn{4}{l|}{\centering \makecell{\textbf{EOT} solvers}} & \multicolumn{5}{l|}{\centering \makecell{\textbf{SB}  solvers}} \\
\hline
                                   \centering $\epsilon$ & \centering Metric &  $ \centering \lfloor\textbf{LSOT}\rceil$ & $ \hspace{1mm} \centering
                                    \lfloor\textbf{SCONES}\rceil$ & \centering $\lfloor\textbf{NOT}\rceil$ & $\lfloor\textbf{EgNOT}\rceil$ & $\lfloor\textbf{ENOT}\rceil$ & ${\lfloor\textbf{MLE-SB}\rceil}$ & $\lfloor\textbf{DiffSB}\rceil$ & $\lfloor\textbf{FB-SDE-A}\rceil$ & $\lfloor\textbf{FB-SDE-J}\rceil$ \\ \hline
\vspace{1mm}\multirow{2}{*}{\centering $0.1$} & \vspace{0.1mm} \centering $\text{B}\mathbb{W}_{2}^{2}$-UVP   & \multicolumn{2}{l|}{\multirow{2}{*}{\centering \makecell{\vspace{-2mm} \\ \xmark \\ \scriptsize Do not work for small $\epsilon$ \\ due to numerical instability \\ \cite[\wasyparagraph 5.1]{daniels2021score}.}}} &   \vspace{0mm}\makecell{\large{\color{ForestGreen}\Smiley}}  &    \vspace{0mm}\makecell{\large{\color{ForestGreen}\Smiley}}   &  \vspace{0mm}\makecell{\large{\color{orange}\Neutrey}}  &  \vspace{0mm}\makecell{\large{\color{orange}\Neutrey}}  &   \vspace{0mm}\makecell{\large{\color{red}\Sadey}}   &  \vspace{0mm}\makecell{\centering \large{\color{red}\Sadey}}    &  \vspace{0mm}\makecell{\centering \large{\color{ForestGreen}\Smiley}}     \\ \cline{2-2}\cline{5-7}\cline{8-11}
& \vspace{0.1mm} \centering $\text{c}\text{B}\mathbb{W}_{2}^{2}$-UVP & \multicolumn{2}{l|}{}                  &   \vspace{0mm}\makecell{\large{\color{ForestGreen}\Smiley}}  & \vspace{0mm}\makecell{\large{\color{orange}\Neutrey}}     &   \vspace{0mm}\makecell{\large{\color{orange}\Neutrey}}   &    \vspace{0mm}\makecell{\large{\color{ForestGreen}\Smiley}}       & \vspace{0mm}\makecell{\large{\color{orange}\Neutrey}}         & \vspace{0mm}\makecell{\large{\color{red}\Sadey}}     & \vspace{0mm}\makecell{\large{\color{red}\Sadey}}    \\ \cline{1-7}\cline{8-11}
\vspace{1mm}\multirow{2}{*}{\hspace{1mm}\centering $1$} & \vspace{0.1mm} \centering $\text{B}\mathbb{W}_{2}^{2}$-UVP    & \vspace{0.1mm}\makecell{\centering \large{\color{red}\Sadey}}  & \vspace{0mm}\hspace{3.5mm}\makecell{\centering \large{\color{red}\Sadey}}     & \vspace{0mm}\makecell{\large{\color{orange}\Neutrey}}  & \vspace{0mm}\makecell{\centering \large{\color{ForestGreen}\Smiley}}  & \vspace{0mm}\makecell{\centering \large{\color{orange}\Neutrey}}   &  \vspace{0mm}\makecell{\large{\color{orange}\Neutrey}}    & \vspace{0mm}\makecell{\centering \large{\color{orange}\Neutrey}}     & \vspace{0mm}\makecell{\centering \large{\color{orange}\Neutrey}}      & \vspace{0mm}\makecell{\centering \large{\color{ForestGreen}\Smiley}}    \\ \cline{2-7}\cline{8-11}
& \vspace{0.1mm} \centering $\text{c}\text{B}\mathbb{W}_{2}^{2}$-UVP&  \vspace{0mm}\makecell{\centering \large{\color{red}\Sadey}}  & \vspace{0mm}\hspace{3.5mm}\makecell{\centering \large{\color{red}\Sadey}}    &\vspace{0mm}\makecell{\centering \large{\color{orange}\Neutrey}}        & \vspace{0mm}\makecell{\centering \large{\color{red}\Sadey}} & \vspace{0mm}\makecell{\centering \large{\color{orange}\Neutrey}}    & \vspace{0mm}\makecell{\large{\color{ForestGreen}\Smiley}} &  \vspace{0mm}\makecell{\centering \large{\color{red}\Sadey}}     & \vspace{0mm}\makecell{\centering \large{\color{red}\Sadey}}        & \vspace{0mm}\makecell{\centering \large{\color{red}\Sadey}}   \\ \cline{1-7}\cline{8-11}
 \vspace{1mm}\multirow{2}{*}{\centering $10$}  & \vspace{0.1mm} \centering $\text{B}\mathbb{W}_{2}^{2}$-UVP & \vspace{0mm}\makecell{\centering \large{\color{red}\Sadey}}    &  \vspace{0mm}\hspace{3.5mm}\makecell{\centering\large{\color{red}\Sadey}}   & \vspace{0mm}\makecell{\centering\large{\color{orange}\Neutrey}}   &\vspace{0mm}\makecell{\centering\large{\color{ForestGreen}\Smiley}}     & \vspace{0mm}\makecell{\centering\large{\color{red}\Sadey}}    & \vspace{0mm}\makecell{\centering \large{\color{orange}\Neutrey}} &  \multicolumn{3}{l|}{\multirow{2}{*}{\hspace{2mm} \centering \makecell{\vspace{-1mm} \\ \xmark \\ \scriptsize Diverge with the default hyperparameters. \\ May require more hyperparameter tuning. }}}             \\ \cline{2-8}
& \vspace{0.1mm} \centering $\text{c}\text{B}\mathbb{W}_{2}^{2}$-UVP & \vspace{0mm}\makecell{\centering\large{\color{red}\Sadey}}    & \vspace{0mm}\hspace{3.5mm}\makecell{\centering \large{\color{red}\Sadey}}    & \vspace{0mm}\makecell{\centering\large{\color{red}\Sadey}}     & \vspace{0mm}\makecell{\centering \large{\color{red}\Sadey}}    &  \vspace{0mm}\makecell{\centering  \large{\color{red}\Sadey}}  &  \vspace{0mm}\makecell{\centering  \large{\color{red}\Sadey}}  & \multicolumn{3}{l|}{}                             \\ \hline
\end{tabular}
\vspace{1mm}
\caption{\centering The summary of EOT/SB solvers' quantitative performance in $\text{cB}\mathbb{W}_{2}^{2}\text{-UVP}$ and $\text{B}\mathbb{W}_{2}^{2}\text{-UVP}$ metrics on our mixtures pairs. Detailed evaluation and coloring principles are given in Appendix \ref{sec:extra-mixtures-results}.}
\label{table-metrics-results}
\vspace{-8mm}
\end{table}
\vspace{-3mm}
\subsection{Intended Usage of the Benchmark Pairs} 
\label{sec:intended-usage}
\vspace{-3mm}
The EOT/SB solvers (\wasyparagraph\ref{sec:methods}) provide an approximation of the conditional OT plan $\widehat{\pi}(\cdot|x)\approx \pi^{*}(\cdot|x)$ from which one can sample (given $x\sim\mathbb{P}_{0}$). In particular, SB solvers recover the approximation of the optimal drift $\widehat{v}\approx v^{*}$; it is anyway used to produce samples $y\sim\widehat{\pi}(\cdot|x)$ via solving SDE $dX_{t}=\widehat{v}(x,t)dt+\sqrt{\epsilon}dW_t$ starting from $X_0=x$ at time $t=0$. Therefore, \textbf{the main goal of our benchmark} is to provide a way to compare such approximations $\widehat{\pi},\widehat{v}$ with the ground truth. \textit{Prior to our work, this was not possible} due to the lack of non-trivial pairs $(\mathbb{P}_0,\sP_1)$ with known $\pi^{*}, v^{*}$. 

For each of the constructed pairs $(\sP_0,\sP_1)$, we provide the code to do 5 main things: \textbf{(a)} sample $x\sim\mathbb{P}_{0}$; \textbf{(b)} sample $y\sim\pi^{*}(\cdot|x)$ for any given $x$; \textbf{(c)} sample pairs $(x,y)\sim\pi^{*}$ from the EOT plan; \textbf{(d)} sample $y\sim\sP_1$;  \textbf{(e)} compute the optimal drift $v^{*}(x,t)$. Function \textbf{(c)} is just a combination of \textbf{(a)} and \textbf{(b)}. For images pairs we implement the extra functionality \textbf{(f)} to sample $x\sim\pi^{*}(\cdot|y)$ using MCMC. Sampling \textbf{(d)} is implemented via discarding $x$ (not returning it to a user) in $(x,y)$ in \textbf{(c)}.

When \textbf{\underline{training}} a neural EOT/SB solver, one should use random batches from \textbf{(a,d)}. Everything coming from \textbf{(b,c,e,f)} should be considered as \textbf{\underline{test}} information and used only for evaluation purposes. Also, for each of the benchmark pairs (mixtures and images), we provide a hold-out \underline{\textbf{test}} for evaluation.

\vspace{-3mm}
\section{Experiments: Testing EOT and SB Solvers on Our Benchmark Pairs}\label{sec:evaluation}
\vspace{-3mm}
 
\begin{figure}[!t]\vspace{-5mm}
\begin{subfigure}[b]{1\linewidth}
\hspace{55.5mm}
\includegraphics[width=0.4\linewidth]{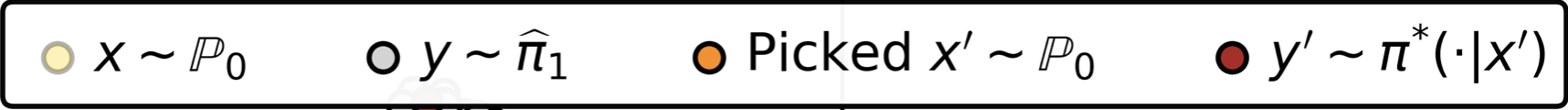}
\label{fig:map-legend}
\end{subfigure}

\hspace{-4mm}
\begin{subfigure}[b]{0.21\linewidth}
\centering
\includegraphics[width=\linewidth]{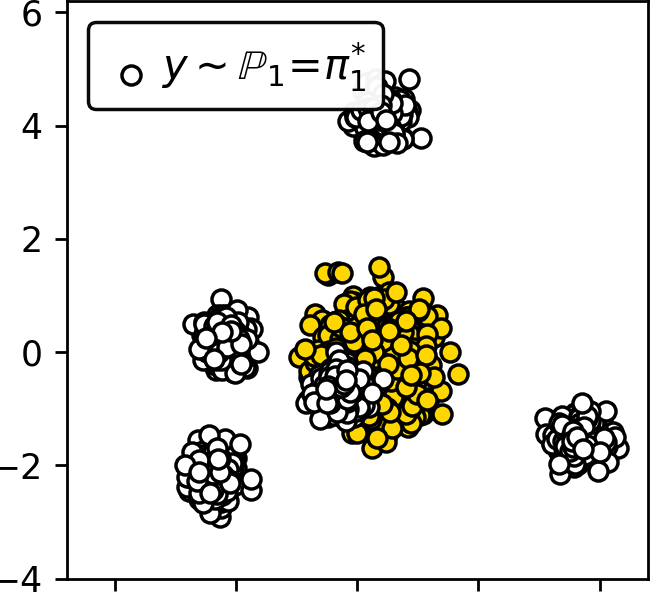}
\caption{\centering \small \textbf{Input} and \textbf{target}.}

\vspace{30mm}
\label{fig:map-input}
\end{subfigure}
\hspace{1mm}\textbf{\vrule}\hspace{1mm}
\begin{subfigure}[b]{0.19\linewidth}
\centering
\includegraphics[width=\linewidth]{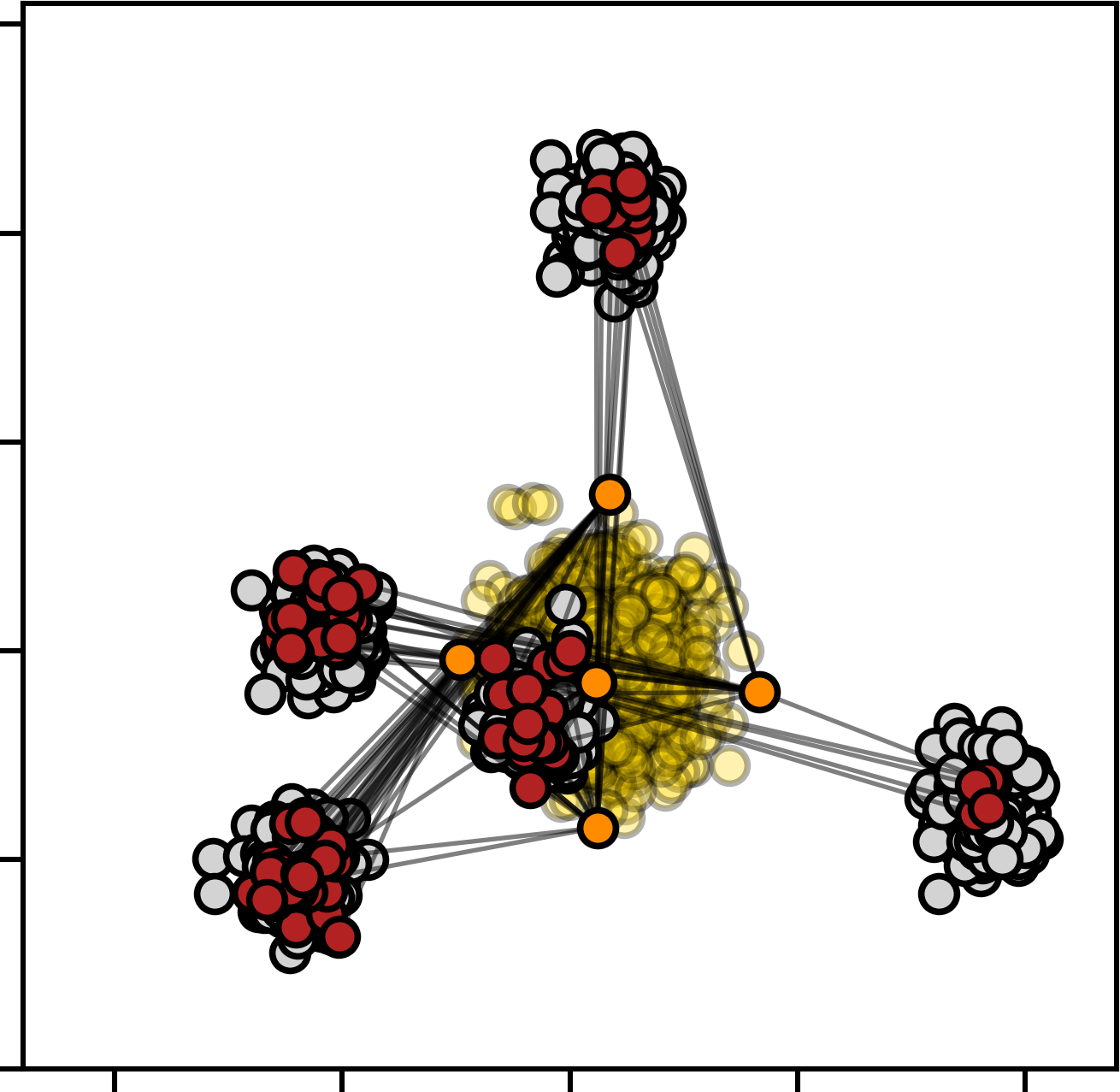}
\caption{\centering \small $\lfloor \textbf{{MLE-SB}}\rceil$.}

\vspace{30mm}
\label{fig:map-lsot}
\end{subfigure}
\begin{subfigure}[b]{0.19\linewidth}
\centering
\includegraphics[width=\linewidth]{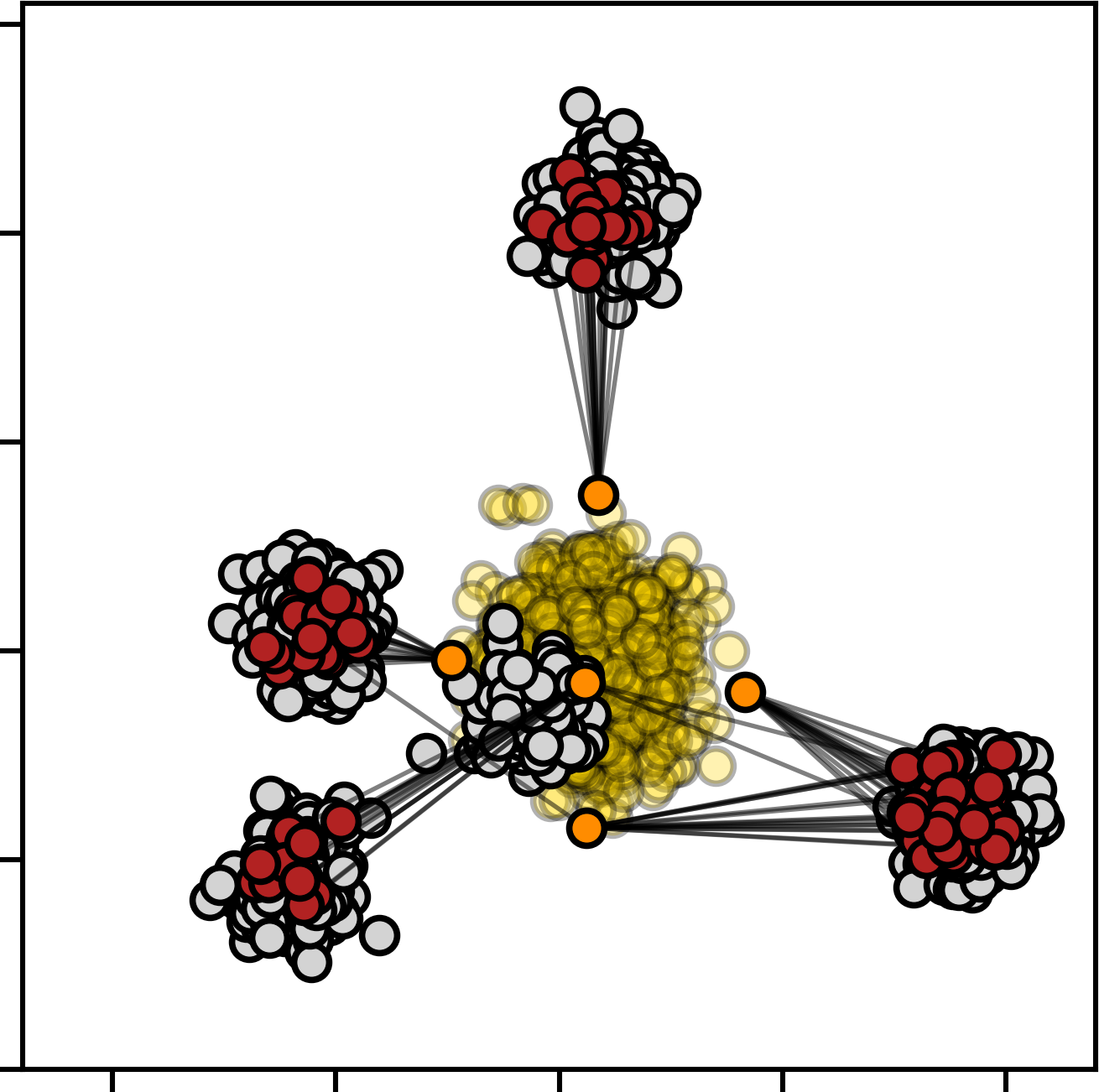}
\caption{\centering \small$\lfloor \textbf{SCONES}\rceil$.}

\vspace{30mm}
\label{fig:map-scones}
\end{subfigure}
\begin{subfigure}[b]{0.19\linewidth}
\centering
\includegraphics[width=\linewidth]{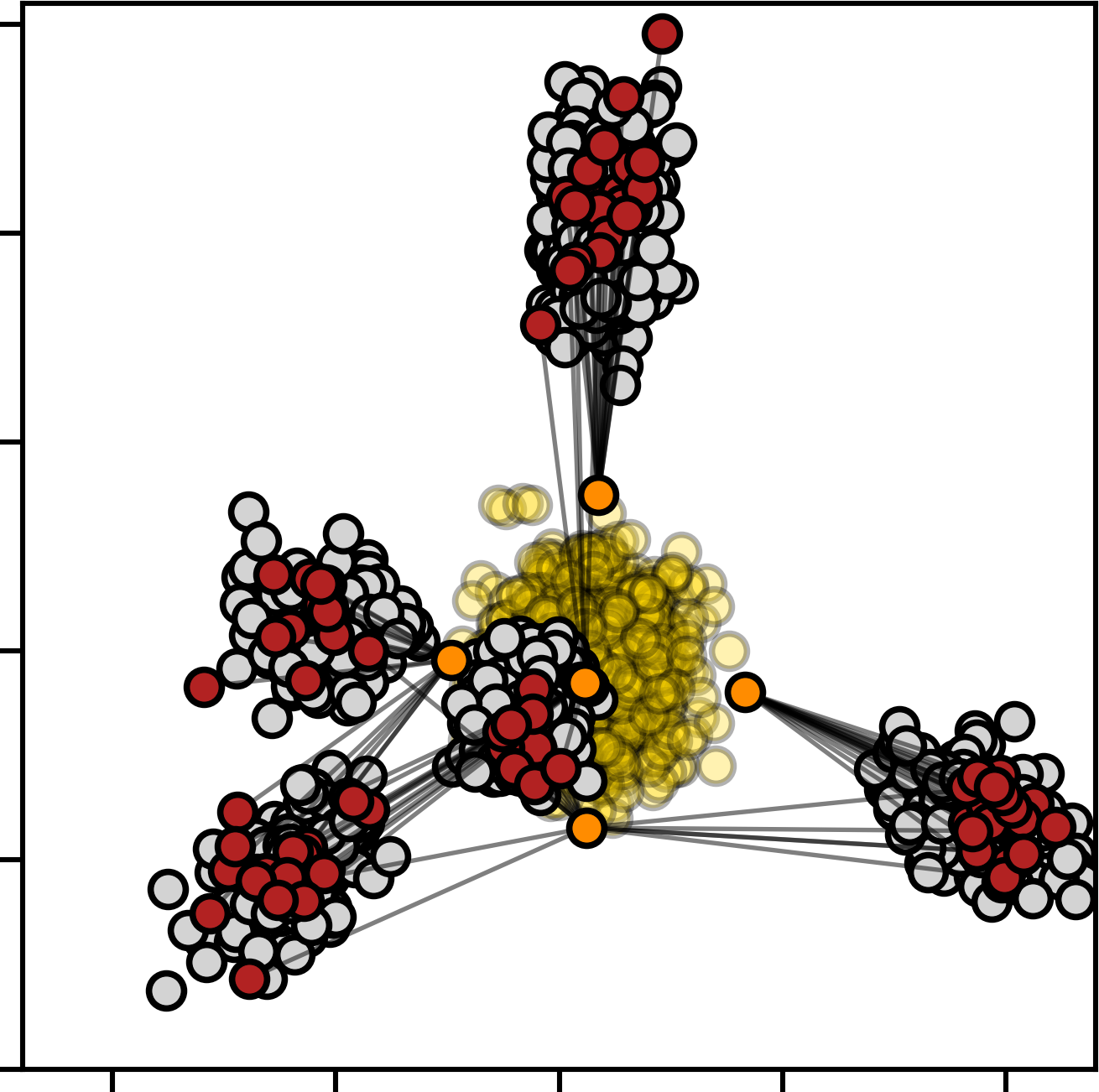}
\caption{\centering \small$\lfloor \textbf{NOT}\rceil$.}

\vspace{30mm}
\label{fig:map-flow}
\end{subfigure}
\begin{subfigure}[b]{0.19\linewidth}
\centering
\includegraphics[width=\linewidth]{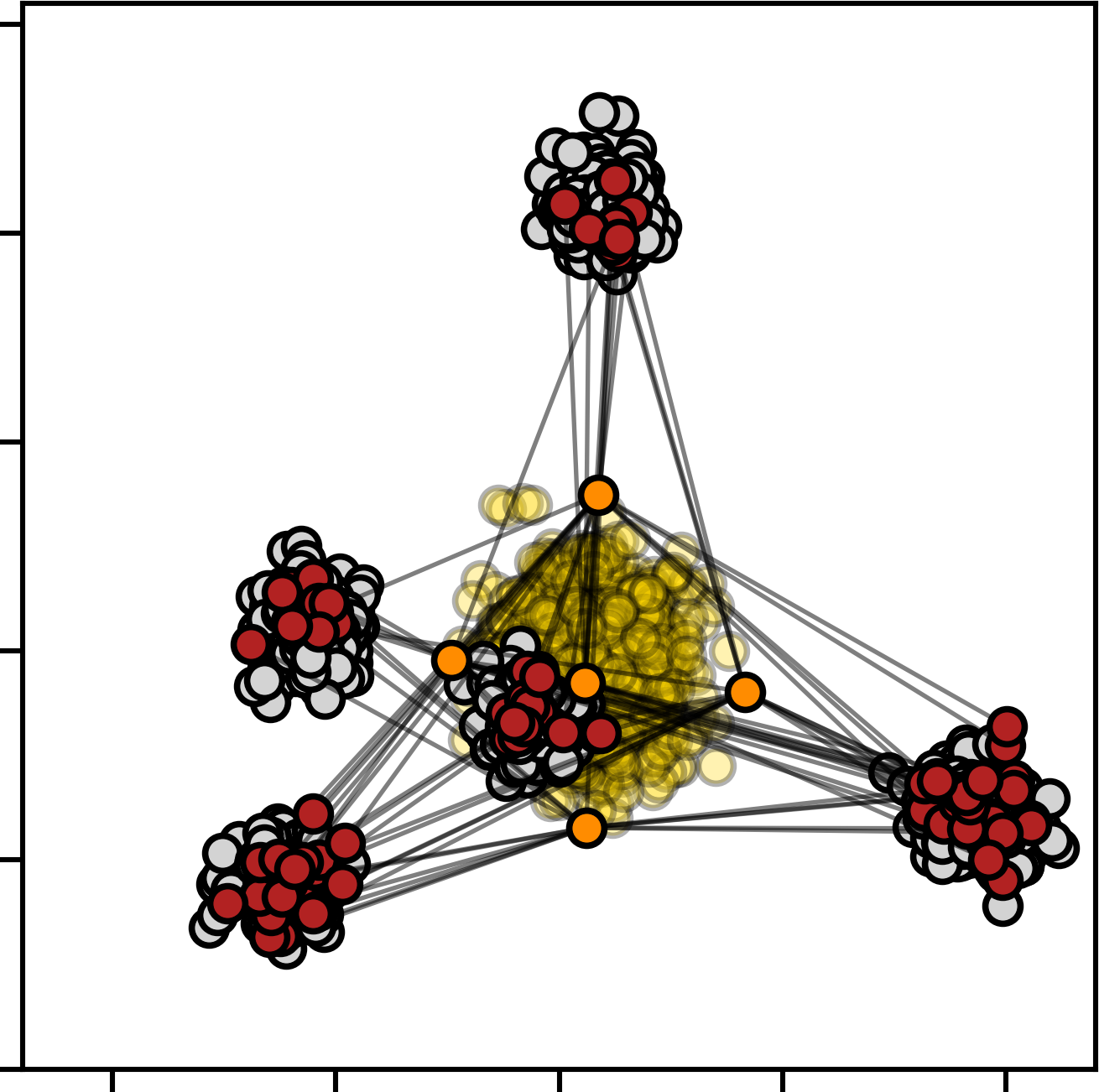}
\caption{\centering \small$\lfloor \textbf{EgNOT}\rceil$.}

\vspace{30mm}
\label{fig:map-egnot}
\end{subfigure}\vspace{2mm}

\hspace{-4mm}
\begin{subfigure}[b]{0.21\linewidth}
\vspace{-31mm}
\centering
\includegraphics[width=\linewidth]{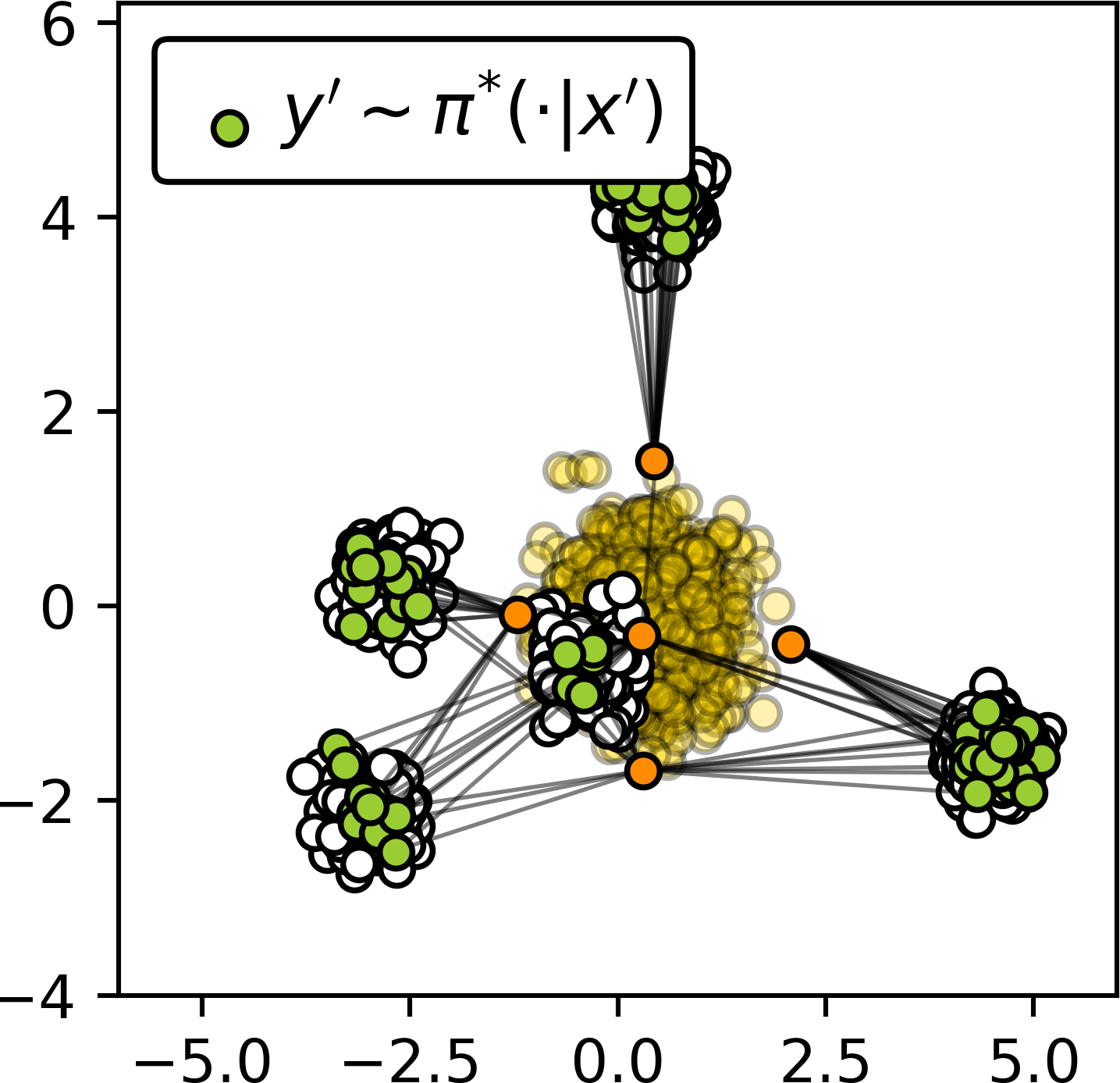}
\caption{\centering \small \textbf{True} EOT plan $\pi^{*}$.}
\label{fig:map-gt}
\end{subfigure}
\hspace{1mm}{\color{white}\vrule}\hspace{1mm}
\begin{subfigure}[b]{0.19\linewidth}
\vspace{-31mm}
\centering
\includegraphics[width=\linewidth]{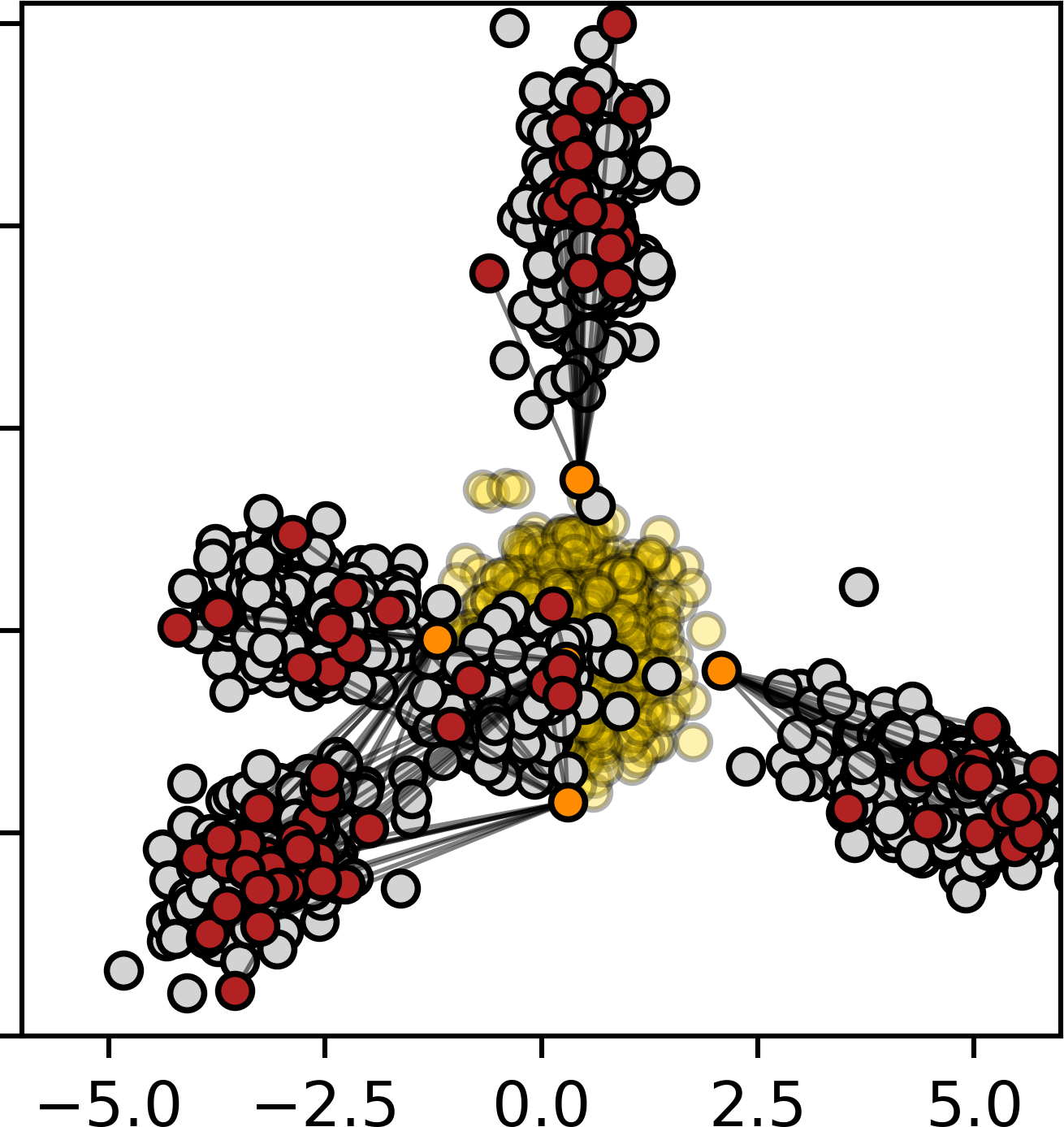}
\caption{\centering \small $\lfloor \textbf{ENOT}\rceil$.}
\label{fig:map-enot}
\end{subfigure}
\begin{subfigure}[b]{0.19\linewidth}
\vspace{-31mm}
\centering
\includegraphics[width=\linewidth]{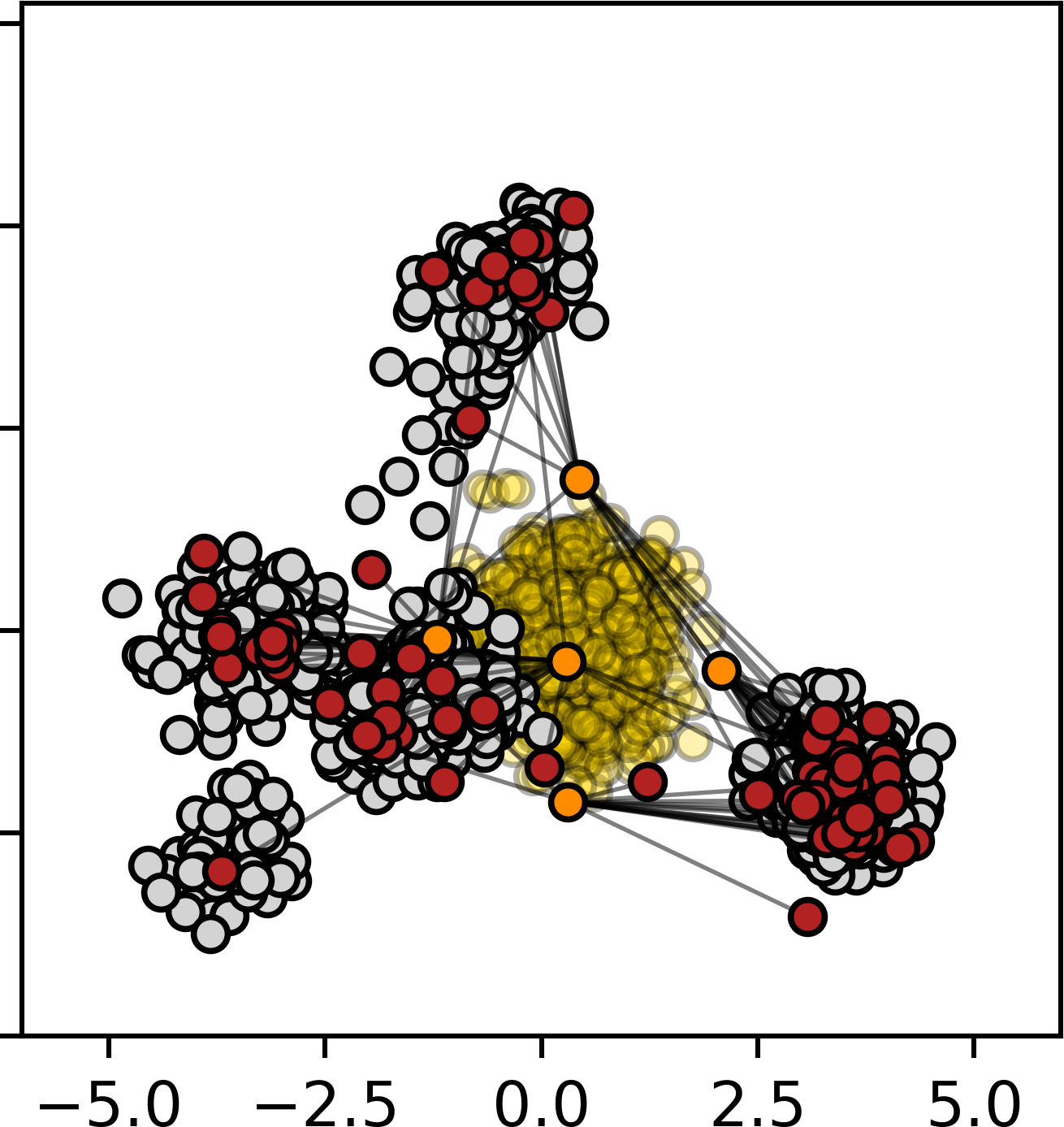}
\caption{\centering \small $\lfloor \textbf{DiffSB}\rceil$.}
\label{fig:map-bortoli}
\end{subfigure}
\begin{subfigure}[b]{0.19\linewidth}
\vspace{-31mm}
\centering
\includegraphics[width=\linewidth]{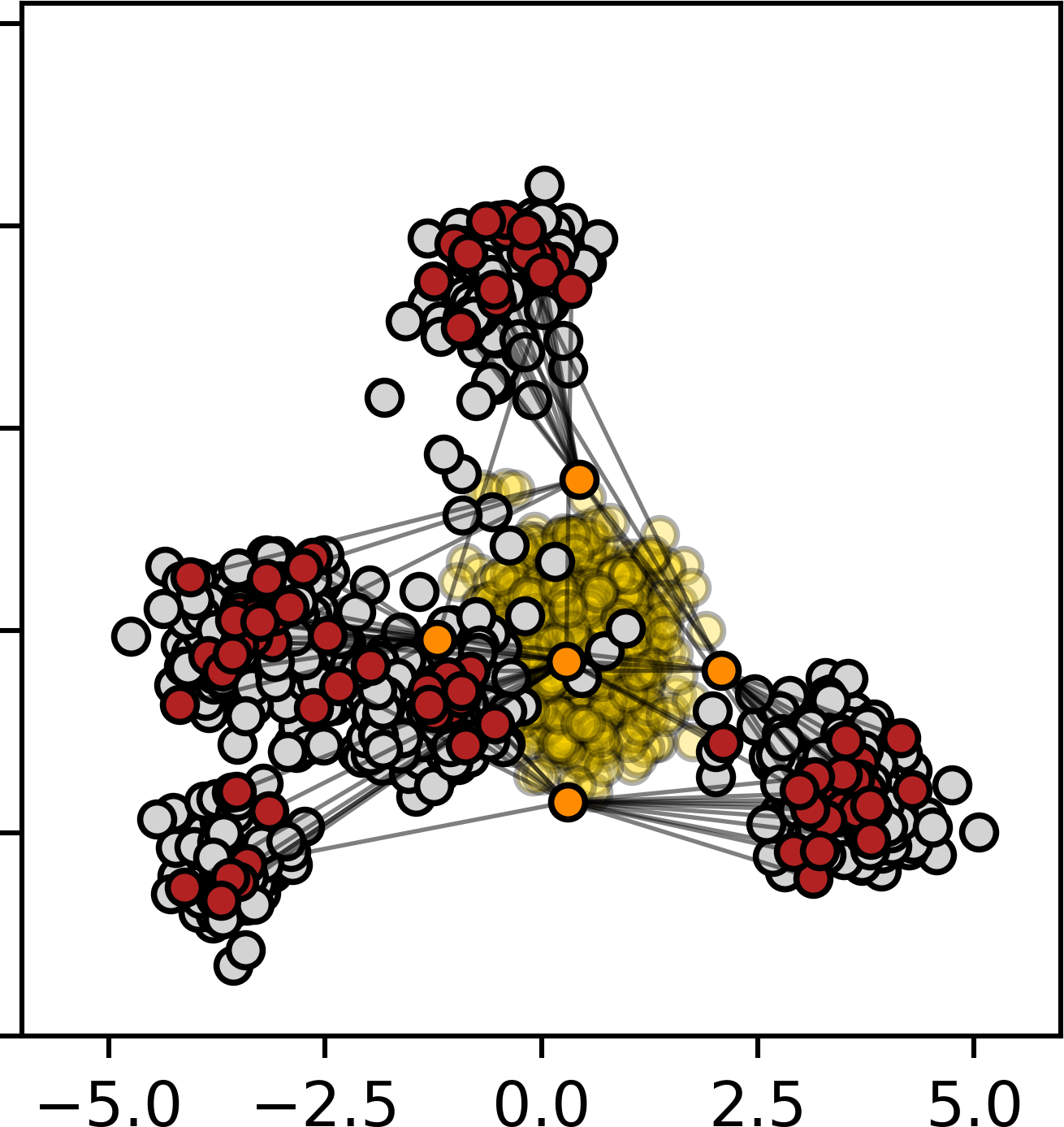}
\caption{\centering \small $\lfloor \textbf{FB-SDE-A}\rceil$.}
\label{fig:map-alter}
\end{subfigure}
\begin{subfigure}[b]{0.19\linewidth}
\vspace{-31mm}
\centering
\includegraphics[width=\linewidth]{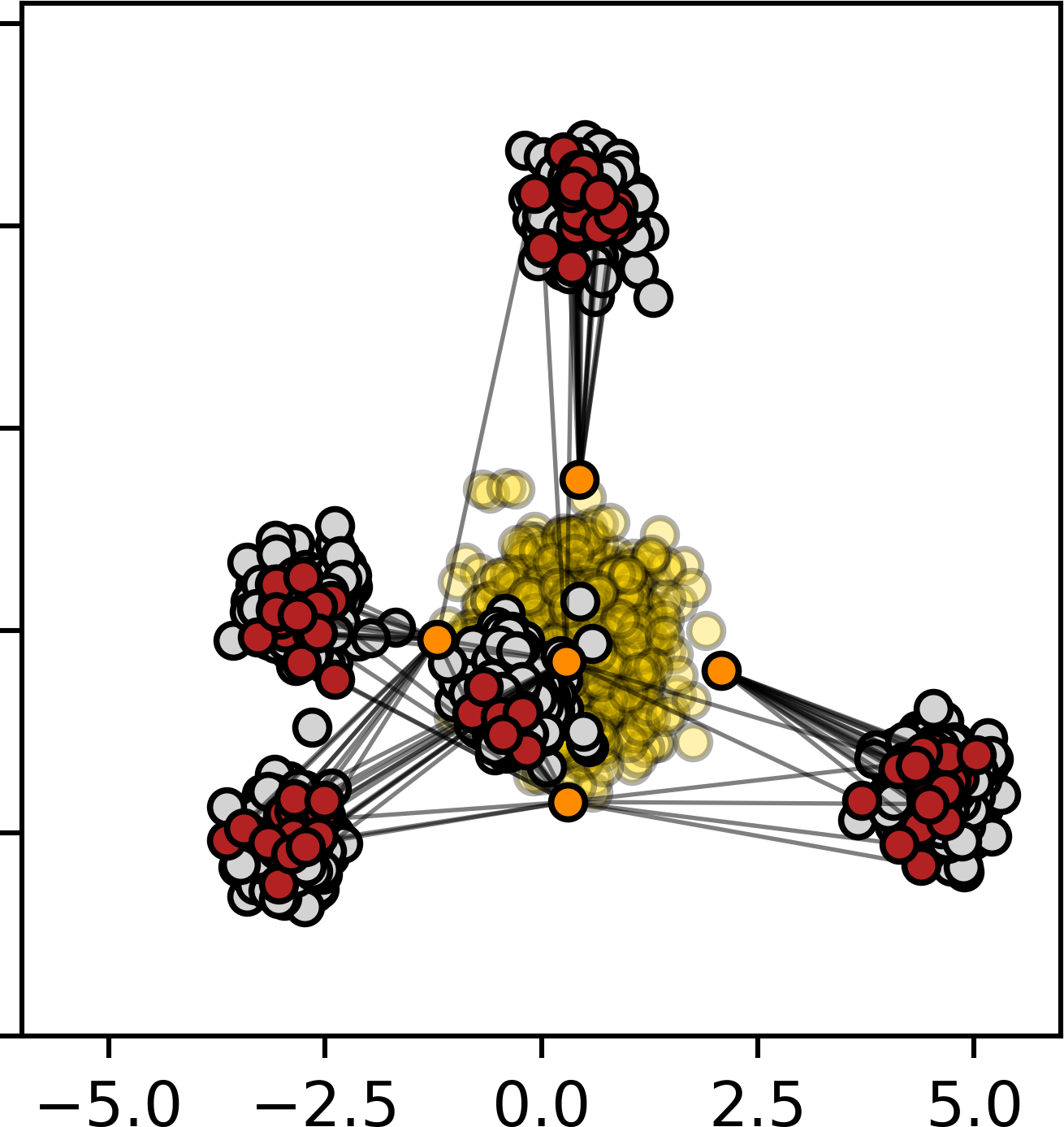}
\caption{\centering \small $\lfloor \textbf{FB-SDE-J}\rceil$.}
\label{fig:map-joint}
\end{subfigure}\vspace{-1mm}
\caption{\centering Qualitative results of EOT/SB solvers on our mixtures benchmark pair with $(D,\epsilon)=(16,1)$. The distributions are visualized using $2$ PCA components of target distribution $\mathbb{P}_{1}$.\protect\linebreak \underline{Additional examples} of performance on pairs with other  $\epsilon\in \{0.1, 10\}$ are given in Appendix \ref{sec:extra-mixtures-results}.}
\label{fig:qualitative-1}
\vspace{-5mm}
\end{figure}

Now we train various existing EOT/SB solvers from Table \ref{table-methods} on our benchmark pairs $(\sP_0,\sP_1)$ to showcase how well they capture the ground truth EOT plan. For solvers' details, see Appendix \ref{sec:methods-details}. 

\underline{\textbf{\textsc{Mixtures benchmark pairs}}}. For quantitative analysis, we propose the following metric:
\vspace{-1mm}
\begin{equation}\text{c}\text{B}\mathbb{W}_{2}^{2}\text{-UVP}\big(\widehat{\pi},\pi^{*}\big)\defeq\frac{100\%}{\frac{1}{2}\text{Var}(\mathbb{P}_1)} \int_{\mathcal{X}}\text{B}\mathbb{W}_{2}^{2}\big(\widehat{\pi}(\cdot|x),\pi^{*}(\cdot|x)\big)d\mathbb{P}_{0}(x).
\label{cbw-uvp}
\end{equation}
For each $x$ we compare \textbf{conditional} distributions $\widehat{\pi}(\cdot|x)$ and $\pi^{*}(\cdot|x)$ with each other by using the Bures-Wasserstein metric \cite{dowson1982frechet}, i.e., the Wasserstein-2 distance between Gaussian approximations of distributions. Then we average this metric w.r.t. $x\sim\sP_0$. The final normalization $\frac{1}{2}\text{Var}(\sP_1)$ is chosen so that the trivial baseline which maps the entire $\mathbb{P}_0$ to the mean of $\mathbb{P}_{1}$ provides $100\%$ error. Metric \eqref{cbw-uvp} is a modification of the standard $\text{B}\mathbb{W}_{2}^{2}\text{-UVP}$ \cite{daniels2021score,gushchin2023entropic,mokrov2023energy,korotin2021continuous,fan2021scalable} for the conditional setting. For completeness, we also report the \textbf{standard} $\text{B}\mathbb{W}_{2}^{2}\text{-UVP}$ to check how well $\widehat{\pi}_{1}$ matches $\pi_{1}^{*}$.

{\color{red}\textbf{\textsc{Disclaimer.}}} We found that most solvers' performance \textbf{significantly} depends on the selected hyper-parameters. We neither have deep knowledge of many solvers nor have the resources to tune them to achieve the best performance on our benchmark pairs. Thus, \textit{we kindly invite the interested authors of solvers to improve the results for their solvers.} Meanwhile, we report the results of solvers with their default configs and/or with limited tuning. Nevertheless, we present a hyperparameter study in Appendix~\ref{sec:hyperparameters-study} to show that the chosen hyperparameters are a reasonable fit for the considered tasks. Our goal here is to find out and explain the issues of the methods which are due to their principle rather than non-optimal hyperparameter selection. 

The \underline{detailed results} are in Appendix \ref{sec:extra-mixtures-results}. Here we give their concise summary (Table \ref{table-metrics-results}) and give a qualitative example (Fig. \ref{fig:qualitative-1}) of solvers' performance on our mixtures pair with $(D,\epsilon)=(16,1)$.

\textbf{\textsc{EOT solvers}.} $\lceil \textbf{LSOT}\rfloor$ and $\lceil \textbf{SCONES}\rfloor$ solvers work only for medium/large $\epsilon=1,10$. $\lceil \textbf{LSOT}\rfloor$ learns only the barycentric projection \cite[Def. 1]{seguy2018large} hence naturally experiences large errors and even collapses (Fig. \ref{fig:map-lsot}). $\lceil \textbf{SCONES}\rfloor$ solver works better but recovers the plan with a large error. We think this is due to using the Langevin dynamic \cite[Alg. 2]{daniels2021score} during the inference which gets stuck in modes plus the imprecise target density approximation which we employed (Appendix \ref{sec:methods-details}). $\lceil \textbf{EgNOT}\rfloor$ solver also employs Langevin dynamic and possibly experiences the same issue (Fig. \ref{fig:map-egnot}). Interestingly, our evaluation shows that it provides a better metric in matching the target distribution $\sP_1$. $\lceil \textbf{NOT}\rfloor$ was originally not designed for EOT because it is non-trivial to estimate entropy from samples. To fix this issue, we modify the authors' code for EOT by employing \textit{conditional normalizing flow} (CNF) as the generator. This allows us to estimate the entropy from samples and hence apply the solver to the EOT case. It scores good results despite the restrictiveness of the used architecture (Fig. \ref{fig:map-flow}).

\textbf{\textsc{SB solvers}.} For $\lceil \textbf{MLE-SB}\rfloor$, the original authors' implementation uses Gaussian processes as parametric approximators instead of neural nets \cite{vargas2021solving}. Since other SB solvers ($\lceil \textbf{DiffSB}\rfloor$, $\lceil \textbf{FB-SDE-A(J)}\rfloor$ and $\lceil \textbf{ENOT}\rfloor$) use neural nets, after discussion with the authors of $\lceil \textbf{MLE-SB}\rfloor$, we decided to use neural nets in their solver as well. All SB solvers work reasonably well, but their performance drops as the $\epsilon$ increases. This is because it becomes more difficult to model the entire diffusion (with volatility $\epsilon$); these solvers may require more discretization steps. Still, the case of large $\epsilon$ is not very interesting since, in this case, the EOT is almost equal to the trivial independent plan $\mathbb{P}_0\!\times\!\mathbb{P}_1$.

\underline{\textbf{\textsc{Images benchmark pairs}}}. There are only two solvers which have been tested by the authors in their papers in such a large-scale \textit{data}$\rightarrow$\textit{data} setup ($64\times 64$ RGB images), see Table \ref{table-methods}. Namely, these are $\lceil \textbf{SCONES}\rfloor$ and $\lceil \textbf{ENOT}\rfloor$. Unfortunately, we found that $\lceil \textbf{SCONES}\rfloor$ yields unstable training on our benchmark pairs, probably due to too small $\epsilon$ for it, see \cite[\wasyparagraph 5.1]{daniels2021score}. 
Therefore, we only report the results of $\lceil \textbf{ENOT}\rfloor$ solver. For completeness, we tried to run $\lceil \textbf{DiffSB}\rfloor$, $\lceil \textbf{FB-SDE-A}\rfloor$ solvers with their configs from \textit{noise}$\rightarrow$\textit{data} generative modelling setups but they diverged. We also tried $\lceil \textbf{NOT}\rfloor$ with convolutional CNF as the generator but it also did not converge. We leave adapting these solvers for high-dimensional \textit{data}$\rightarrow$\textit{data} setups for future studies. Hence, here we test only $\lceil \textbf{ENOT}\rfloor$.

In Fig. \ref{fig:celeba-benchmark-enot}, we qualitatively see that $\lceil \textbf{ENOT}\rfloor$ solver \textit{only for small} $\epsilon$ properly learns the EOT plan $\pi^{*}$ and sufficiently well restores images from the input noised inputs. As there is anyway the lack of baselines in the field of neural EOT/SB, we plan to release these $\lceil \textbf{ENOT}\rfloor$ checkpoints and expect them to become a \textbf{baseline for future works} in the field. Meanwhile, in Appendix \ref{sec:extra-images-results}, we discuss possible metrics which we recommend to use to compare with these baselines.

\begin{figure*}[!t]
\vspace{-5mm}
\hspace{3.7mm}\begin{subfigure}[b]{0.32\linewidth}
\centering
\includegraphics[width=0.995\linewidth]{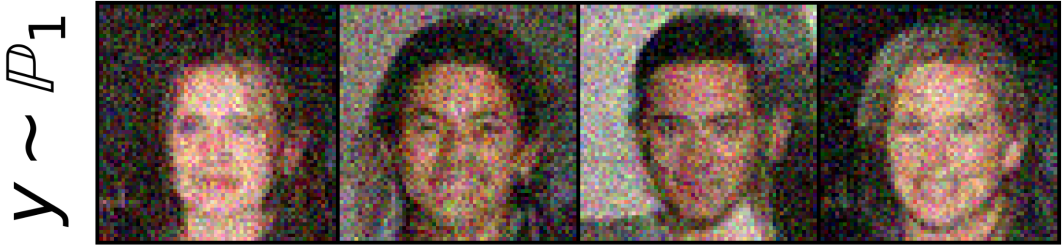}
\end{subfigure}
\vspace{-1mm}
\begin{subfigure}[b]{0.29\linewidth}
\centering
\includegraphics[width=0.995\linewidth]{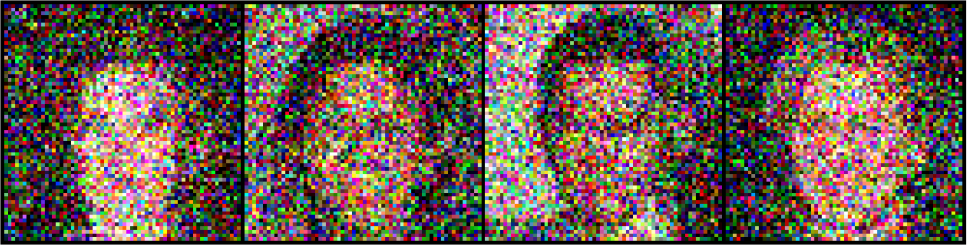}
\end{subfigure}
\vspace{-1mm}
\begin{subfigure}[b]{0.29\linewidth}
\centering
\includegraphics[width=0.995\linewidth]{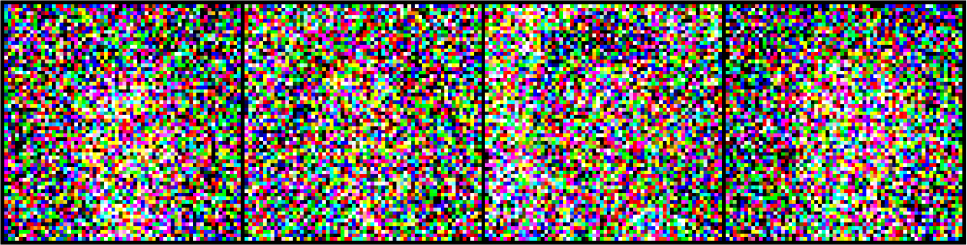}
\end{subfigure}
\vspace{-1mm}
\\
\line(1,0){380} 
\\
\begin{subfigure}[b]{0.345\linewidth}
\centering
\includegraphics[width=0.995\linewidth]{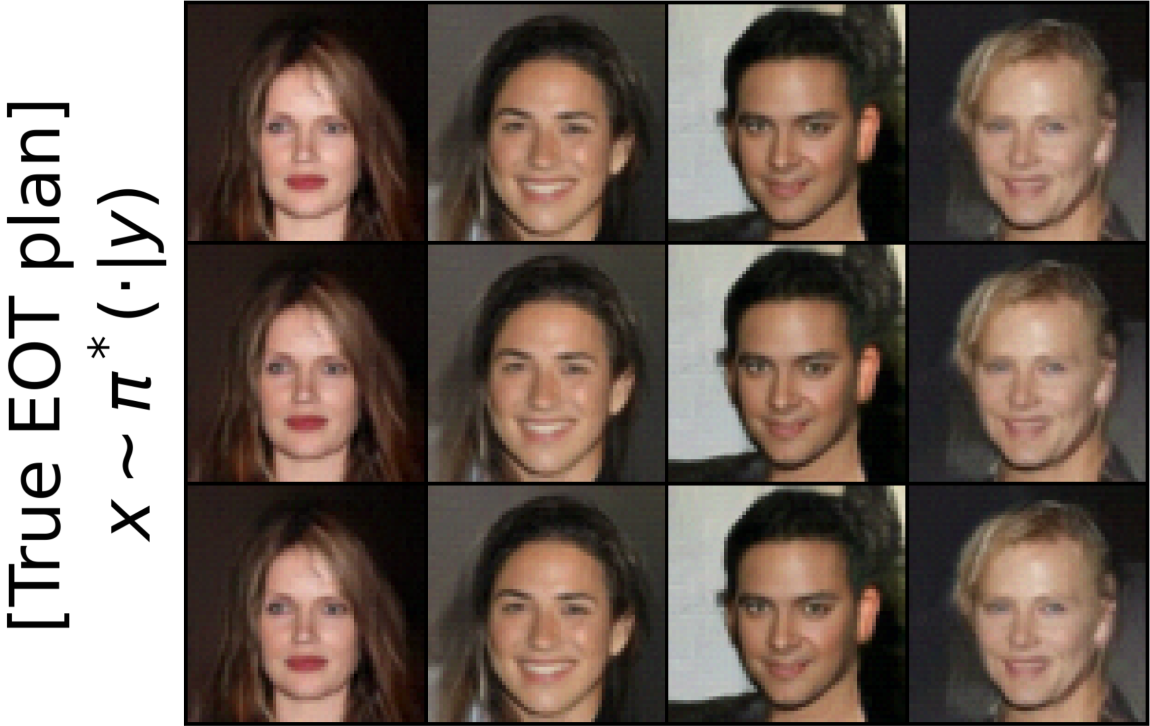}
\end{subfigure}
\vspace{-1mm}
\begin{subfigure}[b]{0.29\linewidth}
\centering
\includegraphics[width=0.995\linewidth]{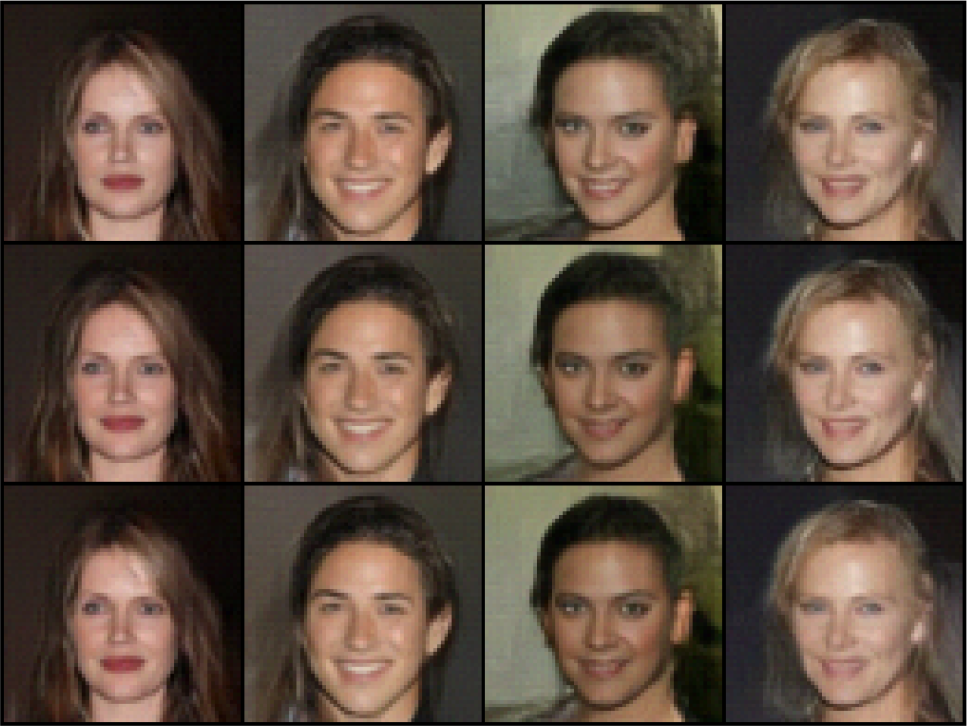}
\end{subfigure}
\vspace{-1mm}
\begin{subfigure}[b]{0.29\linewidth}
\centering
\includegraphics[width=0.995\linewidth]{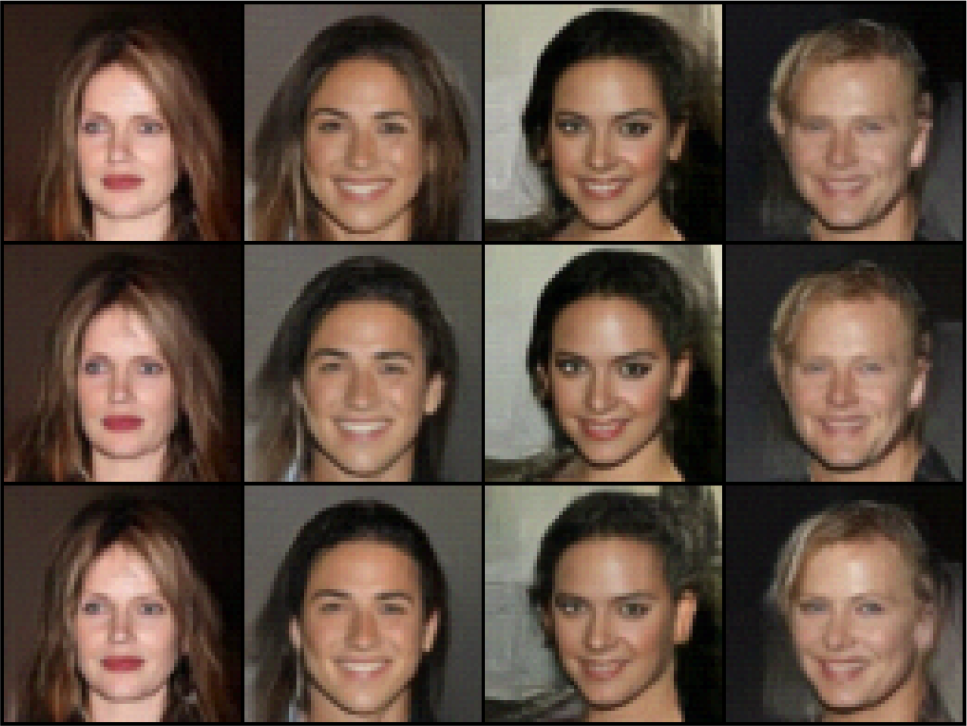}
\end{subfigure}
\vspace{-1mm}
\\
\line(1,0){380}
\\
\begin{subfigure}[b]{0.347\linewidth}
\centering
\includegraphics[width=0.995\linewidth]{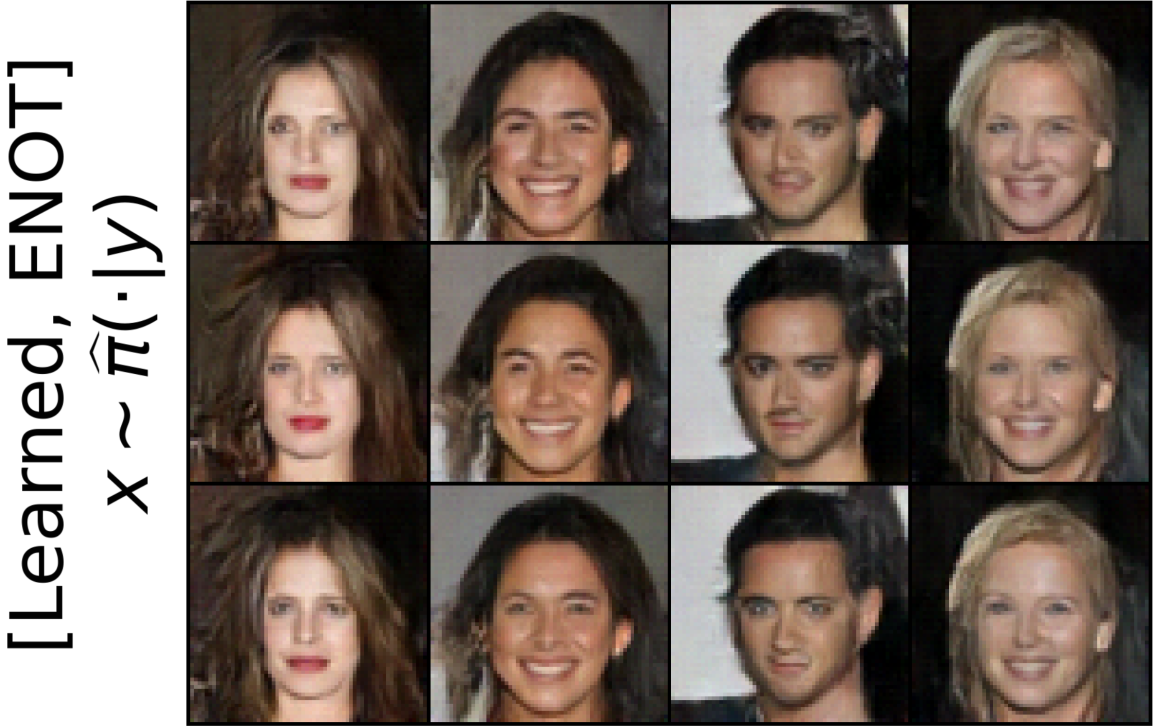}\caption{ $\epsilon=0.1$}
\end{subfigure}
\begin{subfigure}[b]{0.29\linewidth}
\centering
\includegraphics[width=0.995\linewidth]{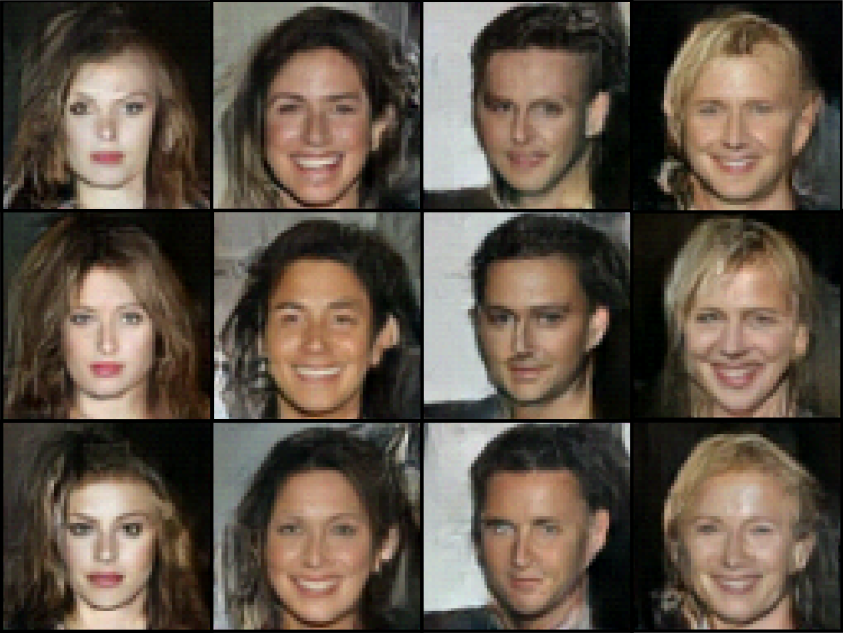}\caption{ $\epsilon=1$}
\end{subfigure}
\begin{subfigure}[b]{0.29\linewidth}
\centering
\includegraphics[width=0.995\linewidth]{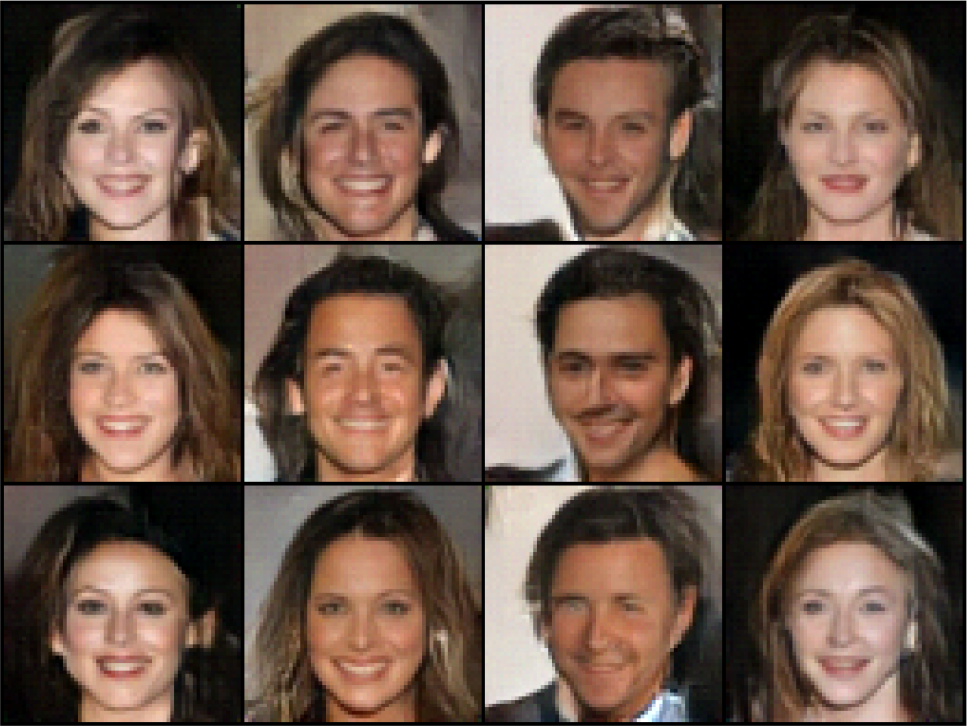}\caption{ $\epsilon=10$}
\end{subfigure}
\caption{\centering Qualitative comparison of ground truth samples $x\sim\pi^{*}(\cdot|y)$ with samples produced by $\lceil \textbf{ENOT}\rfloor$. With the increase of $\epsilon$, the diversity increases but the precision of image restoration drops.}
\label{fig:celeba-benchmark-enot}
\vspace{-6.5mm}
\end{figure*}

\vspace{-3mm}
\section{Discussion}
\vspace{-3mm}

\textbf{Potential Impact.} Despite the considerable growth of the field of EOT/SB, there is still no standard way to test existing neural (continuous) solvers. In our work, we fill this gap. Namely, \textit{we make a step towards bringing clarity and healthy competition to this research area by proposing the first-ever theoretically-grounded EOT/SB benchmark}. We hope that our constructed benchmark pairs will become the standard playground for testing continuous EOT/SB solvers as part of the ongoing effort to advance computational OT/SB, in particular, in its application to generative modelling.

\textbf{Limitations (benchmark).} We employ LSE quadratic functions \eqref{eq:general-potential-form} as optimal Kantorovich potentials to construct benchmark pairs. It is unclear whether our benchmark sufficiently reflects the practical scenarios in which the EOT/SB solvers are used. Nevertheless, our methodology is generic and can be used to construct new benchmark pairs but may require MCMC to sample from them.

To show that the family of EOT plans which can be produced with LSE potentials is rich enough, we provide a heuristic recipe on how to construct benchmark pairs simulating given real-world datasets, see Appendix \ref{sec-benchmark-from-data}. As we show there, the recipe works on several non-trivial single-cell datasets \cite{koshizuka2022neural, bunne2023learning}. Thus, we conjecture that LSE potentials may be sufficient to represent any complex distribution just like the well-celebrated Gaussian mixtures are capable of approximating any density \cite{nguyen2020approximation}. We leave this inspiring theoretical question open for future studies.

For completeness, we note that our images benchmark pairs use LSE potentials and do not require MCMC for sampling from marginals $\mathbb{P}_0,\mathbb{P}_1$, i.e., to get clean and noisy images, respectively. However, for computing the test conditional FID (Appendix \ref{sec:extra-images-results}) of EOT/SB solvers, MCMC is needed to sample clean images $x\sim\pi^{*}(\cdot|y)$ conditioned on noisy inputs $y$. This may introduce extra sources of error.

\textbf{Limitations (evaluation).} We employ B$\mathbb{W}_{2}^{2}$-UVP for the quantitative evaluation (\wasyparagraph\ref{sec:evaluation}) as it is popular in OT field \cite{korotin2021continuous,daniels2021score,fan2021scalable,gushchin2023entropic,mokrov2023energy} . However, it may not capture the full picture as it only compares the 1st and 2nd moments of distributions. We point to developing of novel evaluation metrics for neural OT/SB solvers as an important and helpful future research direction.

Following our disclaimer in \wasyparagraph\ref{sec:evaluation}, we acknowledge one more time, that the hyper-parameters tuning of the solvers which we test on our proposed benchmark is not absolutely comprehensive. It is possible that we might have missed something and did not manage to achieve the best possible performance in each particular case. At the same time, our Appendix~\ref{sec:hyperparameters-study} shows the extensive empirical study of the key hyper-parameters and it seems that the metrics reported are reasonably close to the optimal ones.

\vspace{-3mm}
\section{Acknowledgements}
\vspace{-3mm}
This work was partially supported by the Skoltech NGP Program (Skoltech-MIT joint project).

\bibliography{references}

\begin{thebibliography}{10}

\bibitem{asadulaev2022neural}
Arip Asadulaev, Alexander Korotin, Vage Egiazarian, and Evgeny Burnaev.
\newblock Neural optimal transport with general cost functionals.
\newblock {\em arXiv preprint arXiv:2205.15403}, 2022.

\bibitem{backhoff2019existence}
Julio Backhoff-Veraguas, Mathias Beiglb{\"o}ck, and Gudmun Pammer.
\newblock Existence, duality, and cyclical monotonicity for weak transport costs.
\newblock {\em Calculus of Variations and Partial Differential Equations}, 58(6):1--28, 2019.

\bibitem{backhoff2022applications}
Julio Backhoff-Veraguas and Gudmund Pammer.
\newblock Applications of weak transport theory.
\newblock {\em Bernoulli}, 28(1):370--394, 2022.

\bibitem{bianchini2014existence}
Stefano Bianchini, Alexander Dabrowski, et~al.
\newblock Existence and uniqueness of the gradient flow of the entropy in the space of probability measures.
\newblock {\em RENDICONTI DELL'ISTITUTO DI MATEMATICA DELL'UNIVERSIT{\`A} DI TRIESTE}, 46(1):43--70, 2014.

\bibitem{bishop2006pattern}
Christopher~M Bishop and Nasser~M Nasrabadi.
\newblock {\em Pattern recognition and machine learning}, volume~4.
\newblock Springer, 2006.

\bibitem{bonneel2023survey}
Nicolas Bonneel and Julie Digne.
\newblock A survey of optimal transport for computer graphics and computer vision.
\newblock In {\em Computer Graphics Forum}, volume~42, pages 439--460. Wiley Online Library, 2023.

\bibitem{bunne2023schrodinger}
Charlotte Bunne, Ya-Ping Hsieh, Marco Cuturi, and Andreas Krause.
\newblock The schr{\"o}dinger bridge between gaussian measures has a closed form.
\newblock In {\em International Conference on Artificial Intelligence and Statistics}, pages 5802--5833. PMLR, 2023.

\bibitem{bunne2023learning}
Charlotte Bunne, Stefan~G Stark, Gabriele Gut, Jacobo~Sarabia Del~Castillo, Mitch Levesque, Kjong-Van Lehmann, Lucas Pelkmans, Andreas Krause, and Gunnar R{\"a}tsch.
\newblock Learning single-cell perturbation responses using neural optimal transport.
\newblock {\em Nature Methods}, pages 1--10, 2023.

\bibitem{chen2022likelihood}
Tianrong Chen, Guan-Horng Liu, and Evangelos Theodorou.
\newblock Likelihood training of schr\"odinger bridge using forward-backward {SDE}s theory.
\newblock In {\em International Conference on Learning Representations}, 2022.

\bibitem{chen2015optimal}
Yongxin Chen, Tryphon~T Georgiou, and Michele Pavon.
\newblock Optimal steering of a linear stochastic system to a final probability distribution, part i.
\newblock {\em IEEE Transactions on Automatic Control}, 61(5):1158--1169, 2015.

\bibitem{chen2016relation}
Yongxin Chen, Tryphon~T Georgiou, and Michele Pavon.
\newblock On the relation between optimal transport and schr{\"o}dinger bridges: A stochastic control viewpoint.
\newblock {\em Journal of Optimization Theory and Applications}, 169:671--691, 2016.

\bibitem{choi2023generative}
Jaemoo Choi, Jaewoong Choi, and Myungjoo Kang.
\newblock Generative modeling through the semi-dual formulation of unbalanced optimal transport.
\newblock In {\em Advances in Neural Information Processing Systems}, 2023.

\bibitem{cuturi2013sinkhorn}
Marco Cuturi.
\newblock Sinkhorn distances: Lightspeed computation of optimal transport.
\newblock {\em Advances in neural information processing systems}, 26, 2013.

\bibitem{daniels2021score}
Max Daniels, Tyler Maunu, and Paul Hand.
\newblock Score-based generative neural networks for large-scale optimal transport.
\newblock {\em Advances in neural information processing systems}, 34:12955--12965, 2021.

\bibitem{de2021diffusion}
Valentin De~Bortoli, James Thornton, Jeremy Heng, and Arnaud Doucet.
\newblock Diffusion schr{\"o}dinger bridge with applications to score-based generative modeling.
\newblock {\em Advances in Neural Information Processing Systems}, 34:17695--17709, 2021.

\bibitem{dowson1982frechet}
D.C Dowson and B.V Landau.
\newblock The fréchet distance between multivariate normal distributions.
\newblock {\em Journal of Multivariate Analysis}, 12(3):450--455, 1982.

\bibitem{fan2023neural}
Jiaojiao Fan, Shu Liu, Shaojun Ma, Hao-Min Zhou, and Yongxin Chen.
\newblock Neural monge map estimation and its applications.
\newblock {\em Transactions on Machine Learning Research}, 2023.
\newblock Featured Certification.

\bibitem{fan2021scalable}
Jiaojiao Fan, Amirhossein Taghvaei, and Yongxin Chen.
\newblock Scalable computations of wasserstein barycenter via input convex neural networks.
\newblock In Marina Meila and Tong Zhang, editors, {\em Proceedings of the 38th International Conference on Machine Learning}, volume 139 of {\em Proceedings of Machine Learning Research}, pages 1571--1581. PMLR, 18--24 Jul 2021.

\bibitem{gazdieva2022unpaired}
Milena Gazdieva, Litu Rout, Alexander Korotin, Alexander Filippov, and Evgeny Burnaev.
\newblock Unpaired image super-resolution with optimal transport maps.
\newblock {\em arXiv preprint arXiv:2202.01116}, 2022.

\bibitem{genevay2019entropy}
Aude Genevay.
\newblock {\em Entropy-regularized optimal transport for machine learning}.
\newblock PhD thesis, Paris Sciences et Lettres (ComUE), 2019.

\bibitem{genevay2016stochastic}
Aude Genevay, Marco Cuturi, Gabriel Peyr{\'e}, and Francis Bach.
\newblock Stochastic optimization for large-scale optimal transport.
\newblock In {\em Advances in neural information processing systems}, pages 3440--3448, 2016.

\bibitem{gozlan2017kantorovich}
Nathael Gozlan, Cyril Roberto, Paul-Marie Samson, and Prasad Tetali.
\newblock Kantorovich duality for general transport costs and applications.
\newblock {\em Journal of Functional Analysis}, 273(11):3327--3405, 2017.

\bibitem{gushchin2023entropic}
Nikita Gushchin, Alexander Kolesov, Alexander Korotin, Dmitry Vetrov, and Evgeny Burnaev.
\newblock Entropic neural optimal transport via diffusion processes.
\newblock In {\em Advances in Neural Information Processing Systems}, 2023.

\bibitem{heusel2017gans}
Martin Heusel, Hubert Ramsauer, Thomas Unterthiner, Bernhard Nessler, and Sepp Hochreiter.
\newblock {GAN}s trained by a two time-scale update rule converge to a local nash equilibrium.
\newblock In {\em Advances in neural information processing systems}, pages 6626--6637, 2017.

\bibitem{ho2020denoising}
Jonathan Ho, Ajay Jain, and Pieter Abbeel.
\newblock Denoising diffusion probabilistic models.
\newblock {\em Advances in Neural Information Processing Systems}, 33:6840--6851, 2020.

\bibitem{janati2020entropic}
Hicham Janati, Boris Muzellec, Gabriel Peyr{\'e}, and Marco Cuturi.
\newblock Entropic optimal transport between unbalanced gaussian measures has a closed form.
\newblock {\em Advances in neural information processing systems}, 33:10468--10479, 2020.

\bibitem{kantorovich1942translocation}
Leonid~V Kantorovich.
\newblock On the translocation of masses.
\newblock In {\em Dokl. Akad. Nauk. USSR (NS)}, volume~37, pages 199--201, 1942.

\bibitem{khamis2023earth}
Abdelwahed Khamis, Russell Tsuchida, Mohamed Tarek, Vivien Rolland, and Lars Petersson.
\newblock Earth movers in the big data era: A review of optimal transport in machine learning.
\newblock {\em arXiv preprint arXiv:2305.05080}, 2023.

\bibitem{kingma2018glow}
Durk~P Kingma and Prafulla Dhariwal.
\newblock Glow: Generative flow with invertible 1x1 convolutions.
\newblock {\em Advances in neural information processing systems}, 31, 2018.

\bibitem{korotin2019wasserstein}
Alexander Korotin, Vage Egiazarian, Arip Asadulaev, Alexander Safin, and Evgeny Burnaev.
\newblock Wasserstein-2 generative networks.
\newblock In {\em International Conference on Learning Representations}, 2021.

\bibitem{korotin2022kantorovich}
Alexander Korotin, Alexander Kolesov, and Evgeny Burnaev.
\newblock Kantorovich strikes back! wasserstein {GAN}s are not optimal transport?
\newblock In {\em Thirty-sixth Conference on Neural Information Processing Systems Datasets and Benchmarks Track}, 2022.

\bibitem{korotin2021neural}
Alexander Korotin, Lingxiao Li, Aude Genevay, Justin~M Solomon, Alexander Filippov, and Evgeny Burnaev.
\newblock Do neural optimal transport solvers work? a continuous wasserstein-2 benchmark.
\newblock {\em Advances in Neural Information Processing Systems}, 34:14593--14605, 2021.

\bibitem{korotin2021continuous}
Alexander Korotin, Lingxiao Li, Justin Solomon, and Evgeny Burnaev.
\newblock Continuous wasserstein-2 barycenter estimation without minimax optimization.
\newblock In {\em International Conference on Learning Representations}, 2021.

\bibitem{korotin2023kernel}
Alexander Korotin, Daniil Selikhanovych, and Evgeny Burnaev.
\newblock Kernel neural optimal transport.
\newblock In {\em International Conference on Learning Representations}, 2023.

\bibitem{korotin2023neural}
Alexander Korotin, Daniil Selikhanovych, and Evgeny Burnaev.
\newblock Neural optimal transport.
\newblock In {\em International Conference on Learning Representations}, 2023.

\bibitem{koshizuka2022neural}
Takeshi Koshizuka and Issei Sato.
\newblock Neural lagrangian schr$\backslash$"$\{$o$\}$ dinger bridge: Diffusion modeling for population dynamics.
\newblock In {\em The Eleventh International Conference on Learning Representations}, 2022.

\bibitem{lecun2006tutorial}
Yann LeCun, Sumit Chopra, Raia Hadsell, M~Ranzato, and Fujie Huang.
\newblock A tutorial on energy-based learning.
\newblock {\em Predicting structured data}, 1(0), 2006.

\bibitem{leonard2013survey}
Christian L{\'e}onard.
\newblock A survey of the schr$\backslash$" odinger problem and some of its connections with optimal transport.
\newblock {\em arXiv preprint arXiv:1308.0215}, 2013.

\bibitem{liu2015faceattributes}
Ziwei Liu, Ping Luo, Xiaogang Wang, and Xiaoou Tang.
\newblock Deep learning face attributes in the wild.
\newblock In {\em Proceedings of International Conference on Computer Vision (ICCV)}, December 2015.

\bibitem{makkuva2020optimal}
Ashok Makkuva, Amirhossein Taghvaei, Sewoong Oh, and Jason Lee.
\newblock Optimal transport mapping via input convex neural networks.
\newblock In Hal~Daumé III and Aarti Singh, editors, {\em Proceedings of the 37th International Conference on Machine Learning}, volume 119 of {\em Proceedings of Machine Learning Research}, pages 6672--6681. PMLR, 13--18 Jul 2020.

\bibitem{mallasto2022entropy}
Anton Mallasto, Augusto Gerolin, and H{\`a}~Quang Minh.
\newblock Entropy-regularized 2-wasserstein distance between gaussian measures.
\newblock {\em Information Geometry}, 5(1):289--323, 2022.

\bibitem{mokrov2023energy}
Petr Mokrov, Alexander Korotin, and Evgeny Burnaev.
\newblock Energy-guided entropic neural optimal transport.
\newblock {\em arXiv preprint arXiv:2304.06094}, 2023.

\bibitem{nguyen2020approximation}
T~Tin Nguyen, Hien~D Nguyen, Faicel Chamroukhi, and Geoffrey~J McLachlan.
\newblock Approximation by finite mixtures of continuous density functions that vanish at infinity.
\newblock {\em Cogent Mathematics \& Statistics}, 7(1):1750861, 2020.

\bibitem{peyre2019computational}
Gabriel Peyr{\'e}, Marco Cuturi, et~al.
\newblock Computational optimal transport.
\newblock {\em Foundations and Trends{\textregistered} in Machine Learning}, 11(5-6):355--607, 2019.

\bibitem{rombach2022high}
Robin Rombach, Andreas Blattmann, Dominik Lorenz, Patrick Esser, and Bj{\"o}rn Ommer.
\newblock High-resolution image synthesis with latent diffusion models.
\newblock In {\em Proceedings of the IEEE/CVF Conference on Computer Vision and Pattern Recognition}, pages 10684--10695, 2022.

\bibitem{rout2022generative}
Litu Rout, Alexander Korotin, and Evgeny Burnaev.
\newblock Generative modeling with optimal transport maps.
\newblock In {\em International Conference on Learning Representations}, 2022.

\bibitem{saharia2022palette}
Chitwan Saharia, William Chan, Huiwen Chang, Chris Lee, Jonathan Ho, Tim Salimans, David Fleet, and Mohammad Norouzi.
\newblock Palette: Image-to-image diffusion models.
\newblock In {\em ACM SIGGRAPH 2022 Conference Proceedings}, pages 1--10, 2022.

\bibitem{santambrogio2015optimal}
Filippo Santambrogio.
\newblock Optimal transport for applied mathematicians.
\newblock {\em Birk{\"a}user, NY}, 55(58-63):94, 2015.

\bibitem{seguy2018large}
Vivien Seguy, Bharath~Bhushan Damodaran, Remi Flamary, Nicolas Courty, Antoine Rolet, and Mathieu Blondel.
\newblock Large scale optimal transport and mapping estimation.
\newblock In {\em International Conference on Learning Representations}, 2018.

\bibitem{su2023dual}
Xuan Su, Jiaming Song, Chenlin Meng, and Stefano Ermon.
\newblock Dual diffusion implicit bridges for image-to-image translation.
\newblock In {\em The Eleventh International Conference on Learning Representations}, 2023.

\bibitem{van2008visualizing}
Laurens Van~der Maaten and Geoffrey Hinton.
\newblock Visualizing data using t-sne.
\newblock {\em Journal of machine learning research}, 9(11), 2008.

\bibitem{vargas2021solving}
Francisco Vargas, Pierre Thodoroff, Austen Lamacraft, and Neil Lawrence.
\newblock Solving schr{\"o}dinger bridges via maximum likelihood.
\newblock {\em Entropy}, 23(9):1134, 2021.

\bibitem{villani2008optimal}
C{\'e}dric Villani.
\newblock {\em Optimal transport: old and new}, volume 338.
\newblock Springer Science \& Business Media, 2008.

\bibitem{wang2021deep}
Gefei Wang, Yuling Jiao, Qian Xu, Yang Wang, and Can Yang.
\newblock Deep generative learning via schr{\"o}dinger bridge.
\newblock In {\em International Conference on Machine Learning}, pages 10794--10804. PMLR, 2021.

\bibitem{williams2006gaussian}
Christopher~KI Williams and Carl~Edward Rasmussen.
\newblock {\em Gaussian processes for machine learning}, volume~2.
\newblock MIT press Cambridge, MA, 2006.

\bibitem{xie2019scalable}
Yujia Xie, Minshuo Chen, Haoming Jiang, Tuo Zhao, and Hongyuan Zha.
\newblock On scalable and efficient computation of large scale optimal transport.
\newblock volume~97 of {\em Proceedings of Machine Learning Research}, pages 6882--6892, Long Beach, California, USA, 09--15 Jun 2019. PMLR.

\end{thebibliography}
\bibliographystyle{plain}

\newpage
\appendix
\onecolumn
\section{Proofs}\label{sec:proofs}
\begin{proof}[Proof of Theorem \ref{thm:general-benchamrk-theorem}] By the definition of $\sP_1$, it holds that $\pi^* \in \Pi(\sP_0, \sP_1)$. It suffices to show that $\pi^*$ attains the optimal cost. Let $\textbf{Cost}(\pi)$ be the value of weak OT functional for a plan $\pi$, i.e.,
\begin{eqnarray}
    \textbf{Cost}(\pi) \defeq  \int_{\mathcal{X}} C(x, \pi(\cdot|x)) d\sP(x). \nonumber
\end{eqnarray}
We consider weak OT \eqref{weak-ot} between $\sP_0\in\mathcal{P}(\mathcal{X})$ and $\sP_1\in\mathcal{P}_p(\mathcal{X})$ and use its dual form \eqref{eq:dual-form-ot}:
\begin{eqnarray}
\textbf{Cost}(\sP_0, \sP_1) =  
\sup_{f} \bigg\lbrace\int_{\mathcal{X}} f^{C}(x) d\sP_0(x) + \int_{\mathcal{Y}} f(y) d \sP_1(y)\bigg\rbrace =
\nonumber
\\
\sup_{f} \bigg\lbrace\int_{\mathcal{X}} \inf_{\nu \in \mathcal{P}_{p}(\mathcal{Y})} \{C(x, \nu) - \int_{\mathcal{Y}} f(y) d\nu(y) \} d\sP_0(x) + \int_{\mathcal{Y}} f(y) d \sP_1(y) \bigg\rbrace\geq
\nonumber
\\
\int_{\mathcal{X}} \inf_{\nu \in \mathcal{P}_{p}(\mathcal{Y})} \{C(x, \nu) - \int_{\mathcal{Y}} f^*(y) d\nu(y) \} d\sP_0(x) + \int_{\mathcal{Y}} f^*(y) d \sP_1(y).
\nonumber
\end{eqnarray}
Now we use the fact that $\pi^*(\cdot|x)$ minimizes \eqref{C-transform} for all $x\in\mathcal{X}$:
\begin{eqnarray}
\int_{\mathcal{X}} \inf_{\nu \in \mathcal{P}_{p}(\mathcal{Y})} \{C(x, \nu) - \int f^*(y) d\nu(y) \} d\sP_0(x) + \int_{\mathcal{Y}} f^*(y) d \sP_1(y) =
\nonumber
\\
= \int_{\mathcal{X}} \big\lbrace C(x, \pi^*(\cdot|x)) - \int_{\mathcal{Y}} f^*(y)d\pi^*(y|x) \big\rbrace d\sP_0(x)  +  \int_{\mathcal{Y}} f^*(y)d\sP_1(y) =
\nonumber
\\
\int_{\mathcal{X}}  C(x, \pi^*(\cdot|x))d\sP_0(x) - \int_{\mathcal{X}}\int_{\mathcal{Y}} f^*(y)d\pi^*(y|x)\underbrace{d\sP_0(x)}_{=d\pi^{*}_{0}(x)} +  \int_{\mathcal{Y}} f^*(y)d\sP_1(y) =
\nonumber
\\
\int_{\mathcal{X}}  C(x, \pi^*(\cdot|x))d\sP_0(x) - \int_{\mathcal{X}\times\mathcal{Y}} f^*(y)d\pi^*(x,y) +  \int_{\mathcal{Y}} f^*(y)d\sP_1(y) =
\nonumber
\\
\int_{\mathcal{X}}  C(x, \pi^*(\cdot|x))d\sP_0(x) - \int_{\mathcal{Y}} f^*(y)d\pi_{1}^*(y) +  \int_{\mathcal{Y}} f^*(y)d\sP_1(y) =
\nonumber
\\
\int_{\mathcal{X}} C(x, \pi^*(\cdot|x))d\sP_0(x) + \underbrace{\int_{\mathcal{Y}} f^*(y)d(\sP_1 -\pi^{*}_{1})(y)}_{=0 \text{ since } \pi^*_{1} = \sP_1} =
\nonumber
\\
\int_{\mathcal{X}} C(x, \pi^*(\cdot|x))d\sP_0(x) = \textbf{Cost}(\pi^*).
\end{eqnarray}
We see that $\textbf{Cost}(\pi^*)$ is not greater than the optimal $\textbf{Cost}(\sP_0, \sP_1)$, i.e., $\pi^{*}$ is optimal. At the same time, from the derivations above, it directly follows that $f^{*}$ is an optimal potential.
\end{proof}
\begin{proof}[Proof of Theorem \ref{thm:entropic-benchamrk-theorem}] We are going to use our Theorem \ref{thm:general-benchamrk-theorem}. First, we check that \eqref{constructor-argmin} holds for $\pi^{*}(\cdot|x)$ defined by \eqref{eq:eot-C-transform-solution}.
Analogously to \cite[Theorem 1]{mokrov2023energy}, for each $x\in\mathcal{X}$, we derive
\begin{eqnarray}
    \inf_{\nu \in \mathcal{P}_{p}(\mathcal{Y})} \{C_{c,\epsilon}(x, \nu) - \int_{\mathcal{Y}} f^{*}(y) d\nu(y) \}=\inf_{\nu \in \mathcal{P}_{p}(\mathcal{Y})} \underbrace{\big\lbrace \int_{\mathcal{Y}} \big[c(x,y)-f^{*}(y)\big] d\nu(y)-\epsilon H(\nu)\big\rbrace}_{\defeq \mathcal{G}_{x}(\nu)}.
    \nonumber
\end{eqnarray}
Minimizing $\mathcal{G}_{x}$, one should consider only $\nu\in\mathcal{P}_{p,ac}(\mathcal{Y})\subset\mathcal{P}_{p}(\mathcal{Y})$. Indeed, for $\nu\notin\mathcal{P}_{p,ac}(\mathcal{Y})$, it holds that $\mathcal{G}_{x}(\nu^{*})=+\infty$ since $c(x,y)$ is lower bounded and $-H(\nu)=+\infty$. We continue
\begin{eqnarray}\inf_{\nu \in \mathcal{P}_{p,ac}(\mathcal{Y})} \big\lbrace -\epsilon\int_{\mathcal{Y}} \log\exp\bigg(\frac{f^{*}(y)-c(x,y)}{\epsilon}\bigg) d\nu(y)+\epsilon \overbrace{\int_{\mathcal{Y}}\log\frac{d\nu(y)}{dy}d\nu(y)}^{=-H(\nu)}\big\rbrace=
\nonumber
\\
\inf_{\nu \in \mathcal{P}_{p,ac}(\mathcal{Y})} \big\lbrace -\epsilon\int_{\mathcal{Y}} \log\big(Z_x\cdot \frac{d\pi^{*}(y|x)}{dy}\big) d\nu(y)+\epsilon \int_{\mathcal{Y}}\log\frac{d\nu(y)}{dy}d\nu(y)\big\rbrace=
\nonumber
\\
-\epsilon\log Z_{x}+\inf_{\nu \in \mathcal{P}_{p,ac}(\mathcal{Y})} \big\lbrace -\epsilon\int_{\mathcal{Y}} \log \frac{d\pi^{*}(y|x)}{dy} d\nu(y)+\epsilon \int_{\mathcal{Y}}\log\frac{d\nu(y)}{dy}d\nu(y)\big\rbrace=
\nonumber
\\
-\epsilon\log Z_{x}+\inf_{\nu \in \mathcal{P}_{p,ac}}\epsilon\KL{\nu}{\pi^{*}(\cdot|x)}.
\label{c-tranform-as-kl}
\end{eqnarray}
Since $\pi^{*}(\cdot|x)\in\mathcal{P}_{p,ac}(\mathcal{Y})$, by the assumption of the current Theorem, we conclude that it is the unique minimum of $\mathcal{G}_{x}(\nu)$ in $\mathcal{P}_{p,ac}(\mathcal{Y})$. Now to apply our Theorem \ref{thm:general-benchamrk-theorem}, it remains to check that all its assumptions hold. We only have to check that $C_{c,\epsilon}$ given by \eqref{weak-entropic-ot-cost} is lower bounded, jointly lower semi-continuous and convex in the second argument. 

Analogously to \eqref{c-tranform-as-kl}, we derive
\begin{equation}
C_{c,\epsilon}(x,\nu)=\int_{\mathcal{Y}}c(x,y)d\nu(y)-\epsilon H(\nu)=\underbrace{-\epsilon\log M_{x}}_{\geq -\epsilon\log M}+\epsilon\underbrace{\KL{\nu}{\nu_x}}_{\geq 0}\geq-\epsilon\log M,
\label{weak-entropic-cost-2}
\end{equation}
where $\frac{d\nu_x(y)}{dy}\defeq M_{x}^{-1}\exp\big(-\frac{c(x,y)}{\epsilon}\big)$. This provides a lower bound on the cost $C_{c,\epsilon}$. From the first equality in \eqref{weak-entropic-cost-2}, we see that $C_{c,\epsilon}$ is jointly lower semi-continuous because the first term $\int_{\mathcal{Y}}c(x,y)d\nu(y)$ is jointly lower semi-continuous by the assumptions and the entropy term $-H(\nu)$ is lower semi-continuous in $\mathcal{P}_1(\mathcal{Y})$ \cite[Ex. 45]{santambrogio2015optimal} \vphantom{\cite[Proposition 4.8]{bianchini2014existence}} and hence in $\mathcal{P}_{p}(\mathcal{Y})$ as well ($p \geq 1$). The last step is to note that $C_{c,\epsilon}(x,\nu)$ is convex in $\nu$ thanks to the convexity of $-H(\nu)$.

Finally, if $\int_{\mathcal{X}}C_{c,\epsilon}\big(x,\pi^{*}(\cdot|x)\big)d\mathbb{P}_{0}(x)\!<\!\infty$, then $-\int_{\mathcal{X}}H\big(\pi^{*}(\cdot|x)\big)$ is finite. Let $U$ be the subset of plans $\pi\subset \Pi(\sP_0,\sP_1)$ where $-\int_{\mathcal{X}}H\big(\pi(\cdot|x)\big)$ is finite. It is not empty since $\pi^{*}\in U$. At the same time, it is a convex set and functional $\pi\mapsto -\int_{\mathcal{X}}H\big(\pi(\cdot|x)\big)d\mathbb{P}_{0}(x)$ is \textbf{strictly} convex in $U$ thanks to the strict convexity of the (negative) entropy $\nu\mapsto -H(\nu)$ on the set of distributions where it is finite. Thus, $\pi \mapsto \int_{\mathcal{X}}C_{c,\epsilon}\big(x,\pi(\cdot|x)\big)d\mathbb{P}_{0}(x)$ is strictly convex in $U$ and $\pi^{*}$ is the unique minimum.

For completeness, we note that if $\int_{\mathcal{X}}C_{c,\epsilon}\big(x,\pi^{*}(\cdot|x)\big)d\mathbb{P}_{0}(x)=+\infty$, this situation is trivial, as the cost of every plan turns to be equal to $+\infty$. As a result, every plan is optimal.

\end{proof}

\begin{proof}[Proof of Proposition \ref{prop:entropic-ot-solution-log-sum-exp}]
Deriving the actual form of $\pi^{*}(\cdot|x)$ is an easy exercise.
We substitute \eqref{eq:general-potential-form} into \eqref{eq:eot-C-transform-solution} and use the quadratic cost $c(x,y) = \frac{||y-x||^2}{2}$:
\begin{eqnarray}
    \frac{d\pi^{*}(y|x)}{dy}=\frac{1}{Z_{x}} \exp \bigg(\frac{f^*(y) - c(x,y)}{\epsilon}\bigg) = 
    \nonumber
    \\
    \frac{1}{Z_{x}} \exp \bigg(\frac{\epsilon \log \sum_{n=1}^N w_n \mathcal{Q}(y|b_n, \epsilon^{-1}A_n) - \frac{||y-x||^2}{2}}{\epsilon}\bigg) =
    \nonumber
    \\
    \frac{1}{Z_{x}} \bigg(\sum_{n=1}^N w_n \mathcal{Q}(y|b_n, \epsilon^{-1}A_n)\bigg) \exp(-\frac{||y-x||^2}{2\epsilon}) = 
    \nonumber
    \\
    \frac{1}{Z_{x}} \sum_{n=1}^N w_n \bigg(\mathcal{Q}(y|b_n, \epsilon^{-1}A_n) \exp(-\frac{||y-x||^2}{2\epsilon})\bigg) = 
    \nonumber
    \\
    \frac{1}{Z_{x}} \sum_{n=1}^N w_n \bigg(\exp\big[-\frac{1}{2}(y-b_n)^T\frac{A_n}{\epsilon}(y-b_n)\big] \exp(-\frac{||y-x||^2}{2\epsilon})\bigg) = 
    \nonumber
    \\
    \frac{1}{Z_{x}} \sum_{n=1}^N w_n \bigg(\exp\big[-\frac{1}{2}(y-b_n)^T\frac{A_n}{\epsilon}(y-b_n) -\frac{||y-x||^2}{2\epsilon}\big]\bigg) = 
    \nonumber
    \\
    \frac{1}{Z_{x}} \sum_{n=1}^N w_n \bigg(\exp\big[-\frac{1}{2}(y-b_n)^T\frac{A_n}{\epsilon}(y-b_n) -\frac{1}{2}(y-x)^T\frac{I}{\epsilon}(y-x)\big]\bigg) = 
    \nonumber
    \\
    \frac{1}{Z_{x}} \sum_{n=1}^N w_n \bigg(\exp\big[-\frac{1}{2}\big\{(y-b_n)^T\frac{A_n}{\epsilon}(y-b_n) + (y-x)^T\frac{I}{\epsilon}(y-x)\big\}\big]\bigg). \label{eq:cond-distr-part-1}
\end{eqnarray}
Next, we prove that (we write just $\mu_{n}$ instead of $\mu_{n}(x)$ for simplicity):
\begin{eqnarray}
    (y-b_n)^T\frac{A_n}{\epsilon}(y-b_n) + (y-x)^T\frac{I}{\epsilon}(y-x) =
    \nonumber
    \\
    (y - \mu_n)^T\Sigma_n^{-1}(y - \mu_{n}) + (x - b_n)^T(\frac{I}{\epsilon} - \frac{\Sigma_n}{\epsilon^2})(x-b_n). \label{eq:quadratic-form-rearrangement}
\end{eqnarray}

Indeed,
\begin{eqnarray}
    \hspace{-5mm}
    (y-b_n)^T\frac{A_n}{\epsilon}(y-b_n) + (y-x)^T\frac{I}{\epsilon}(y-x) = 
    \nonumber
    \\
    {\color{red}y^T\frac{A_n}{\epsilon}y - 2b_n^T\frac{A_n}{\epsilon}y + b_n^T\frac{A_n}{\epsilon}b_n^T + y^T\frac{I}{\epsilon}y - 2x^T\frac{I}{\epsilon}y + x^T\frac{I}{\epsilon}x} =
    \nonumber
    \\
    {\color{red}y^T\underbrace{(\frac{A_n+I}{\epsilon})}_{\Sigma_n^{-1}}y - 2(A_nb_n + x)^T\frac{I}{\epsilon}y} + b_n^T\frac{A_n}{\epsilon}b_n + x^T\frac{I}{\epsilon}x =
    \nonumber
    \\
    {\color{red}y^T\Sigma_n^{-1}y} -2(A_nb_n + x)^T\frac{I}{\epsilon}y + b_n^T\frac{A_n}{\epsilon}b_n + x^T\frac{I}{\epsilon}x  =
    \nonumber
    \\
     y^T\Sigma_n^{-1}y -{\color{red}2\underbrace{(A_nb_n + x)^T(A_n+I)^{-1}}_{\mu_n^T}\underbrace{\frac{(A_n+I)}{\epsilon}}_{\Sigma_n^{-1}}y} + b_n^T\frac{A_n}{\epsilon}b_n + x^T\frac{I}{\epsilon}x = 
    \nonumber
    \\
     y^T\Sigma_n^{-1}y -{\color{red}2\mu_n^T\Sigma_n^{-1}y} + b_n^T\frac{A_n}{\epsilon}b_n + x^T\frac{I}{\epsilon}x = 
    \nonumber
    \\
     y^T\Sigma_n^{-1}y -{\color{red}2\mu_n^T\Sigma_n^{-1}y + \mu_n^T\Sigma_n^{-1}\mu_n - \mu_n^T\Sigma_n^{-1}\mu_n} + b_n^T\frac{A_n}{\epsilon}b_n + x^T\frac{I}{\epsilon}x =
    \nonumber 
    \\
    {\color{red}(y - \mu_n)^T\Sigma_n^{-1}(y - \mu_n)} - \mu_n^T\Sigma_n^{-1}\mu_n + b_n^T\frac{A_n}{\epsilon}b_n + x^T\frac{I}{\epsilon}x = 
    \nonumber 
    \\ 
    (y - \mu_n)^T\Sigma_n^{-1}(y - \mu_n) + b_n^T\frac{A_n}{\epsilon}b_n -  \mu_n^T\Sigma_n^{-1}\mu_n + x^T\frac{I}{\epsilon}x = 
    \nonumber 
    \\ 
    (y - \mu_n)^T\Sigma_n^{-1}(y - \mu_n) + b_n^T\frac{A_n}{\epsilon}b_n - {\color{red}(A_n b_n + x)^T\frac{\Sigma_n}{\epsilon}\Sigma_n^{-1}\frac{\Sigma_n}{\epsilon}(A_n b_n + x)} + x^T\frac{I}{\epsilon}x = 
    \nonumber
    \\
    (y - \mu_n)^T\Sigma_n^{-1}(y - \mu_n) + b_n^T\frac{A_n}{\epsilon}b_n - {\color{red}(A_{n}b_n + x)^T\frac{\Sigma_n}{\epsilon^2}(A_{n}b_n + x)} + x^T\frac{I}{\epsilon}x = 
    \nonumber 
    \\
    (y - \mu_n)^T\Sigma_n^{-1}(y - \mu_n) + b_n^T\frac{A_n}{\epsilon}b_n - 
    \nonumber 
    \\
    {\color{red}(A_n b_n)^T\frac{\Sigma_n}{\epsilon^2}A_n b_n - 2(A_{n}b_{n})^T\frac{\Sigma_n}{\epsilon^2}x - x^T\frac{\Sigma_n}{\epsilon^2}}x + x^T\frac{I}{\epsilon}x = 
    \nonumber 
    \\
    (y - \mu_n)^T\Sigma_n^{-1}(y - \mu_n) + b_n^T\frac{A_n}{\epsilon}b_n - (A_n b_n)^T\frac{\Sigma_n}{\epsilon^2}A_n b_n - 
    \nonumber
    \\ 
    2(A_{n}b_{n})^T\frac{\Sigma_n}{\epsilon^2}x + {\color{red}x^T(\frac{I}{\epsilon} - \frac{\Sigma_n}{\epsilon^2})x}  = 
    \nonumber 
    \\
    (y - \mu_n)^T\Sigma_n^{-1}(y - \mu_n) + b_n^T\frac{A_n}{\epsilon}b_n - {\color{red}b_n^T\frac{A_n^{T} \Sigma_n A_n}{\epsilon^2}b_n} - 
    \nonumber
    \\
    2(A_n b_n)^T\frac{\Sigma_n}{\epsilon^2}x + x^T(\frac{I}{\epsilon} - \frac{\Sigma_n}{\epsilon^2})x  = 
    \nonumber 
    \\
    (y - \mu_n)^T\Sigma_n^{-1}(y - \mu_n) + {\color{red}b_n^T\frac{A_n - A_n^{T}\frac{\Sigma_n}{\epsilon} A_n}{\epsilon}b_n} - 2(A_{n}b_{n})^T\frac{\Sigma_n}{\epsilon^2}x + x^T(\frac{I}{\epsilon} - \frac{\Sigma_n}{\epsilon^2})x  = 
    \nonumber 
    \\
    (y - \mu_n)^T\Sigma_n^{-1}(y - \mu_n) + b_n^T\frac{A_n - A_n^{T}\frac{\Sigma_n}{\epsilon} A_n}{\epsilon}b_n - {\color{red}2b_{n}^T\frac{A_n \Sigma_n}{\epsilon^2}x} + x^T(\frac{I}{\epsilon} - \frac{\Sigma_n}{\epsilon^2})x  = 
    \nonumber 
    \\
    (y - \mu_n)^T\Sigma_n^{-1}(y - \mu_n) + b_n^T\frac{A_n - A_n^{T}\frac{\Sigma_n}{\epsilon} A_n}{\epsilon}b_n - 
    \nonumber
    \\
    {\color{red}2b_{n}^T\frac{\overbrace{(A_n + I)}^{\epsilon \Sigma_n^{-1}} \Sigma_n - \Sigma_n}{\epsilon^2}x} + x^T(\frac{I}{\epsilon} - \frac{\Sigma_n}{\epsilon^2})x  = 
    \nonumber 
    \\
    (y - \mu_n)^T\Sigma_n^{-1}(y - \mu_n) + b_n^T\frac{A_n - A_n^{T}\frac{\Sigma_n}{\epsilon} A_n}{\epsilon}b_n - {\color{red}2b_{n}^T\frac{\epsilon I - \Sigma_n}{\epsilon^2}x} + x^T(\frac{I}{\epsilon} - \frac{\Sigma_n}{\epsilon^2})x  = 
    \nonumber 
    \\
    (y - \mu_n)^T\Sigma_n^{-1}(y - \mu_n) + b_n^T\frac{A_n - A_n^{T}\frac{\Sigma_n}{\epsilon} A_n}{\epsilon}b_n - {\color{red}2b_n^T(\frac{I}{\epsilon} - \frac{\Sigma_n}{\epsilon^2})x} + x^T(\frac{I}{\epsilon} - \frac{\Sigma_n}{\epsilon^2})x  = 
    \nonumber 
    \\
    (y - \mu_n)^T\Sigma_n^{-1}(y - \mu_n) + b_n^T\frac{A_n - A_n^{T}\frac{\Sigma_n}{\epsilon} A_n}{\epsilon}b_n - 
    \nonumber
    \\
    {\color{red}b_{n}^T(\frac{I}{\epsilon} - \frac{\Sigma_n}{\epsilon^2})b_{n} + (x-b_{n})^T(\frac{I}{\epsilon} - \frac{\Sigma_n}{\epsilon^2})(x-b_{n})} = 
    \nonumber 
    \\
    (y - \mu_n)^T\Sigma_n^{-1}(y - \mu_n) + (x-b_n)^T(\frac{I}{\epsilon} - \frac{\Sigma_n}{\epsilon^2})(x-b_n) +
    \nonumber
    \\
    b_n^T\frac{A_n - A_n^{T}\frac{\Sigma_n}{\epsilon} A_n}{\epsilon}b_n - b_n^T(\frac{I}{\epsilon} - \frac{\Sigma_n}{\epsilon^2})b_n = 
    \nonumber 
    \\
    (y - \mu_n)^T\Sigma_n^{-1}(y - \mu_n) + (x - b_n)^T(\frac{I}{\epsilon} - \frac{\Sigma_n}{\epsilon^2})(x-b_n) + 
    \nonumber
    \\
    {\color{red} b_n^T(\frac{A_n - A_n^{T}\frac{\Sigma_n}{\epsilon} A_n}{\epsilon} - \frac{I}{\epsilon} + \frac{\Sigma_n}{\epsilon^2})b_{n}} = 
    \nonumber 
    \\
    (y - \mu_n)^T\Sigma_n^{-1}(y - \mu_n) + (x-b_n)^T(\frac{I}{\epsilon} - \frac{\Sigma_n}{\epsilon^2})(x-b_n) + 
    \nonumber
    \\
    {\color{red} b_n^T(\frac{A_n(I-\frac{\Sigma_n}{\epsilon} A_n)}{\epsilon} - \frac{I}{\epsilon} + \frac{\Sigma_n}{\epsilon^2})b_{n}} = 
    \nonumber 
    \\
    (y - \mu_n)^T\Sigma_n^{-1}(y - \mu_n) + (x-b_n)^T(\frac{I}{\epsilon} - \frac{\Sigma_n}{\epsilon^2})(x-b_n) + 
    \nonumber
    \\
    {\color{red} b_n^T(\frac{A_n(I-\frac{\Sigma_n}{\epsilon} (\epsilon\Sigma_n^{-1} - I))}{\epsilon} - \frac{I}{\epsilon} + \frac{\Sigma_n}{\epsilon^2})b_{n}} = 
    \nonumber 
    \\
    (y - \mu_n)^T\Sigma_n^{-1}(y - \mu_n) + (x-b_n)^T(\frac{I}{\epsilon} - \frac{\Sigma_n}{\epsilon^2})(x-b_n) + 
    \nonumber
    \\
    {\color{red} b_n^T(\frac{A_n(\frac{\Sigma_n}{\epsilon}) - I + \frac{\Sigma_n}{\epsilon}}{\epsilon})b_n} = 
    \nonumber 
    \\
    (y - \mu_n)^T\Sigma_n^{-1}(y - \mu_n) + (x-b_n)^T(\frac{I}{\epsilon} - \frac{\Sigma_n}{\epsilon^2})(x-b_n) + 
    \nonumber
    \\
    {\color{red} b_n^T(\frac{(\epsilon\Sigma_n^{-1} - I)\frac{\Sigma_n}{\epsilon} - I + \frac{\Sigma_n}{\epsilon}}{\epsilon})b_n} = 
    \nonumber 
    \\
    (y - \mu_n)^T\Sigma_n^{-1}(y - \mu_n) + (x-b_n)^T(\frac{I}{\epsilon} - \frac{\Sigma_n}{\epsilon^2})(x-b_n) + {\color{red} b_n^T(\frac{I -  \frac{\Sigma_n}{\epsilon} - I + \frac{\Sigma_n}{\epsilon}}{\epsilon})b_n} = 
    \nonumber 
    \\
    (y - \mu_n)^T\Sigma_n^{-1}(y - \mu_n) + (x-b_n)^T(\frac{I}{\epsilon} - \frac{\Sigma_n}{\epsilon^2})(x-b_n).
    \nonumber 
\end{eqnarray}
Next, we substitute \eqref{eq:quadratic-form-rearrangement} into \eqref{eq:cond-distr-part-1}

\begin{eqnarray}
    \frac{1}{Z_{x}} \sum_{n=1}^N w_n \bigg(\exp\big[-\frac{1}{2}\big\{(y-b_n)^T\frac{A_n}{\epsilon}(y-b_n) + (y-x)^T\frac{I}{\epsilon}(y-x)\big\}\big]\bigg) =
    \nonumber
    \\
    \frac{1}{Z_{x}} \sum_{n=1}^N w_n \bigg(\exp\big[-\frac{1}{2}\big\{(y - \mu_n)^T\Sigma_n^{-1}(y - \mu_n) + (x - b_n)^T(\frac{I}{\epsilon} - \frac{\Sigma_n}{\epsilon^2})(x-b_n)\big\}\big]\bigg)=
    \nonumber
    \\
    \frac{1}{Z_{x}} \sum_{n=1}^N w_n \exp(-\frac{1}{2}(y - \mu_n)^T\Sigma_n^{-1}(y - \mu_n)) \exp(-\frac{1}{2}(x - b_n)^T(\frac{I}{\epsilon} - \frac{\Sigma_n}{\epsilon^2})(x-b_n)) = 
    \nonumber
    \\
    \frac{1}{Z_{x}} \sum_{n=1}^N w_n (2\pi)^{\frac{D}{2}}\sqrt{\det(\Sigma_n)}\mathcal{N}(y|\mu_n, \Sigma_n) \mathcal{Q}(x|b_n, \frac{I}{\epsilon} - \frac{\Sigma_n}{\epsilon^2}) = 
    \nonumber
    \\
    \frac{1}{Z_{x}} \sum_{n=1}^N \underbrace{w_n (2\pi)^{\frac{D}{2}}\sqrt{\det(\Sigma_n)}\mathcal{Q}(x|b_n, \frac{I}{\epsilon} - \frac{\Sigma_n}{\epsilon^2})}_{\widetilde{w}_n}\mathcal{N}(y|\mu_n, \Sigma_n)) =
    \nonumber
    \\
    \frac{1}{Z_{x}} \sum_{n=1}^N \widetilde{w}_n \mathcal{N}(y|\mu_n, \Sigma_n) =
    \frac{1}{\sum_{n=1}^N \widetilde{w}_n} \sum_{n=1}^N \widetilde{w}_n \mathcal{N}(y|\mu_n, \Sigma_n) =
    \nonumber
    \\
    \sum_{n=1}^N  \frac{\widetilde{w}_n}{\sum_{n=1}^N \widetilde{w}_n}  \mathcal{N}(y|\mu_n, \Sigma_n) = \sum_{n=1}^N \gamma_n \mathcal{N}(y|\mu_n, \Sigma_n).
    \nonumber
\end{eqnarray}
which finishes the derivation of the expression for the density of $\pi^{*}(\cdot|x)$.

Now we prove that $\sP_{1}\defeq \pi^{*}_{1}\in\mathcal{P}_{2}(\mathcal{Y})$. For each $x$, consider $\frac{d\pi^{*}(y|x)}{dy}=\sum_{n=1}^{N}\gamma_{n}\mathcal{N}(y|\mu_{n}(x),\Sigma_{n})$. Its second moment is given by $\sum_{n=1}^{N}\gamma_{n}\big(\|\mu_{n}(x)\|^{2}+\Tr \Sigma_{n}\big)$. Note that
\begin{eqnarray}
    \|\mu_{n}(x)\|=\|(A_n + I)^{-1}(A_n b_n + x)\|\leq 
    \nonumber
    \\
    \|(A_n + I)^{-1}\|\cdot \|A_n b_n + x\|\leq \|(A_n + I)^{-1}\|\cdot (\|A_n b_n\| + \|x\|),
    \nonumber
\end{eqnarray}
where $\|\cdot\|$ applied to matrix means the operator norm. Hence, one may conclude that $\|\mu_{n}(x)\|^{2}$ is upper bounded by some quadratic polynomial of $\|x\|$, i.e., there exist constants $\alpha_{n}\in\mathbb{R},\beta_{n}\in\mathbb{R}_{+}$ such that $\|\mu_{n}(x)\|^{2}\leq \alpha_{n}+\beta_{n}\cdot \|x\|^{2}$. We derive
\begin{eqnarray}
    \int_{\mathcal{Y}}\|y\|^{2}d\pi^{*}_{1}(y)=\int_{\mathcal{X}}\int_{\mathcal{Y}}\|y\|^{2}d\pi^{*}(y|x)\!\!\!\underbrace{d\pi^{*}_{0}}_{=d\sP_0(x)}\!\!(x)=\int_{\mathcal{X}}\sum_{n=1}^{N}\gamma_{n}\big(\|\mu_{n}(x)\|^{2}+\Tr \Sigma_{n}\big)d\sP_{0}(x)\leq
    \nonumber
    \\
    \int_{\mathcal{X}}\sum_{n=1}^{N}\gamma_{n}\big(\alpha_{n}+\beta_{n}\|x\|^{2}+\Tr \Sigma_{n}\big)d\sP_{0}(x)=
    \nonumber
    \\
    \sum_{n=1}^{N}\gamma_{n}\big(\alpha_{n}+\Tr\Sigma_{n}\big)+\big(\sum_{n=1}^{N}\beta_{n}\gamma_{n}\big)\int_{\mathcal{X}}\|x\|^{2}d\sP_0(x)<\infty
    \nonumber
\end{eqnarray}
since $\sP_{0}\in\mathcal{P}_{2}(\mathcal{X})$ by the assumption of the proposition.

It remains to prove that $\pi^{*}$ is the unique EOT plan. According to our Theorem \ref{thm:entropic-benchamrk-theorem}, one only has to ensure that $\int_{\mathcal{X}}C_{c,\epsilon}\big(x,\pi^{*}(\cdot|x)\big)d\mathbb{P}_{0}(x)\!<\!\infty$. Just for completeness, we highlight that $\int_{\mathcal{X}}C_{c,\epsilon}\big(x,\pi^{*}(\cdot|x)\big)d\mathbb{P}_{0}(x)$ is \textit{lower}-bounded since $C_{c,\epsilon}$ is lower bounded, see the proof of Theorem \ref{thm:entropic-benchamrk-theorem}. Anyway, this is indifferent for us. We recall that $\pi^{*}$ is an optimal plan between $\sP_0$ and $\sP_1= \pi_{1}^{*}$ and $f^{*}$ is an optimal potential by our construction. Thanks to the duality, we have
\begin{eqnarray}
    \int_{\mathcal{X}}C_{c,\epsilon}\big(x,\pi^{*}(\cdot|x)\big)d\mathbb{P}_{0}(x)=\int_{\mathcal{X}}(f^{*})^{C_{c,\epsilon}}(x)d\sP_0(x)+\int_{\mathcal{Y}}f^{*}(y)d\sP_1(y)=
    \nonumber
    \\
    \int_{\mathcal{X}}\big[-\epsilon\log Z_{x}\big]d\sP_0(x)+\int_{\mathcal{Y}}f^{*}(y)d\sP_1(y),
    \label{optimal-duality-upper-bound}
\end{eqnarray}
where in transition to \eqref{optimal-duality-upper-bound} we used our findings of line \eqref{c-tranform-as-kl}. Note that $\int_{\mathcal{Y}}f^{*}(y)d\sP_1(y)$ is finite since $f^{*}\in\mathcal{C}_{2}(\mathcal{Y})$ is dominated by a quadratic polynomial, and we have already proved that $\sP_1$ has finite second moment. It remains to upper bound the first term in \eqref{optimal-duality-upper-bound}. We note that
\begin{eqnarray}
    Z_{x}=\int_{\mathcal{Y}}\exp\bigg(\frac{f^{*}(y)-\frac{1}{2}\|x-y\|^{2}}{\epsilon}\bigg)dy=(\sqrt{2\pi \epsilon})^{D}\int_{\mathcal{Y}}\exp\bigg(\frac{f^{*}(y)}{\epsilon}\bigg)\mathcal{N}(y|x,\epsilon I)dy\geq
    \nonumber
    \\
    (\sqrt{2\pi \epsilon})^{D}\exp\bigg(\int_{\mathcal{Y}}\frac{f^{*}(y)}{\epsilon}\mathcal{N}(y|x,\epsilon I)dy\bigg)\geq 
    (\sqrt{2\pi \epsilon})^{D}\exp\bigg(\int_{\mathcal{Y}}\frac{\beta +\alpha\|y\|^{2}}{\epsilon}\mathcal{N}(y|x,\epsilon I)dy\bigg)=
    \label{jensen-exp}
    \\
    (\sqrt{2\pi \epsilon})^{D}\exp\bigg(\frac{\beta +\alpha (\|x\|^{2}+\epsilon D)}{\epsilon}\bigg),
    \label{lower-bound-Z}
\end{eqnarray}
where in transition to line \eqref{jensen-exp} we used the Jesnsen's inequality and $\alpha,\beta\in\mathbb{R}$ are some constants for which $f^*(\cdot)\geq \beta +\alpha\|\cdot\|^{2}$. They exist since $f^{*}\in\mathcal{C}_{2}(\mathcal{Y})$. Indeed, there exist $\tilde{\alpha}, \tilde{\beta}: \, \vert f^*(\cdot) \vert \leq \tilde{\beta} + \tilde{\alpha} \Vert \cdot \Vert^2 \Rightarrow f^*(\cdot) \geq - \tilde{\beta} - \tilde{\alpha} \Vert \cdot \Vert^2$, and we set $\alpha = - \tilde{\alpha}, \beta = - \tilde{\beta}$. In turn, line \eqref{lower-bound-Z} uses the explicit formula for the second moment of $\mathcal{N}(y|x,\epsilon I)$. We use \eqref{lower-bound-Z} to upper bound the first term in \eqref{optimal-duality-upper-bound}:
\begin{eqnarray}
    \int_{\mathcal{X}}\big[-\epsilon\log Z_{x}\big]d\sP_0(x)\leq \int_{\mathcal{X}}\big[-\epsilon\log \bigg(\lbrace(\sqrt{2\pi \epsilon})^{D}\exp\bigg(\frac{\beta +\alpha\|x\|^{2}+ \alpha \epsilon D}{\epsilon}\bigg)\bigg\rbrace\big]d\sP_0(x)=
    \nonumber
    \\
    -\frac{\epsilon D}{2}\log (2\pi\epsilon)-\beta- \alpha\epsilon D-\alpha\int_{\mathcal{X}}\|x\|^{2}d\sP_0(x).
    \nonumber
\end{eqnarray}
It remains to note that the last value is finite, since $\mathbb{P}_0\in\mathcal{P}_{2}(\mathcal{X})$ by the assumption.
\end{proof}

\begin{proof}[Proof of Corollary \ref{cor:sb-solution-general}]
    We note that $\frac{d\pi^{*}(y|x)}{dy} \propto \exp \big(\frac{f^*(y) - \frac{1}{2}\|x-y\|^{2}}{\epsilon}\big)$. Therefore,
    \begin{equation}
          \exp \big(\frac{f^*(y)}{\epsilon}\big)\propto \frac{d\pi^{*}(y|x)}{dy}\exp\big(\frac{1}{2\epsilon}\|x-y\|^{2})\propto \frac{d\pi^*(y|x)}{dy} \cdot \big[\mathcal{N}(y|x, \epsilon I)\big]^{-1}.
          \label{relation-f-phi}
    \end{equation}
    By comparing \eqref{relation-f-phi} with \eqref{schrodinger-vs-plan}, we see that $\exp\big(\frac{f^*(y)}{\epsilon}\big)$ indeed coincides with the Schrödinger potential $\phi^{*}(y)$. Formula \eqref{sb-drift} for the optimal drift follows from \cite[Proposition 4.1]{leonard2013survey}\footnote{The authors of \cite{leonard2013survey} consider SB with the \textit{reversible} Wiener prior $R$, i.e., the standard Brownian motion starting at the Lebesgue measure. They deal with $\inf_{T \in \mathcal{F}(\sP_0, \sP_1)}\KL{T}{R}$ which matches (up to an additive constant) our formulation \eqref{general-SB} for $\epsilon = 1$.
    Indeed, using the measure disintegration theorem, one can derive $\KL{T}{R} = - H(\sP_0) + \KL{T}{W^{\epsilon}}$. For other $\epsilon > 0$, the analogous equivalence holds true.
    }.
\end{proof}

\begin{proof}[Proof of Corollary \ref{cor:sb-closed-form-lse}]
 First, we prove that constructed $\mathbb{P}_{1}\defeq \pi^{*}_{1}$ actually has finite entropy. This is needed to ensure that the assumptions of \cite[Proposition 4.1]{leonard2013survey}. This proposition provides the formula for the optimal drift  \eqref{sb-drift} via the Schrödinger potential. We write
    \begin{eqnarray}
        0\leq \KL{\pi^{*}_{1}}{\mathcal{N}(\cdot|0,I)}=-H(\pi^{*}_{1})-\int_{\mathcal{Y}}\log \mathcal{N}(y|0,I)d\pi^{*}_{1}(y)=
        \nonumber
        \\
        -H(\pi^{*}_{1})+\frac{D}{2}\log (2\pi)+\frac{1}{2}\int_{\mathcal{Y}}\|y\|^{2}d\pi_{1}^{*}(y).
        \label{upper-bound-entropy}
    \end{eqnarray}
    From our Proposition \ref{prop:entropic-ot-solution-log-sum-exp} it follows that $\sP_1=\pi_{1}^{*}$ has finite second moment. Hence, the latter constant in \eqref{upper-bound-entropy} is finite. Therefore, $H(\pi^{*}_1)$ is upper bounded. To lower bound $H(\pi_{1}^{*})$, recall that each $\pi^{*}(\cdot|x)$ is a mixture of $N$ Gaussians (Proposition \ref{prop:entropic-ot-solution-log-sum-exp}) with ($x$-independent) covariances $\Sigma_{n}$. Thus, its density $\frac{d\pi^{*}(y|x)}{dy}$ is upper bounded by $\xi\defeq\max_{n}\big[(2\pi)^{-D/2}\big](\det\Sigma_{n})^{-1/2}>0$ which also means that
    \begin{eqnarray}
        \frac{d\pi^{*}_{1}(y)}{dy}=\int_{\mathcal{X}}\frac{d\pi^{*}(y|x)}{dy}d\pi^{*}_{0}(x)\leq 
        \int_{\mathcal{X}}\xi d\pi^{*}_{0}(x)\leq \xi.
        \nonumber
    \end{eqnarray}
    We conclude that
    \begin{equation}
        H(\pi^{*}_{1})=-\int \log \frac{d\pi^{*}_{1}(y)}{dy}d\pi^{*}_1(y)\geq -\int \log \xi d\pi^{*}_1(y)=-\log\xi,
    \end{equation}
    i.e., $H(\pi^{*}_{1})$ is lower-bounded as well.

    Having in mind our previous Corollary, we just substitute $\exp\big(\frac{f^{*}(y)}{\epsilon}\big)$ of LSE \eqref{eq:general-potential-form} potential $f^{*}$ as the Schrödinger potential $\phi^{*}(y)$ to \eqref{sb-drift}. We derive

    \begin{eqnarray}
        v^*(x, t) = \epsilon \nabla \log \int_{\mathbb{R}^{D}} \mathcal{N}(y|x, (1-t)\epsilon I) \varphi^*(y) dy =
        \nonumber 
        \\
        \epsilon \nabla \log \int_{\mathbb{R}^{D}} \mathcal{N}(y|x, (1-t)\epsilon I) \exp(\frac{f^*(y)}{\epsilon}) dy =
        \nonumber 
        \\
        \epsilon \nabla \log \int_{\mathbb{R}^{D}} \mathcal{N}(y|x, (1-t)\epsilon I) \exp(\frac{\epsilon \log \sum_{n=1}^N w_n \mathcal{Q}(y|b_n, \epsilon^{-1}A_n)}{\epsilon}) dy =
        \nonumber 
        \\
        \epsilon \nabla \log \sum_{n=1}^N w_n \int_{\mathbb{R}^{D}} \mathcal{N}(y|x, (1-t)\epsilon I)  \mathcal{Q}(y|b_n, \epsilon^{-1}A_n)) dy =
        \nonumber 
        \\
        \epsilon \nabla \log \sum_{n=1}^N w_n \int_{\mathbb{R}^{D}} \big(2\pi\epsilon (1-t)\big)^{-\frac{D}{2}}\exp(-(y-x)^T\frac{I}{2\epsilon(1-t)}(y-x)) \mathcal{Q}(y|b_n, \epsilon^{-1}A_n) dy =
        \nonumber 
        \\
        \epsilon \nabla \log \sum_{n=1}^N w_n \int_{\mathbb{R}^{D}} \exp(-(y-x)^T\frac{I}{2\epsilon(1-t)}(y-x))   \mathcal{Q}(y|b_n, \epsilon^{-1}A_n) dy +
        \nonumber
        \\
        \underbrace{\epsilon \nabla \log \big(\big(2\pi\epsilon (1-t)\big)^{-\frac{D}{2}}}_{=0}\big) =
        \nonumber 
        \\
        \epsilon \nabla \log \sum_{n=1}^N w_n \int_{\mathbb{R}^{D}} \exp(-(y-x)^T\frac{I}{2\epsilon(1-t)}(y-x))  \exp(-(y -b_n)^T\frac{A_n}{2\epsilon}(y - b_n)) dy =
        \nonumber
        \\
        \epsilon \nabla \log \sum_{n=1}^N w_n \int_{\mathbb{R}^{D}} \exp\big(-\frac{1}{2(1-t)}\{(y-x)^T\frac{I}{\epsilon}(y-x) + (y -b_n)^T\frac{\overbrace{(1-t)A_n}^{A_n^t}}{\epsilon}(y - b_n)\}\big) dy
        \nonumber
    \end{eqnarray}
    Next, we use \eqref{eq:quadratic-form-rearrangement} but with $A_n^t$ instead of $A_n$ and $\Sigma_n^t$ instead of $\Sigma_n$. Also, we denote ${\mu_n^t = (A_n^t+I)^{-1}(A_n^tb_n + x)}$:
    \begin{eqnarray}
        \epsilon \nabla \log \sum_{n=1}^N w_n \int_{\mathbb{R}^{D}} \exp\big(-\frac{1}{2(1-t)}\{(y-x)^T\frac{I}{\epsilon}(y-x) + (y -b_n)^T\frac{\overbrace{(1-t)A_n}^{A_n^t}}{\epsilon}(y - b_n)\}\big) dy = 
        \nonumber
        \\
        \epsilon \nabla \log \sum_{n=1}^N w_n \int_{\mathbb{R}^{D}} \!\!\exp\bigg(\!-\frac{1}{2(1-t)}\big\{(y - \mu_n^t)^T\left(\Sigma_n^t\right)^{-1}(y - \mu_n^t)\! +
        \nonumber
        \\
        \!(x - b_n)^T(\frac{I}{\epsilon} \!-\! \frac{\Sigma_n^t}{\epsilon^2})(x-b_n)\big\}\!\bigg) dy = 
        \nonumber
        \\
        \epsilon \nabla \log \sum_{n=1}^N \Big\{ w_n \exp\big(-\frac{1}{2}(x - b_n)^T \frac{\epsilon I - \Sigma_n^t}{\epsilon^2 (1-t)}(x-b_n)\big) 
        \nonumber
        \\
        \int_{\mathbb{R}^{D}} \exp\big(- \frac{1}{2}(y - \mu_n^t)^T\frac{\left(\Sigma_n^t\right)^{-1}}{(1-t)}(y - \mu_n^t)\big) dy \Big\} = 
        \nonumber 
        \\
        \epsilon \nabla \log \sum_{n=1}^N w_n \mathcal{Q}\big(x \big| b_n, \frac{\epsilon I - \Sigma_n^t}{\epsilon^2(1 - t)}\big) \int_{\mathbb{R}^{D}} \exp\big(-\frac{1}{2}(y - \mu_n^t)^T\frac{\left(\Sigma_n^t\right)^{-1}}{(1-t)}(y - \mu_n^t)\big) dy = 
        \nonumber 
        \\
        \epsilon \nabla \log \sum_{n=1}^N w_n \mathcal{Q}\big(x \big| b_n, \frac{\epsilon I -\Sigma_n^t}{\epsilon^2 (1 - t)}\big) \int_{\mathbb{R}^{D}} (2\pi)^{\frac{D}{2}}\det((1-t)\Sigma_n^t)^{\frac{1}{2}} \mathcal{N}(y|\mu_n^t, (1-t)\Sigma_n^t) dy = 
        \nonumber 
        \\
        \epsilon \nabla \log \sum_{n=1}^N w_n \mathcal{Q}\big(x\big| b_n, \frac{\epsilon I - \Sigma_n^t}{\epsilon^2 (1 - t)}\big) (2\pi(1-t))^{\frac{D}{2}}\det(\Sigma_n^t)^{\frac{1}{2}} \underbrace{\int_{\mathbb{R}^{D}} \mathcal{N}(y|\mu_n^t, (1-t)\Sigma_n^t) dy}_{=1} = 
        \nonumber
        \\
        \epsilon \nabla \log \sum_{n=1}^N w_n \mathcal{Q}\big(x\big| b_n, \frac{\epsilon I - \Sigma_n^t}{\epsilon^2 (1 - t)}\big) (2\pi(1-t))^{\frac{D}{2}}\det(\Sigma_n^t)^{\frac{1}{2}} = 
        \nonumber
        \\
        \epsilon \nabla \log \sum_{n=1}^N w_n \mathcal{Q}\big(x\big| b_n, \frac{\epsilon I - \Sigma_n^t}{\epsilon^2 (1 - t)}\big) \det(\Sigma_n^t)^{\frac{1}{2}} + \underbrace{\epsilon \nabla \log \big((2\pi(1-t))^{\frac{D}{2}}\big)}_{=0} = 
        \nonumber
        \\
        \epsilon \nabla \log \sum_{n=1}^N w_n \sqrt{\det(\Sigma_n^t)} \mathcal{Q}\big(x\big| b_n, \frac{\epsilon I - \Sigma_n^t}{\epsilon^2 (1 - t)}\big),
        \nonumber
    \end{eqnarray}
which finishes the proof.
\end{proof}

\section{Mixtures Benchmark Pairs: Details and Results}\label{sec:extra-mixtures-results}
\textbf{Parameters for constructing benchmark pairs.} In our benchmark pairs, we choose all their hyperparameters manually to make sure the constructed distributions $\sP_0, \sP_{1}$ are visually pleasant and distinguishable. As $\sP_0$, we always use the centered Gaussian whose covariance matrix is $0.25I$. We use LSE function \eqref{eq:general-potential-form} with $N=5$ for constructing the distribution $\sP_1$. In each setup, all $A_n$ are the same and given in Table~\ref{tbl:mixture-bench-param-A}. We pick $w_n$ such that $\gamma_n = \frac{1}{5} \mathcal{N}(x| b_n, (\frac{1}{\epsilon
}I - \frac{1}{\epsilon^2}\Sigma_n)^{-1})$. We sample $b_n$ randomly from a uniform distribution on a sphere with the radius $R=5$.

\begin{table*}[h]
\begin{center}
\scriptsize
\begin{tabular}{ |c|c|c|c|c| }
\hline
  & $D=2$ & $D=16$ & $D=64$ & $D=128$ \\ 
 \hline
 $\epsilon=0.1$  & $\frac{1}{16} I$ & $\frac{1}{16} I$ & $\frac{1}{16} I$ & $\frac{1}{16} I$ \\
\hline
 $\epsilon=1$  & $\frac{1}{16} I$ & $\frac{1}{16} I$ & $\frac{1}{16} I$ & $\frac{1}{16} I$ \\
\hline
 $\epsilon=10$  & $\frac{9}{40} I$ & $\frac{1}{100} I$ & $\frac{1}{100} I$ & $\frac{1}{100} I$ \\
\hline
\end{tabular}
\captionsetup{justification=centering}
 \caption{Matrices $A_n$ that we use to construct our mixtures benchmark pairs.}\label{tbl:mixture-bench-param-A}
 \end{center}
\end{table*}

\textbf{Evaluation details.}
For computing $\text{B}\mathbb{W}_{2}^{2}\text{-UVP}(\widehat{\pi}_1, \sP_1)$, we use $10^5$ random samples from $\sP_1$ and $10^5$ random samples from learned distribution $\widehat{\pi}_1$. For computing $\text{c}\text{B}\mathbb{W}_{2}^{2}\text{-UVP}\big(\widehat{\pi},\pi^{*}\big)$, we use the hold-out test set containing $1000$ samples $x \sim \sP_0$. We compute the expectation and covariance matrices of $\pi^{*}(\cdot|x)$ analytically (Proposition \ref{prop:entropic-ot-solution-log-sum-exp}) and we estimate the expectation and covariance matrix of $\widehat{\pi}(\cdot|x)$ by using $10^3$ samples. We present results of evaluation in Table~\ref{tbl:bw-uvp-target} and Table~\ref{tbl:bw-uvp-plan}.

We present an additional \textit{trivial} baseline for the conditional metric $\text{c}\text{B}\mathbb{W}_{2}^{2}\text{-UVP}\big(\widehat{\pi},\pi^{*}\big)$, which is given by the independent plan $\sP_0 \times \sP_1$. We compare other methods with this baseline in Table~\ref{tbl:bw-uvp-plan}.

\begin{table*}[h]\centering
\scriptsize
\setlength{\tabcolsep}{3pt}
\begin{tabular}{@{}c*{12}{S[table-format=0]}@{}}
\toprule
& \multicolumn{4}{c}{$\epsilon\!=\!0.1$} & \multicolumn{4}{c}{$\epsilon\!=\!1$} & \multicolumn{4}{c}{$\epsilon\!=\!10$}\\
\cmidrule(lr){2-5} \cmidrule(lr){6-9} \cmidrule(l){10-13}
& {$D\!=\!2$} & {$D\!=\!16$} & {$D\!=\!64$} & {$D\!=\!128$} & {$D\!=\!2$} & {$D\!=\!16$} & {$D\!=\!64$} & {$D\!=\!128$} & {$D\!=\!2$} & {$D\!=\!16$} & {$D\!=\!64$} & {$D\!=\!128$} \\
\midrule  
$\lfloor\textbf{LSOT}\rceil$ & \text{-} & \text{-} & \text{-} & \text{-} & \text{-} & \text{-} & \text{-} & \text{-} & \text{-} & \text{-} & \text{-} & \text{-} \\ 
$\lfloor\textbf{SCONES}\rceil$ & \text{-} & \text{-} & \text{-} & \text{-} & \textcolor{red}{1.06 }& \textcolor{red}{4.24 }& \textcolor{red}{6.67 }& \textcolor{red}{11.54} & \textcolor{red}{1.11} & \textcolor{red}{2.98} & \textcolor{red}{1.33} & \textcolor{red}{7.89} \\  
$\lfloor\textbf{NOT}\rceil$  & \textcolor{ForestGreen}{0.016} & \textcolor{orange}{0.63} & \textcolor{red}{1.53} & \textcolor{red}{2.62} & \textcolor{ForestGreen}{0.08} & \textcolor{red}{1.13 }& \textcolor{red}{1.62 }& \textcolor{red}{2.62} & \textcolor{ForestGreen}{0.225} & \textcolor{red}{2.603} & \textcolor{red}{1.872} & \textcolor{red}{6.12 }\\ 
$\lfloor\textbf{EgNOT}\rceil$ & \textcolor{ForestGreen}{0.09} & \textcolor{ForestGreen}{0.31} & \textcolor{orange}{0.88} & \textcolor{ForestGreen}{0.22} & \textcolor{ForestGreen}{0.46} & \textcolor{ForestGreen}{0.3 }& \textcolor{orange}{0.85} & \textcolor{ForestGreen}{0.12} & \textcolor{ForestGreen}{0.077} & \textcolor{ForestGreen}{0.02} & \textcolor{ForestGreen}{0.15} & \textcolor{ForestGreen}{0.23} \\ 
$\lfloor\textbf{ENOT}\rceil$ & \textcolor{ForestGreen}{0.2} & \textcolor{red}{2.9} & \textcolor{red}{1.8} & \textcolor{red}{1.4} & \textcolor{ForestGreen}{0.22} & \textcolor{ForestGreen}{0.4} & \textcolor{red}{7.8} & \textcolor{red}{29} & \textcolor{red}{1.2} & \textcolor{red}{2} & \textcolor{red}{18.9} & \textcolor{red}{28} \\  
$\lfloor\textbf{MLE-SB}\rceil$ & \textcolor{ForestGreen}{0.01} & \textcolor{ForestGreen}{0.14} & \textcolor{orange}{0.97} & \textcolor{red}{2.08}  & \textcolor{ForestGreen}{0.005} & \textcolor{ForestGreen}{0.09} & \textcolor{orange}{0.56} & \textcolor{red}{1.46} & \textcolor{ForestGreen}{0.01} &  \textcolor{red}{1.02} &  \textcolor{red}{6.65} &  \textcolor{red}{23.4} \\  
$\lfloor\textbf{DiffSB}\rceil$ & \textcolor{red}{2.88} & \textcolor{red}{2.81} & \textcolor{red}{153.22} & \textcolor{red}{232.67} & \textcolor{orange}{0.87} & \textcolor{orange}{0.99} & \textcolor{red}{1.12} & \textcolor{red}{1.56} &  \text{-} & \text{-} & \text{-} & \text{-} \\   
$\lfloor\textbf{FB-SDE-A}\rceil$ & \textcolor{red}{2.37} & \textcolor{red}{2.55} & \textcolor{red}{68.19} & \textcolor{red}{27.11} & \textcolor{orange}{0.6} & \textcolor{orange}{0.63} & \textcolor{orange}{0.65} & \textcolor{orange}{0.71} & \text{-} & \text{-} & \text{-} & \text{-} \\  
$\lfloor\textbf{FB-SDE-J}\rceil$ & \textcolor{ForestGreen}{0.03} & \textcolor{ForestGreen}{0.05} & \textcolor{ForestGreen}{0.25} & \textcolor{red}{2.96} & \textcolor{ForestGreen}{0.07} & \textcolor{ForestGreen}{0.13} & \textcolor{red}{1.52} & \textcolor{ForestGreen}{0.48} & \text{-} & \text{-} & \text{-} & \text{-} \\  
\bottomrule
\end{tabular}
\captionsetup{justification=centering}
\caption{Comparisons of $\text{B}\mathbb{W}_{2}^{2}\text{-UVP} \downarrow$ (\%) between the target $\sP_1$ and learned marginal  $\pi_1$. Colors indicate the metric value: $\text{BW}_2^2\text{-UVP} \leq \textcolor{ForestGreen}{0.5}, \text{BW}_2^2\text{-UVP} \in \textcolor{orange}{(0.5, 1]}, \text{BW}_2^2\text{-UVP} > \textcolor{red}{1.0}$.}\label{tbl:bw-uvp-target}
\end{table*}

\begin{table*}[h]\centering
\scriptsize
\setlength{\tabcolsep}{3pt}
\begin{tabular}{@{}c*{12}{S[table-format=0]}@{}}
\toprule
& \multicolumn{4}{c}{$\epsilon\!=\!0.1$} & \multicolumn{4}{c}{$\epsilon\!=\!1$} & \multicolumn{4}{c}{$\epsilon\!=\!10$}\\
\cmidrule(lr){2-5} \cmidrule(lr){6-9} \cmidrule(l){10-13}
& {$D\!=\!2$} & {$D\!=\!16$} & {$D\!=\!64$} & {$D\!=\!128$} & {$D\!=\!2$} & {$D\!=\!16$} & {$D\!=\!64$} & {$D\!=\!128$} & {$D\!=\!2$} & {$D\!=\!16$} & {$D\!=\!64$} & {$D\!=\!128$} \\
\midrule  
$\lfloor\textbf{LSOT}\rceil$  &  \text{-}  &  \text{-}  &  \text{-}  &  \text{-}  &  \text{-}  &  \text{-} &   \text{-}  &  \text{-}  &  \text{-}  &  \text{-}  &  \text{-}  &  \text{-} \\ 
$\lfloor\textbf{SCONES}\rceil$ &  \text{-}  &  \text{-}  &  \text{-}  &  \text{-}  & \textcolor{orange}{34.88} & \textcolor{orange}{71.34} & \textcolor{orange}{59.12} & \textcolor{red}{136.44} & \textcolor{orange}{32.9} & \textcolor{red}{50.84 }& \textcolor{red}{60.44 }&  \textcolor{red}{52.11 }\\
$\lfloor\textbf{NOT}\rceil$  & \textcolor{ForestGreen}{1.94 }& \textcolor{ForestGreen}{13.67} & \textcolor{ForestGreen}{11.74} & \textcolor{ForestGreen}{11.4 }& \textcolor{ForestGreen}{4.77 }& \textcolor{orange}{23.27} & \textcolor{red}{41.75} & \textcolor{orange}{26.56} & \textcolor{red}{2.86}& \textcolor{red}{4.57} & \textcolor{red}{3.41} &  \textcolor{red}{6.56} \\
$\lfloor\textbf{EgNOT}\rceil$ &\textcolor{red}{129.8}&\textcolor{orange}{75.2} &\textcolor{orange}{60.4} &\textcolor{orange}{43.2} &\textcolor{red}{80.4} &\textcolor{red}{74.4} &\textcolor{red}{63.8} &\textcolor{red}{53.2} &\textcolor{red}{4.14} &\textcolor{red}{2.64} & \textcolor{red}{2.36} & \textcolor{red}{1.31} \\ 
$\lfloor\textbf{ENOT}\rceil$ & \textcolor{ForestGreen}{3.64} & \textcolor{ForestGreen}{22} & \textcolor{ForestGreen}{13.6} & \textcolor{ForestGreen}{12.6} & \textcolor{ForestGreen}{1.04} & \textcolor{ForestGreen}{9.4} & \textcolor{orange}{21.6} & \textcolor{red}{48} & \textcolor{orange}{1.4} & \textcolor{red}{2.4} & \textcolor{red}{19.6} & \textcolor{red}{30} \\
$\lfloor\textbf{MLE-SB}\rceil$ & \textcolor{ForestGreen}{4.57} & \textcolor{ForestGreen}{16.12} & \textcolor{ForestGreen}{16.1} & \textcolor{ForestGreen}{17.81} & \textcolor{ForestGreen}{4.13} & \textcolor{ForestGreen}{9.08} & \textcolor{orange}{18.05} & \textcolor{orange}{15.226} & \textcolor{orange}{1.61} & \textcolor{red}{1.27} & \textcolor{red}{3.9} & \textcolor{red}{12.9} \\  
$\lfloor\textbf{DiffSB}\rceil$ &\textcolor{orange}{73.54}&\textcolor{orange}{59.7} & \textcolor{red}{1386.4}&\textcolor{red}{1683.6}&\textcolor{orange}{33.76} &\textcolor{red}{70.86} &\textcolor{red}{53.42} &\textcolor{red}{156.46} & \text{-} & \text{-} & \text{-} & \text{-} \\   
$\lfloor\textbf{FB-SDE-A}\rceil$ & \textcolor{red}{86.4} & \textcolor{orange}{53.2} & \textcolor{red}{1156.82} &\textcolor{red}{1566.44} & \textcolor{orange}{30.62} & \textcolor{red}{63.48} & \textcolor{orange}{34.84} & \textcolor{red}{131.72} & \text{-} & \text{-} & \text{-} & \text{-} \\  
$\lfloor\textbf{FB-SDE-J}\rceil$ & \textcolor{orange}{51.34} & \textcolor{orange}{89.16} & \textcolor{red}{119.32} &\textcolor{red}{173.96} & \textcolor{orange}{29.34} & \textcolor{red}{69.2} & \textcolor{red}{155.14} & \textcolor{red}{177.52} & \text{-} & \text{-} & \text{-} & \text{-} \\  
\hline
Independent &166.0&152.0&126.0&110.0&86.0&80.0&72.0&60.0&4.2&2.52&2.26& 2.4 \\ 
\bottomrule
\end{tabular}
\captionsetup{justification=centering}
\caption{Comparisons of $\text{cB}\mathbb{W}_{2}^{2}\text{-UVP}\downarrow$ (\%) between the optimal plan $\pi^*$ and the learned plan  $\widehat{\pi}$. \textbf{Colors} indicate the ratio of the metric to the \textit{independent baseline} metric: \protect\linebreak
\text{ratio} $\leq$ \textcolor{ForestGreen}{0.2}, \text{ratio} $\in$ \textcolor{orange}{(0.2, 0.5)}, \text{ratio} $>$ \textcolor{red}{0.5}.}\label{tbl:bw-uvp-plan}
\end{table*}

\textbf{Colors for the Table~\ref{table-metrics-results}.}  To assign a color for the metric $\text{B}\mathbb{W}_{2}^{2}\text{-UVP}$ and $\text{c}\text{B}\mathbb{W}_{2}^{2}\text{-UVP}$ for each $\epsilon$ in the Table~\ref{table-metrics-results}, we use the following rule: we assign the rank $1$ if a method's metric for a given dimension $D$ has the color \textcolor{ForestGreen}{green}, the rank $2$ if a method's metric $\text{B}\mathbb{W}_{2}^{2}\text{-UVP}$ has the color \textcolor{orange}{orange} and the rank $3$ if a method's metric $\text{B}\mathbb{W}_{2}^{2}\text{-UVP}$ has the color \textcolor{red}{red}. To get the average rank, we take the mean of $4$ ranks obtained for each dimension $D$ and round it ($1.5$ and $2.5$ are rounded to $1$ and $2$ respectively).  

\textbf{Extra qualitative results of EOT/SB solvers}. In Figure~\ref{fig:qualitative-eps_0} and Figure~\ref{fig:qualitative-eps_10}, we present the additional qualitative comparison of solvers on our mixtures benchmark pairs in $D=16$ with $\epsilon\in \{0.1, 10\}$. The figures are designed similarly to Figure \ref{fig:qualitative-1} for $(D,\epsilon)=(16,1)$ in the main text. Note that case $\epsilon=10$ (Figure \ref{fig:qualitative-eps_10}) is extremely challenging; only $\lfloor \textbf{EgNOT}\rceil$ provides more-or-less reasonable results.

\textbf{Computational complexity}. Sampling from $\mathbb{P}_0$ is lightspeed as it is just sampling a Normal noise. Sampling from $\mathbb{P}_{1}$ is also fast, as it is the Gaussian mixture (Proposition \ref{prop:entropic-ot-solution-log-sum-exp}).

\begin{figure}[!t]\vspace{-5mm}
\begin{subfigure}[b]{1\linewidth}
\hspace{55.5mm}
\includegraphics[width=0.4\linewidth]{pics/legend.png}
\label{fig:map-legend_eps_0}
\end{subfigure}

\hspace{-4mm}
\begin{subfigure}[b]{0.21\linewidth}
\centering
\includegraphics[width=\linewidth]{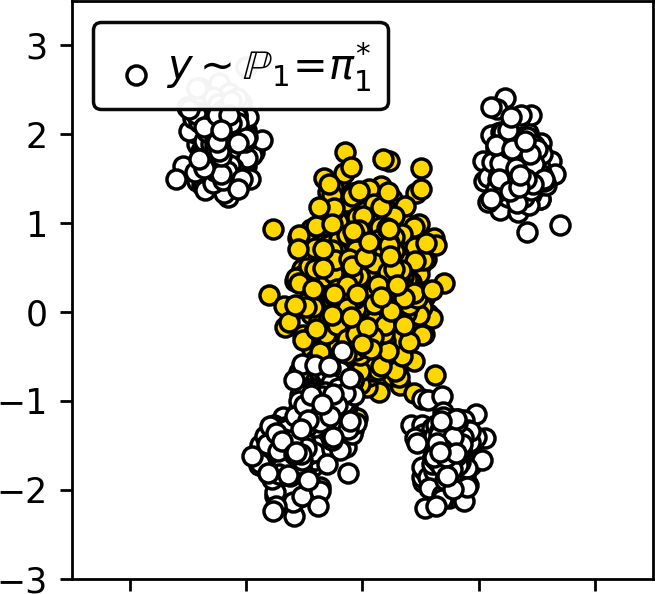}
\caption{\centering \small \textbf{Input} and \textbf{target}.}

\vspace{30mm}
\label{fig:map-input_eps_0}
\end{subfigure}
\hspace{1mm}\textbf{\vrule}\hspace{1mm}
\begin{subfigure}[b]{0.19\linewidth}
\centering
\includegraphics[width=\linewidth]{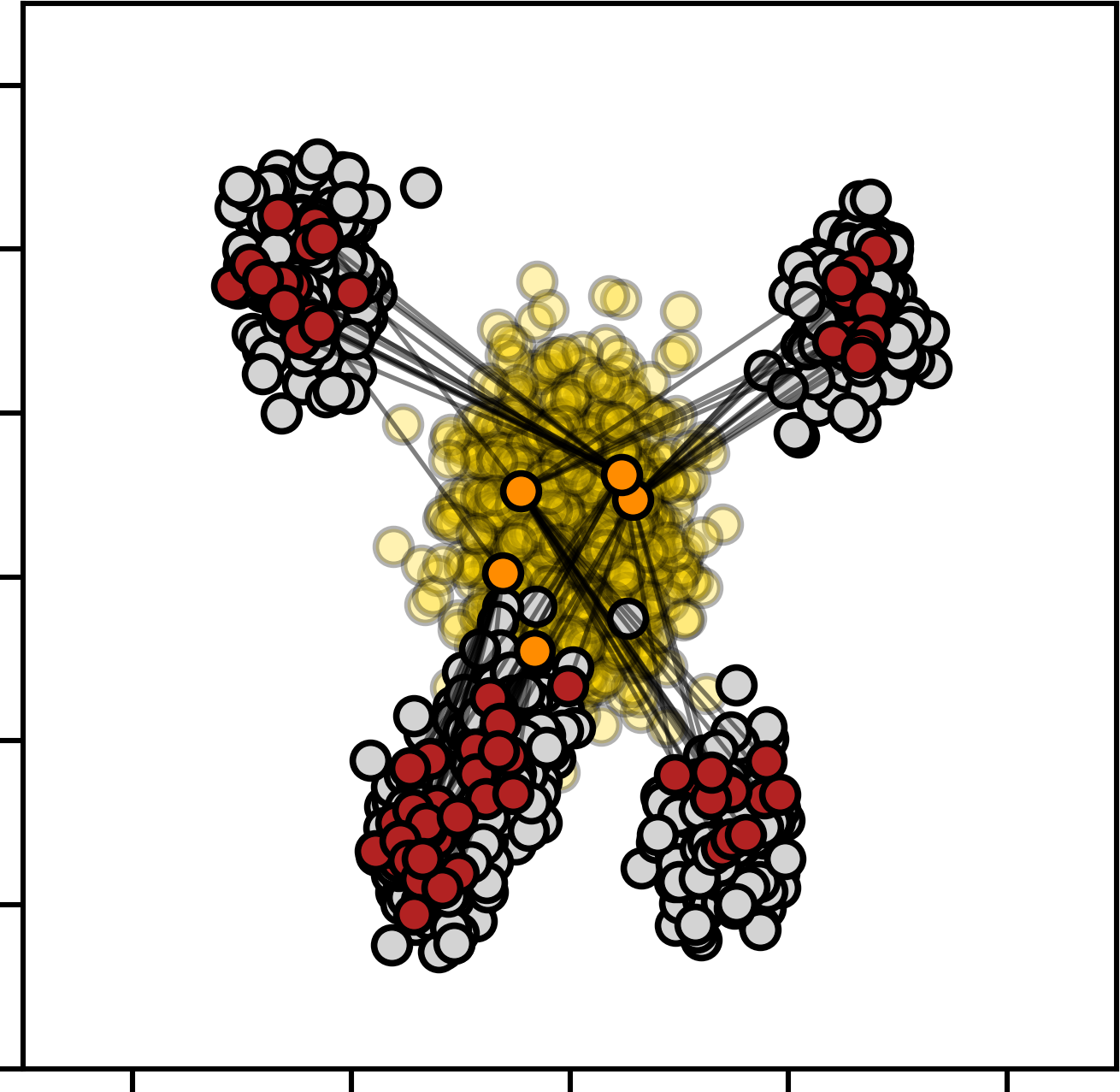}
\caption{\centering \small $\lfloor \textbf{MLE-SB}\rceil$.}

\vspace{30mm}
\label{fig:map-lsot_eps_0}
\end{subfigure}
\begin{subfigure}[b]{0.19\linewidth}
\centering
\includegraphics[width=\linewidth]{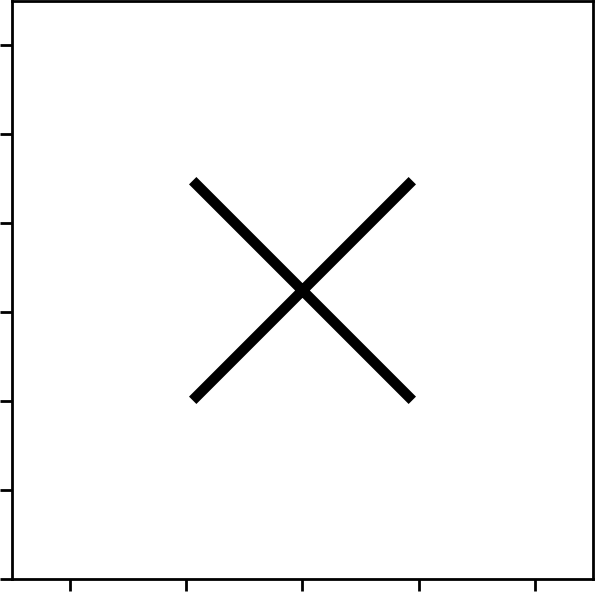}
\caption{\centering \small$\lfloor \textbf{SCONES}\rceil$.}

\vspace{30mm}
\label{fig:map-scones_eps_0}
\end{subfigure}
\begin{subfigure}[b]{0.19\linewidth}
\centering
\includegraphics[width=\linewidth]{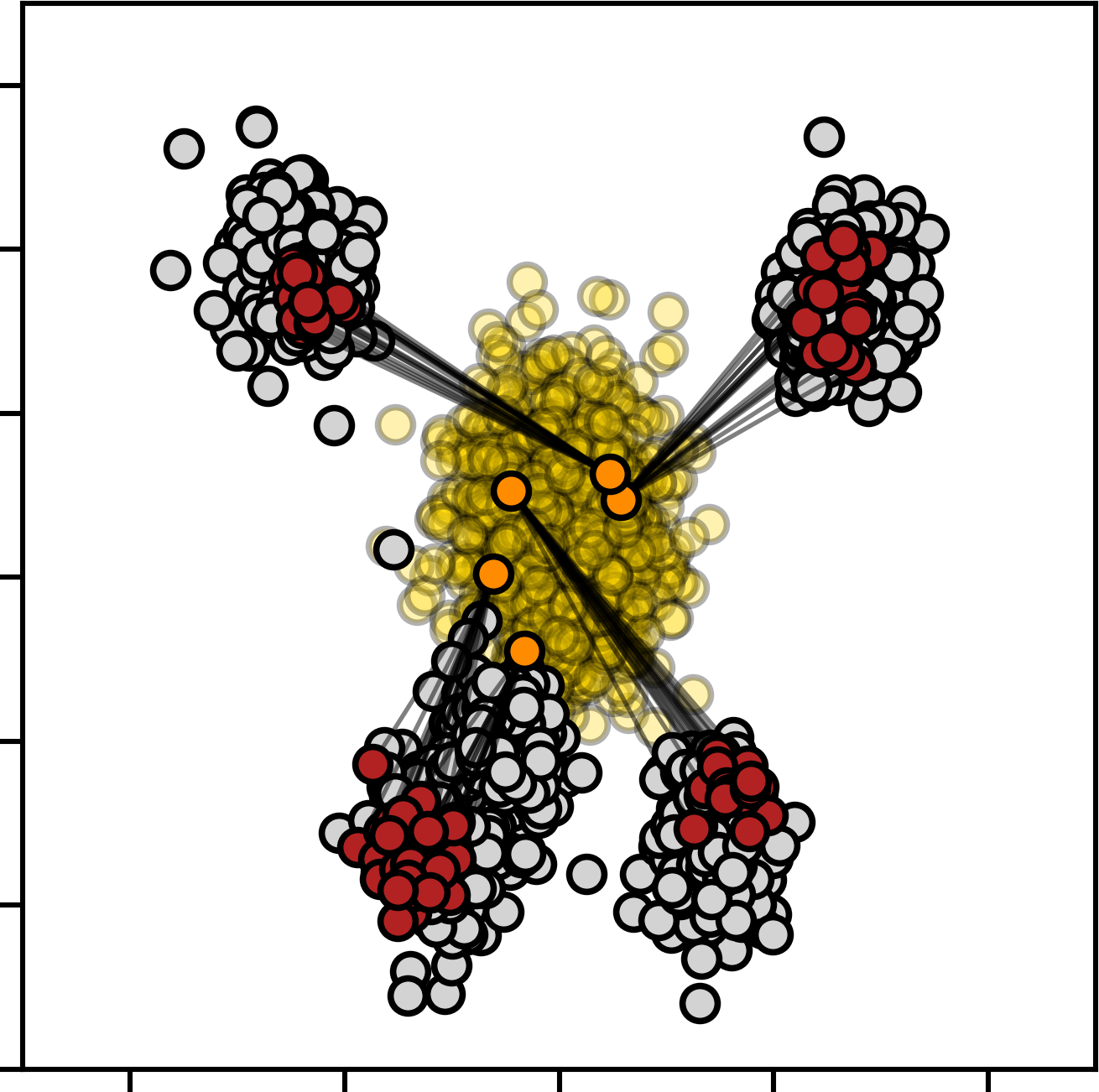}
\caption{\centering \small$\lfloor \textbf{NOT}\rceil$.}

\vspace{30mm}
\label{fig:map-flow_eps_0}
\end{subfigure}
\begin{subfigure}[b]{0.19\linewidth}
\centering
\includegraphics[width=\linewidth]{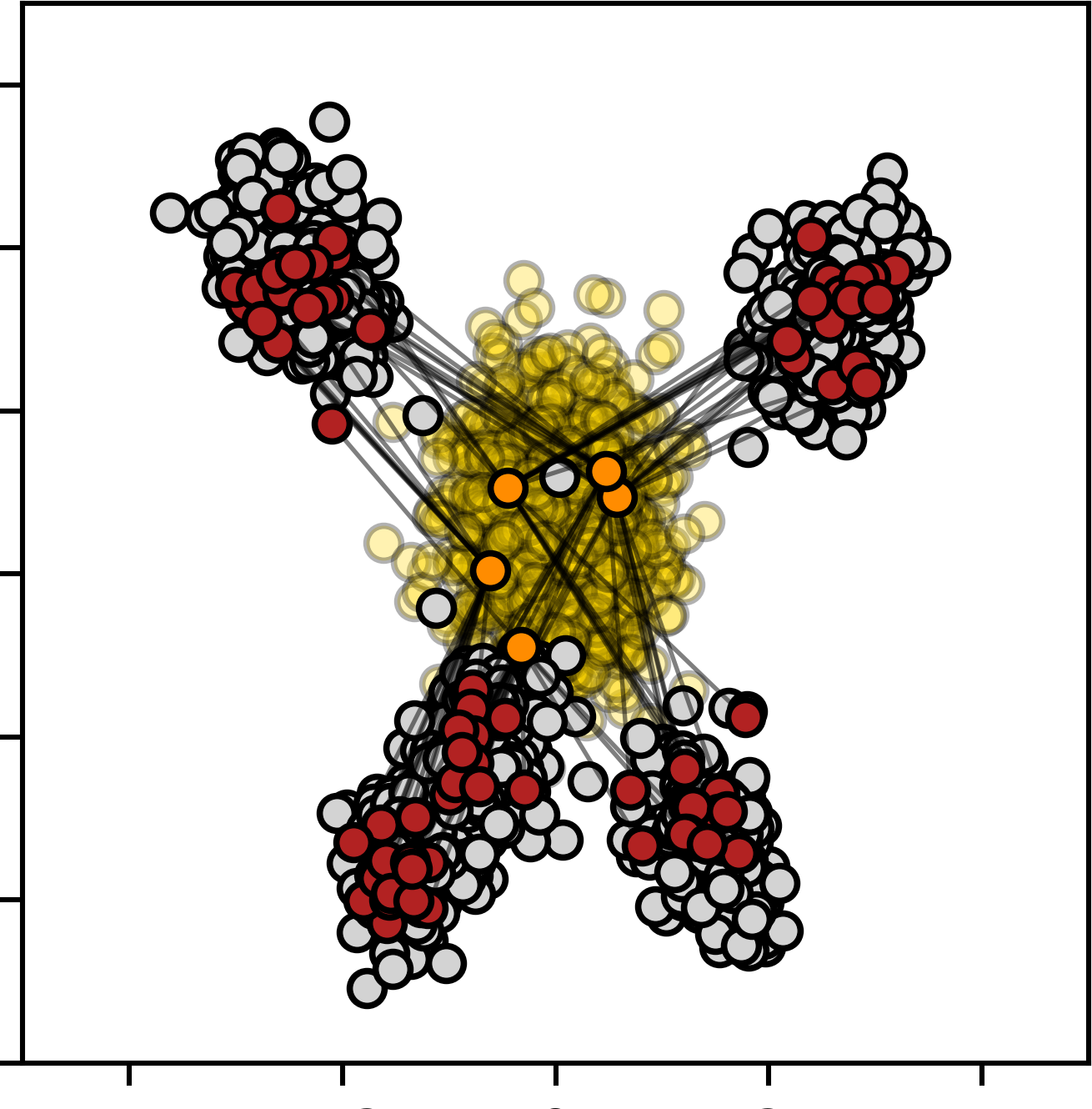}
\caption{\centering \small$\lfloor \textbf{EgNOT}\rceil$.}

\vspace{30mm}
\label{fig:map-egnot_eps_0}
\end{subfigure}\vspace{2mm}

\hspace{-4mm}
\begin{subfigure}[b]{0.21\linewidth}
\vspace{-31mm}
\centering
\includegraphics[width=\linewidth]{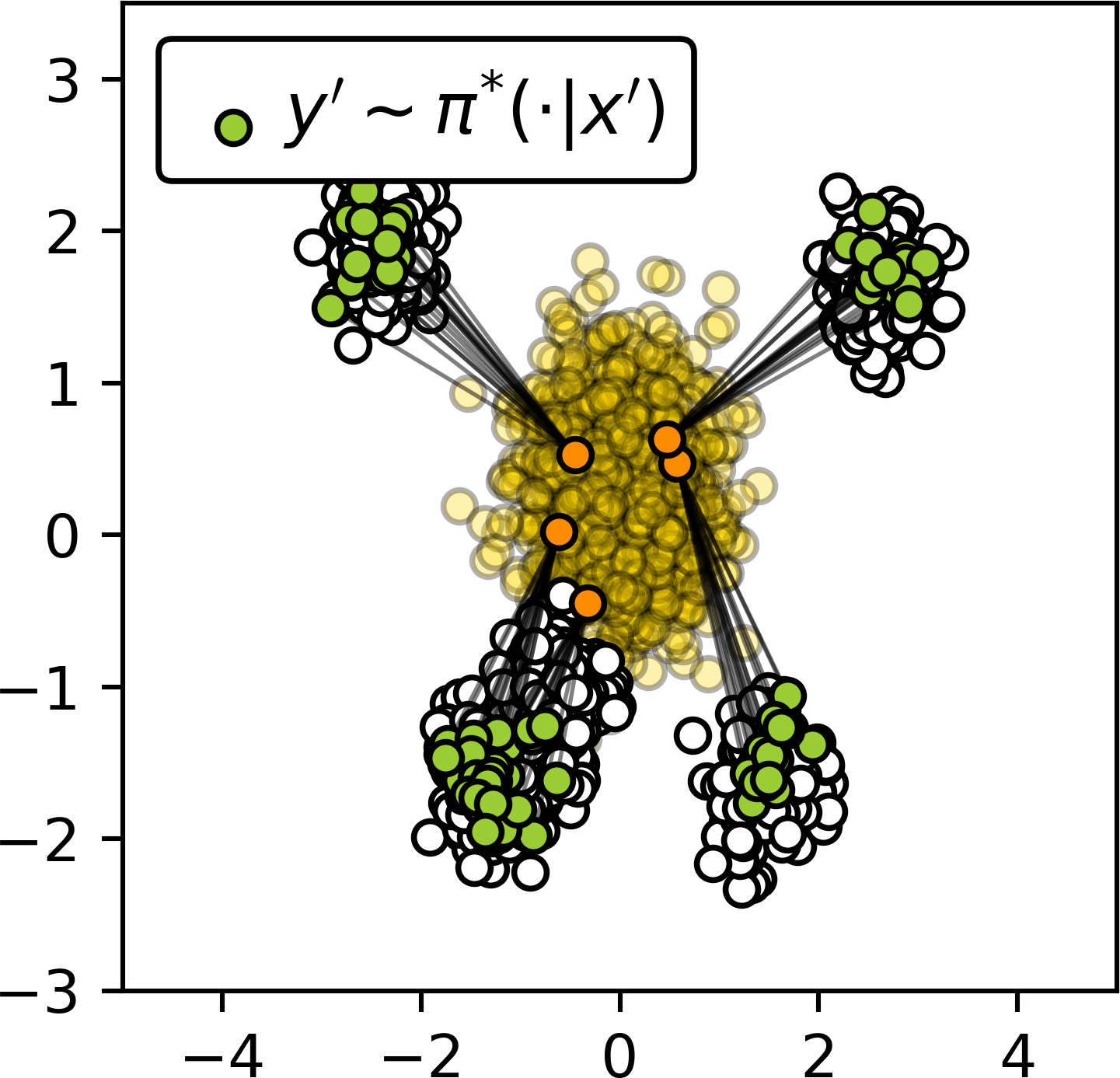}
\caption{\centering \small \textbf{True} EOT plan $\pi^{*}$.}
\label{fig:map-gt_eps_0}
\end{subfigure}
\hspace{1mm}{\color{white}\vrule}\hspace{1mm}
\begin{subfigure}[b]{0.19\linewidth}
\vspace{-31mm}
\centering
\includegraphics[width=\linewidth]{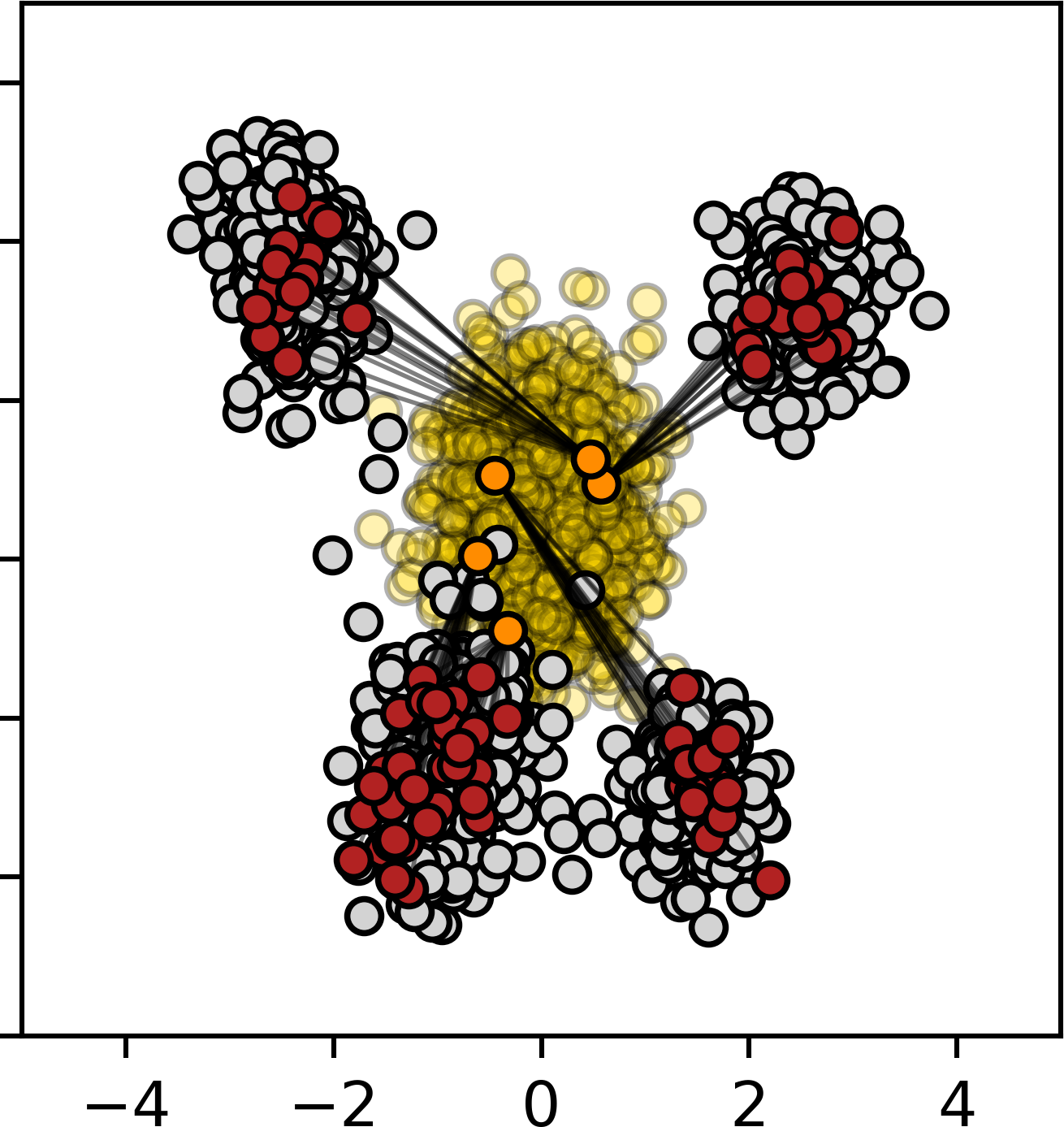}
\caption{\centering \small $\lfloor \textbf{ENOT}\rceil$.}
\label{fig:map-enot_eps_0}
\end{subfigure}
\begin{subfigure}[b]{0.19\linewidth}
\vspace{-31mm}
\centering
\includegraphics[width=\linewidth]{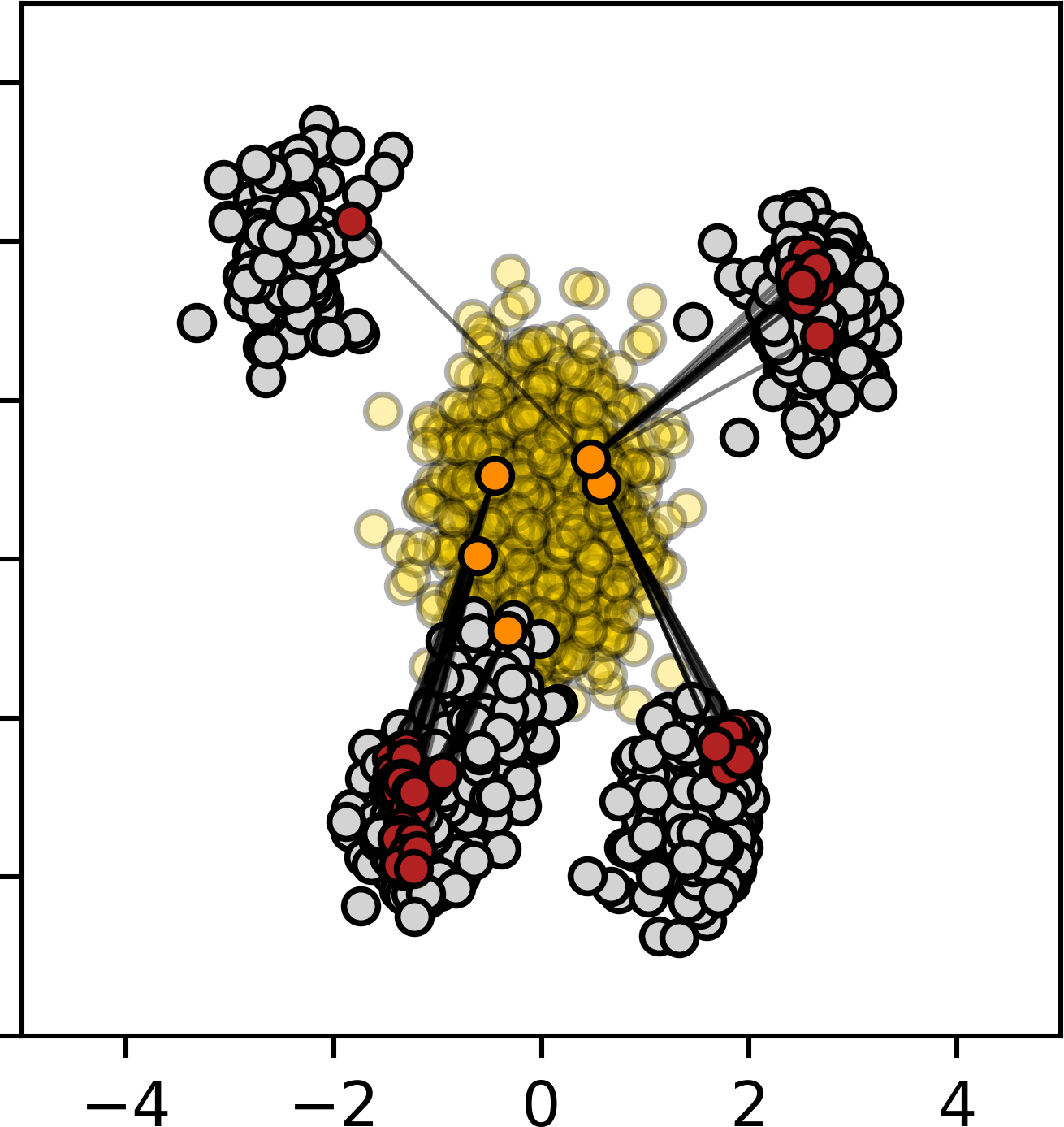}
\caption{\centering \small $\lfloor \textbf{DiffSB}\rceil$.}
\label{fig:map-bortoli_eps_0}
\end{subfigure}
\begin{subfigure}[b]{0.19\linewidth}
\vspace{-31mm}
\centering
\includegraphics[width=\linewidth]{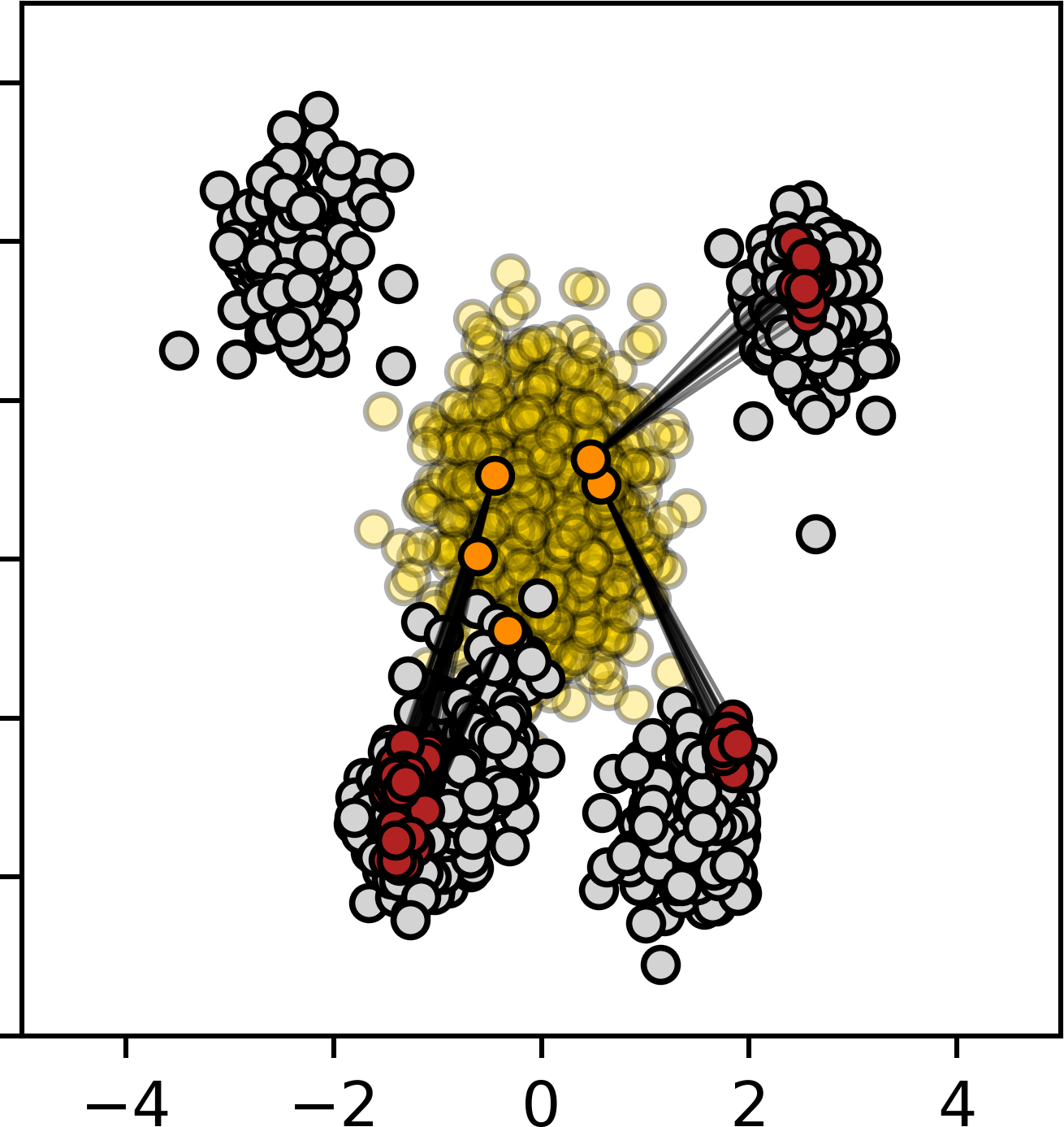}
\caption{\centering \small $\lfloor \textbf{FB-SDE-A}\rceil$.}
\label{fig:map-alter_eps_0}
\end{subfigure}
\begin{subfigure}[b]{0.19\linewidth}
\vspace{-31mm}
\centering
\includegraphics[width=\linewidth]{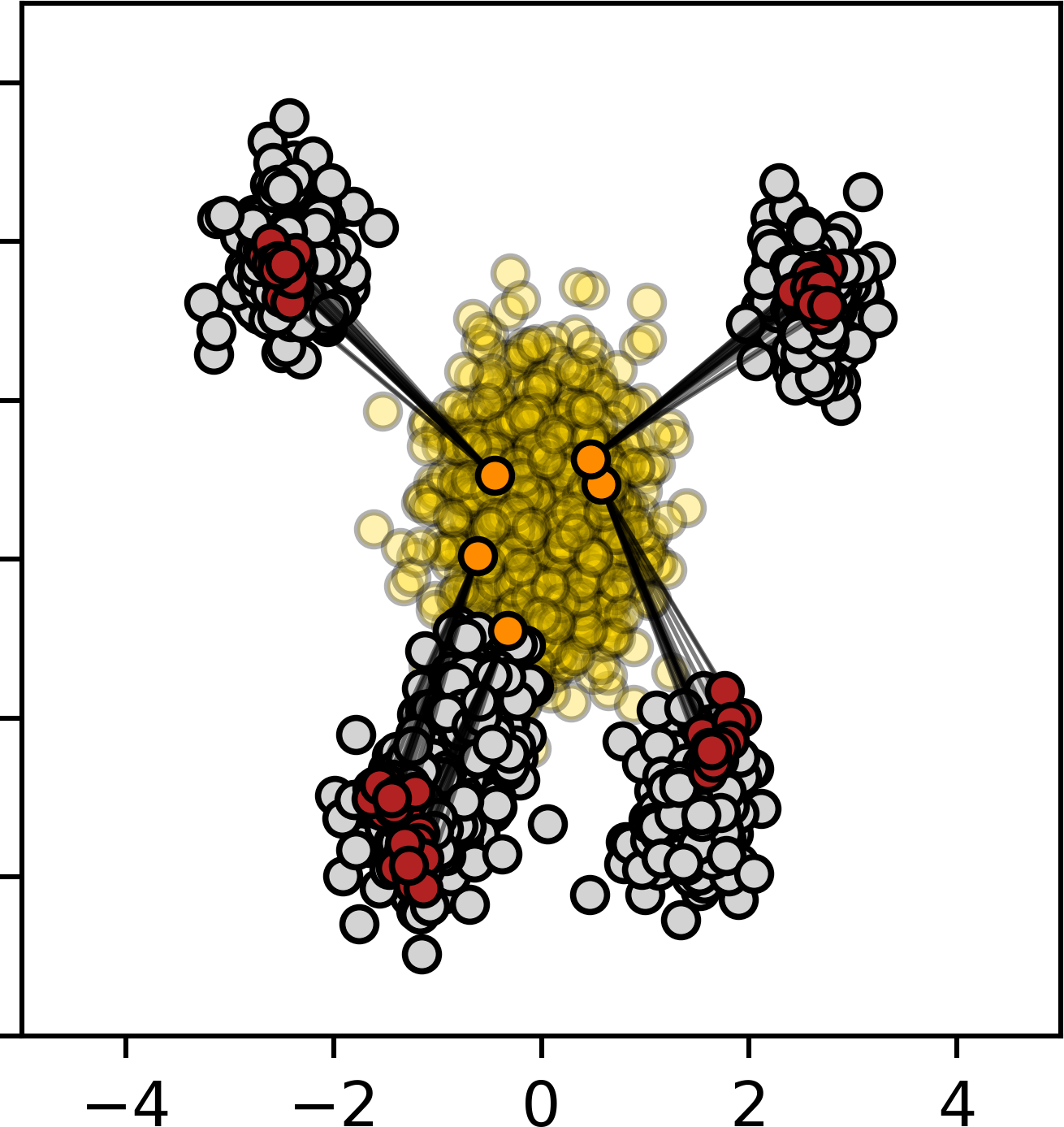}
\caption{\centering \small $\lfloor \textbf{FB-SDE-J}\rceil$.}
\label{fig:map-joint_eps_0}
\end{subfigure}\vspace{-1mm}
\caption{\centering Qualitative results of EOT/SB solvers on our mixtures benchmark pair with $(D,\epsilon)=(16,0.1)$. The distributions are visualized using $2$ PCA components of target distr. $\mathbb{P}_{1}$.\protect}
\label{fig:qualitative-eps_0}
\end{figure}

\begin{figure}[!t]\vspace{-5mm}
\begin{subfigure}[b]{1\linewidth}
\hspace{55.5mm}
\includegraphics[width=0.4\linewidth]{pics/legend.png}
\label{fig:map-legend_eps_10}
\end{subfigure}

\hspace{-4mm}
\begin{subfigure}[b]{0.21\linewidth}
\centering
\includegraphics[width=\linewidth]{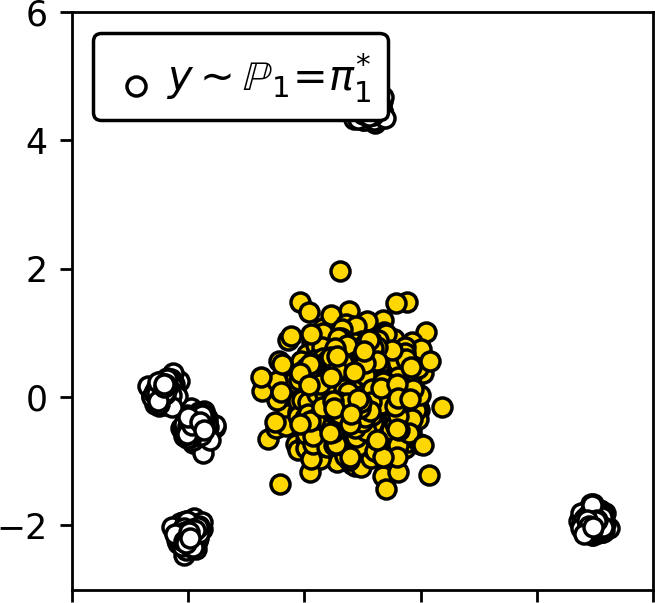}
\caption{\centering \small \textbf{Input} and \textbf{target}.}

\vspace{30mm}
\label{fig:map-input_eps_10}
\end{subfigure}
\hspace{1mm}\textbf{\vrule}\hspace{1mm}
\begin{subfigure}[b]{0.19\linewidth}
\centering
\includegraphics[width=\linewidth]{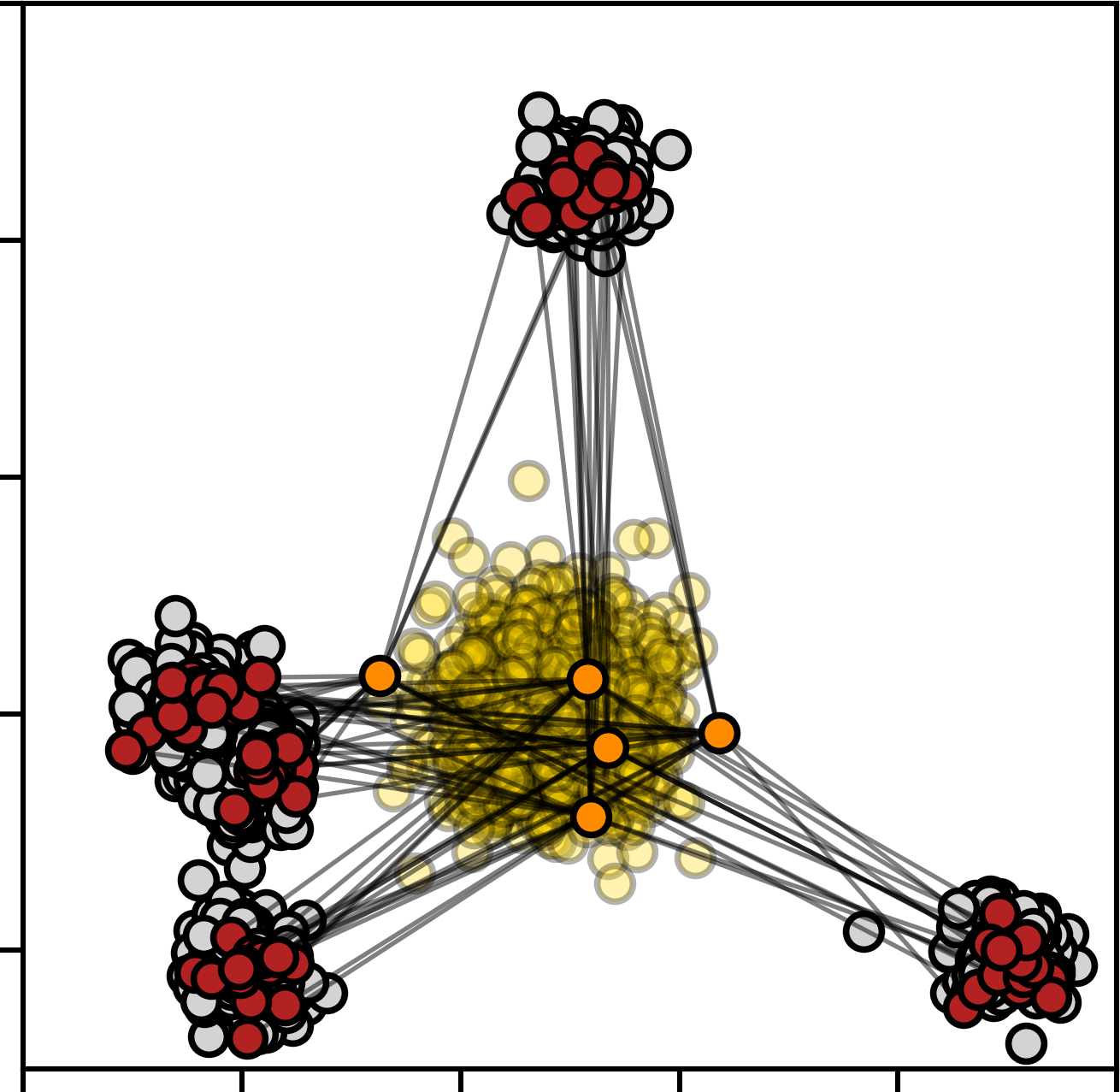}
\caption{\centering \small $\lfloor \textbf{MLE-SB}\rceil$.}

\vspace{30mm}
\label{fig:map-lsot_eps_10}
\end{subfigure}
\begin{subfigure}[b]{0.19\linewidth}
\centering
\includegraphics[width=\linewidth]{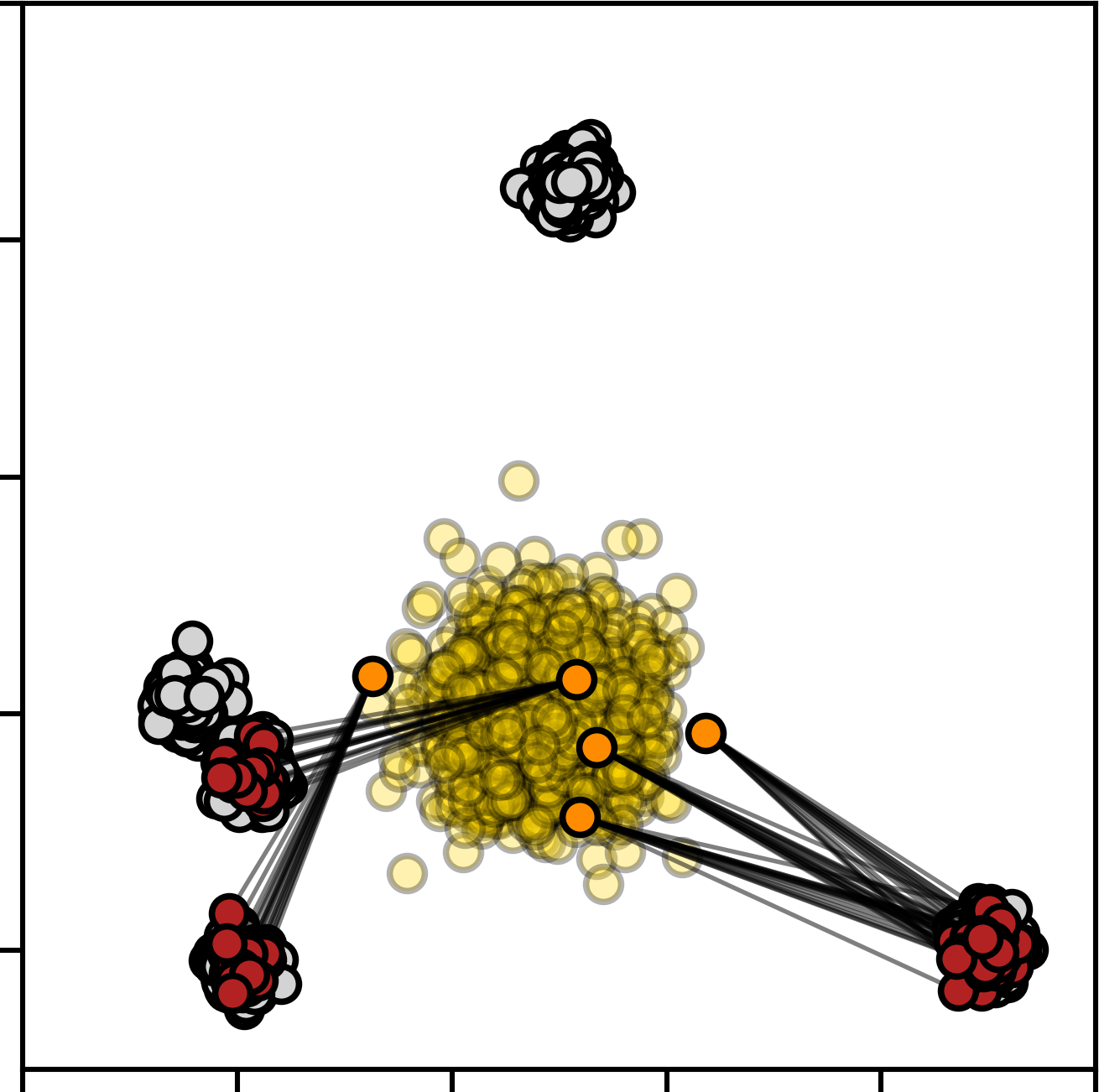}
\caption{\centering \small$\lfloor \textbf{SCONES}\rceil$.}

\vspace{30mm}
\label{fig:map-scones_eps_10}
\end{subfigure}
\begin{subfigure}[b]{0.19\linewidth}
\centering
\includegraphics[width=\linewidth]{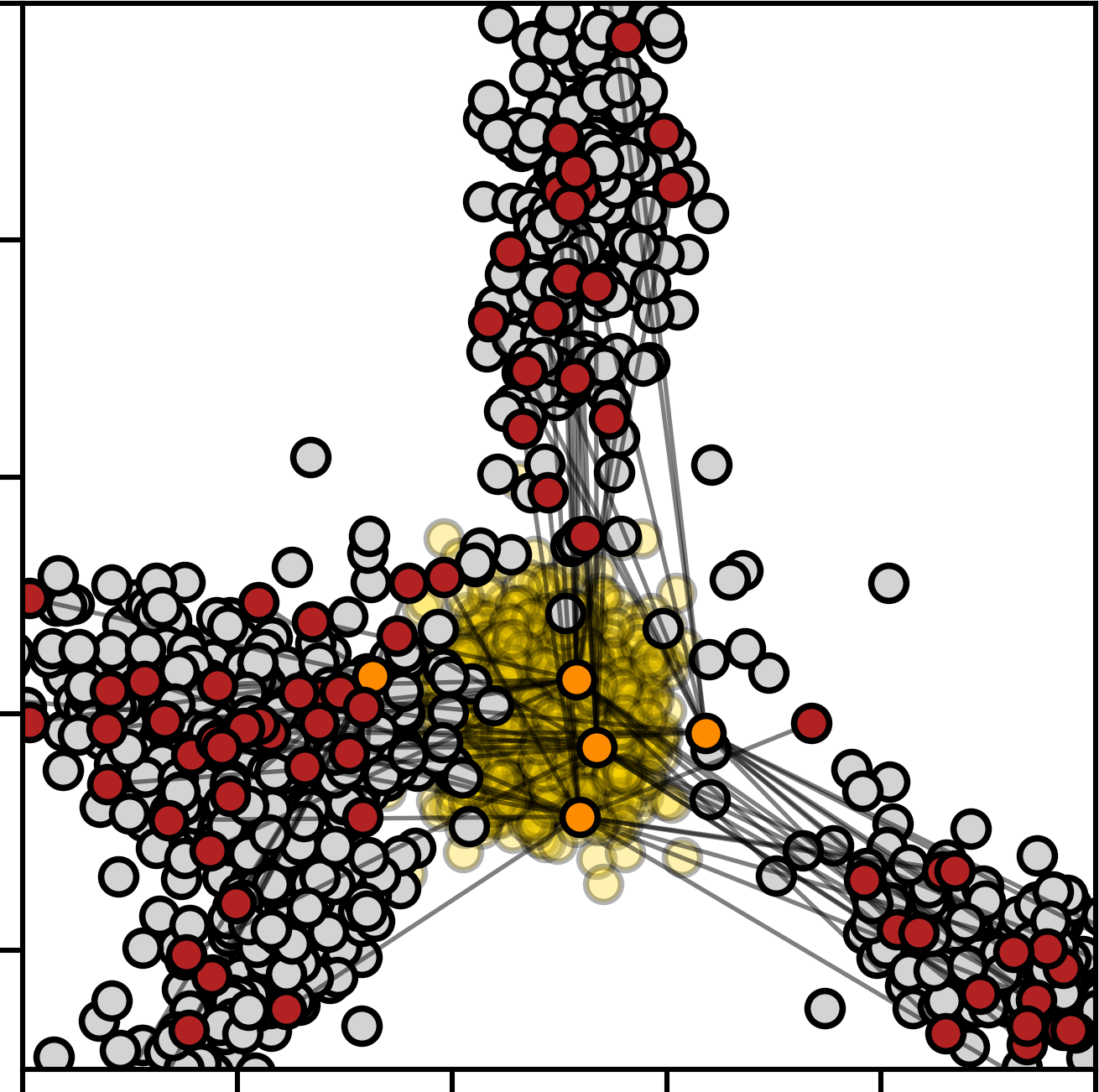}
\caption{\centering \small$\lfloor \textbf{NOT}\rceil$.}

\vspace{30mm}
\label{fig:map-flow_eps_10}
\end{subfigure}
\begin{subfigure}[b]{0.19\linewidth}
\centering
\includegraphics[width=\linewidth]{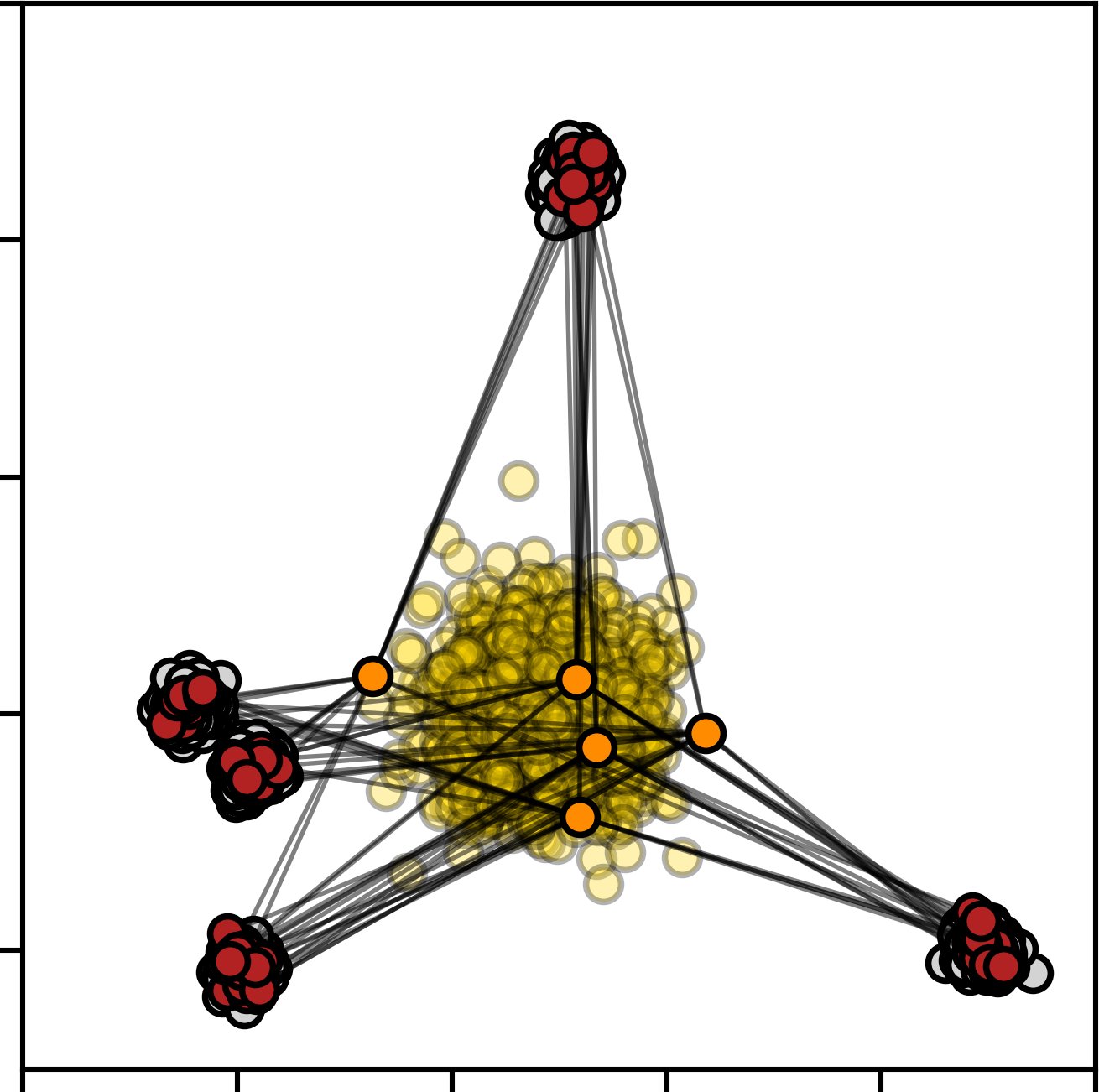}
\caption{\centering \small$\lfloor \textbf{EgNOT}\rceil$.}

\vspace{30mm}
\label{fig:map-egnot_eps_10}
\end{subfigure}\vspace{2mm}

\hspace{-4mm}
\begin{subfigure}[b]{0.21\linewidth}
\vspace{-31mm}
\centering
\includegraphics[width=\linewidth]{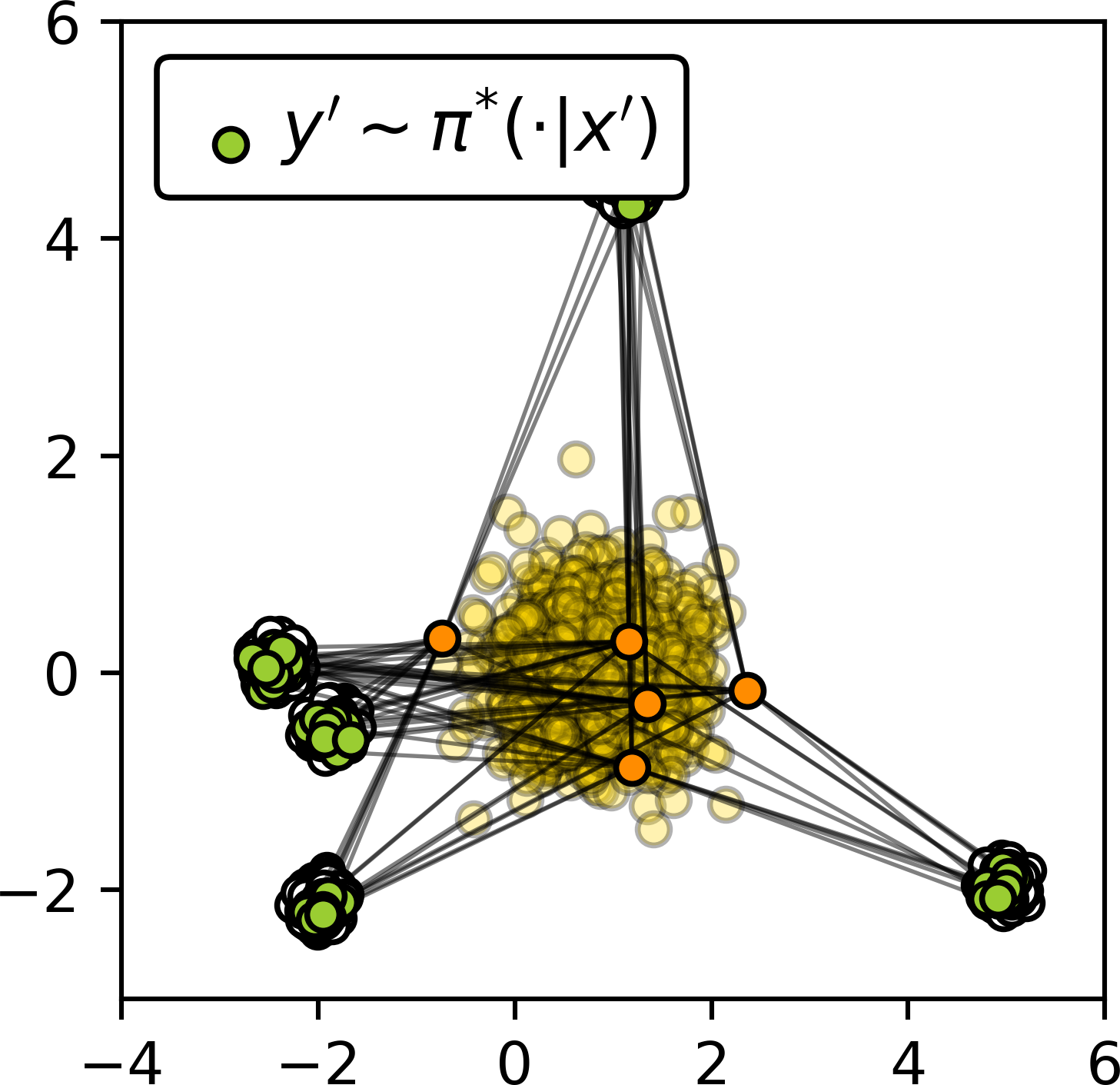}
\caption{\centering \small \textbf{True} EOT plan $\pi^{*}$.}
\label{fig:map-gt_eps_10}
\end{subfigure}
\hspace{1mm}{\color{white}\vrule}\hspace{1mm}
\begin{subfigure}[b]{0.19\linewidth}
\vspace{-31mm}
\centering
\includegraphics[width=\linewidth]{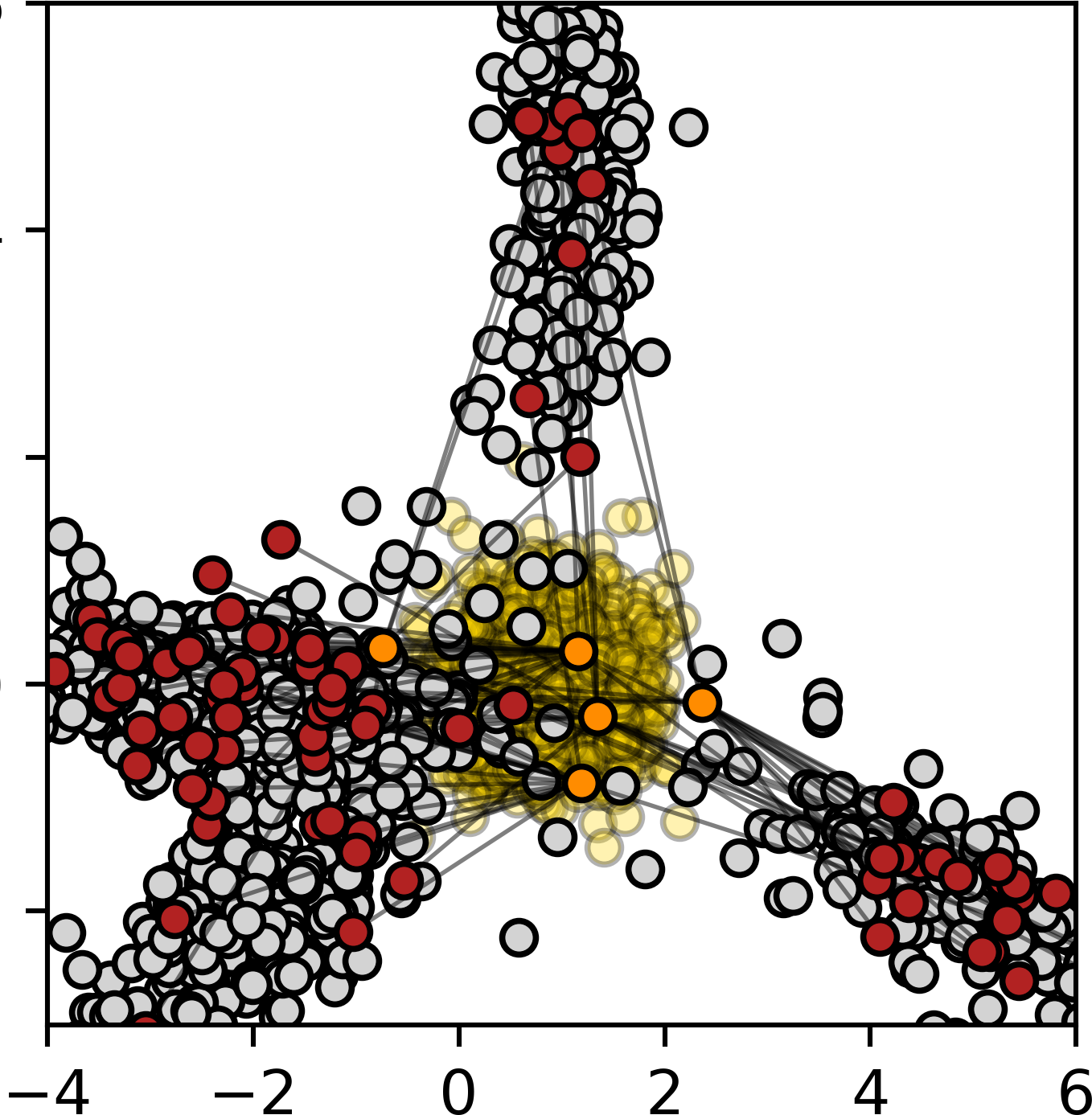}
\caption{\centering \small $\lfloor \textbf{ENOT}\rceil$.}
\label{fig:map-enot_eps_10}
\end{subfigure}
\begin{subfigure}[b]{0.19\linewidth}
\vspace{-31mm}
\centering
\includegraphics[width=\linewidth]{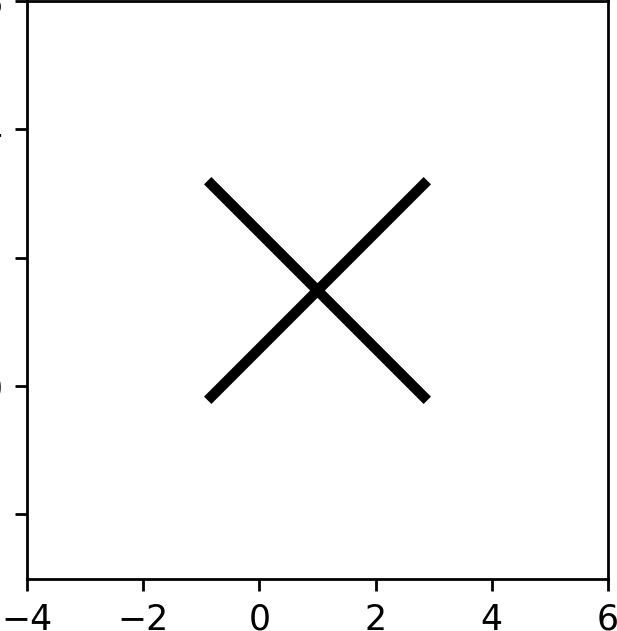}
\caption{\centering \small $\lfloor \textbf{DiffSB}\rceil$.}
\label{fig:map-bortoli_eps_10}
\end{subfigure}
\begin{subfigure}[b]{0.19\linewidth}
\vspace{-31mm}
\centering
\includegraphics[width=\linewidth]{pics/cross_down.png}
\caption{\centering \small $\lfloor \textbf{FB-SDE-A}\rceil$.}
\label{fig:map-alter_eps_10}
\end{subfigure}
\begin{subfigure}[b]{0.19\linewidth}
\vspace{-31mm}
\centering
\includegraphics[width=\linewidth]{pics/cross_down.png}
\caption{\centering \small $\lfloor \textbf{FB-SDE-J}\rceil$.}
\label{fig:map-joint_eps_10}
\end{subfigure}\vspace{-1mm}
\caption{\centering Qualitative results of EOT/SB solvers on our mixtures benchmark pair with $(D,\epsilon)=(16,10)$. The distributions are visualized using $2$ PCA components of target distr. $\mathbb{P}_{1}$.\protect}
\label{fig:qualitative-eps_10}
\end{figure}

\section{Images Benchmark Pairs: Details and Results}\label{sec:extra-images-results}
\textbf{Parameters for constructing image benchmark pairs.} We fix $N=100$ random samples from $\sP_0$ for $b_n$ and choose all $A_n\equiv I$. We use $w_n$ such that $\gamma_n = \frac{1}{100}\mathcal{N}(x|b_{n}, (\frac{1}{\epsilon}I - \frac{1}{\epsilon^2}\Sigma_n)^{-1})$. 

\textbf{GLOW details}. We use the code from the repository with the default parameters: 
\begin{center}
\url{https://github.com/rosinality/glow-pytorch}
\end{center}
After training, the latent variable $z$ is sampled from $N(0, \sigma^2 I)$ with $\sigma^2 = 0.49$ for image generation. That is, the image distribution $\sP_0$ is produced by the mapping $z \sim N(0, \sigma^2 I)$ to the image space with the learned normalizing flow $G$, i.e., $\sP_0\defeq G\sharp\mathcal{N}(\cdot|\sigma^{2}I)$ in our construction.

\textbf{MCMC in the latent space of the normalizing flow.}
We test EOT/SB solvers in $\mathbb{P}_{1}\rightarrow\mathbb{P}_{0}$ direction, i.e., recovering $\pi^{*}(x|y)$ and generating clean samples $x$ from noised $y$. Unfortunately, the reverse conditional OT plans $\pi^{*}(x|y)$ are not as tractable as $\pi^{*}(y|x)$. However, we note that
\begin{eqnarray}
    \frac{d\pi^{*}(x|y)}{dy} \propto \frac{d\pi^*(y|x)}{dy}\frac{d\sP_0(x)}{dx},
\end{eqnarray}
i.e., the density of $\pi^{*}(\cdot|y)$ it known up to the normalizing constant. Recall that here $\sP_0$ is constructed using the normalizing flow and $\pi^*(\cdot|x)$ is a Gaussian mixture (Proposition \ref{prop:entropic-ot-solution-log-sum-exp}), i.e., we indeed know the values of both terms. Therefore, one may use the well-celebrated Langevin dynamics to sample from $\pi^{*}(y|x)$. Unfortunately, we found that such sampling in the image space is rather slow.

To overcome this issue, we employ the Langevin sampling in the latent space of the normalizing flow. It is possible since the normalizing flow is a bijection between the space of images and the latent space. We use the standard notation $z$ for the latent variable and $G: \mathbb{R}^D \rightarrow \mathbb{R}^D$ for the normalizing flow, i.e., $x = G(z)\sim\sP_0$ for $z \sim p(z)\defeq \mathcal{N}(z| 0, \sigma^{2} I)$. In this case, we have
\begin{eqnarray}
    \frac{d\pi^{*}(z|y)}{dz} = \frac{d\pi^{*}(x|y)}{dx} |\det J_{G^{-1}}(x)| \propto \frac{d\pi^*(y|x)}{dy}\frac{d\sP_0(x)}{dx} |\det J_{G^{-1}}(x)| =
    \nonumber
    \\
     \frac{d\pi^*\big(y|G(z)\big)}{dx}\underbrace{\frac{d\sP_0(x)}{dx} |\det J_{G^{-1}}(x)|}_{p(z)} =
    \frac{d\pi^*\big(y|G(z)\big)}{dy}p(z),
    \nonumber
\end{eqnarray}
and we can derive the \textit{score function} $\nabla_{z} \log \frac{d\pi^{*}(z|y)}{dz}$ which is needed for the Langevin dynamic as
\begin{eqnarray}
    \nabla_{z} \log \frac{d\pi^{*}(z|y)}{dz} = \nabla_{z} \frac{d\pi^*(y|G(z))}{dy} + \nabla_{z} \log p(z).
    \label{latent-score}
\end{eqnarray}
Hence, instead of doing non-trivial Langevin in the data space with $\nabla_{x}\frac{d\pi^{*}(x|y)}{dx}$, one may equivalently do the sampling in the latent space by using the score \eqref{latent-score} and then get $x=G(z)$. We empirically found this approach works much better, presumably due to the fact that \eqref{latent-score} is just the score of the Normal distribution which is slightly adjusted with the information coming from $\pi^{*}\big(y|G(z)\big)$.

For sampling, we employ the \textbf{Metropolis-adjusted Langevin algorithm} with the time steps $10^{-3}, 10^{-4}$ and $10^{-5}$ for $\epsilon=10$, $\epsilon=1$ and $\epsilon=0.1$, respectively. It provides the theoretical guarantees that the constructed Markov chain $z_{1},z_{2},\dots,...$ converges to the distribution $\frac{d\pi^{*}(z|y)}{dz}$. For initializing the Markov chain, we sample a pair $(x, y)\sim \pi^{*}$ and use $z = G^{-1}(x)$ as the initial state for the Langevin sampling to get new samples from $\pi^{*}(\cdot|y)$. This trick allows for improving the stability of sampling and the convergence speed since it provides a good starting point. We use $N=200$ steps for all the setups for the Metropolis-adjusted Langevin algorithm.

In Figures \ref{fig:enot-extras-denoising} and \ref{fig:enot-extras-noising}, we provide additional examples of the samples from the ground truth plan $\pi^{*}$.

\textbf{Metric 1.} For each $\epsilon=0.1, 1, 10$ we prepare a test set with $10^{4}$ samples from $\sP_0$. We use this set to calculate the FID \cite{heusel2017gans} metric between the ground truth distribution $\sP_0$ and the model's marginal distribution $\pi_1$ to estimate how well the model restores the target distribution. This allows to access the generative performance of solvers, i.e., the quality of generated images and matching the target distribution. However, \textit{this metric does not assess the accuracy of the recovered EOT plan.}

\textbf{Metric 2.} For each $\epsilon=0.1, 1, 10$, we prepare a test set containing $100$ "noised" samples $y \sim \sP_1$ and $5\text{K}$ samples $x \sim \pi^{*}(\cdot|y)$ for each "noised" sample $y$, i.e., $5\text{K} \times 100$ images for each $\epsilon$ in consideration. We propose to compute \textbf{conditional FID} to evaluate the difference between the conditional plans $\pi^{*}(\cdot|y)$ and $\widehat{\pi}(\cdot|y)$. That is, for each $y$ we compute FID between $\pi^{*}(\cdot|y)$ and $\widehat{\pi}(\cdot|y)$, and then average the result for all test $y$. Clearly, such an evaluation is approximately 100$\times$times more consuming than computing the base FID. However, \textit{it allows us to fairly assess the quality of the recovered EOT solution, and we recommend this metric as the main for future EOT/SB studies}.

In Tables \ref{table-fid}, \ref{table-cfid}, we present the evaluation results for $\lfloor\textbf{ENOT}\rceil$ \cite{gushchin2023entropic}. We again emphasize that, to the best of our knowledge, there is no scalable \textit{data}$\rightarrow$\textit{data} EOT/SB solver to compare against. Hence, we report the results as-is for future methods to be able to compare with them as the baseline.

\textbf{Computational complexity}. Sampling $x\sim \mathbb{P}_0$ is just applying the trained GLOW neural network to noise vectors $z\sim\mathcal{N}(\cdot|0,\sigma^{2}I)$. Sampling $y\sim\sP_1$ (or $y|x$) takes comparable time, as it is just extra sampling from the Gaussian mixture with $x$-dependent parameters (Proposition \ref{prop:entropic-ot-solution-log-sum-exp}). In turn, as we noted above, sampling $x|y$ requires using the Langevin dynamic and takes considerable time. To obtain 3 \textbf{test} sets of $5$K samples $y\sim\pi^{*}(\cdot|x)$ per each of 100 samples $x\sim\sP_0$, we employed $8\times $A100 GPUs. This generation of test datasets took approximately $1$ week.

\begin{figure*}[h]
\begin{subfigure}[b]{0.365\linewidth}
\centering
\includegraphics[width=0.995\linewidth]{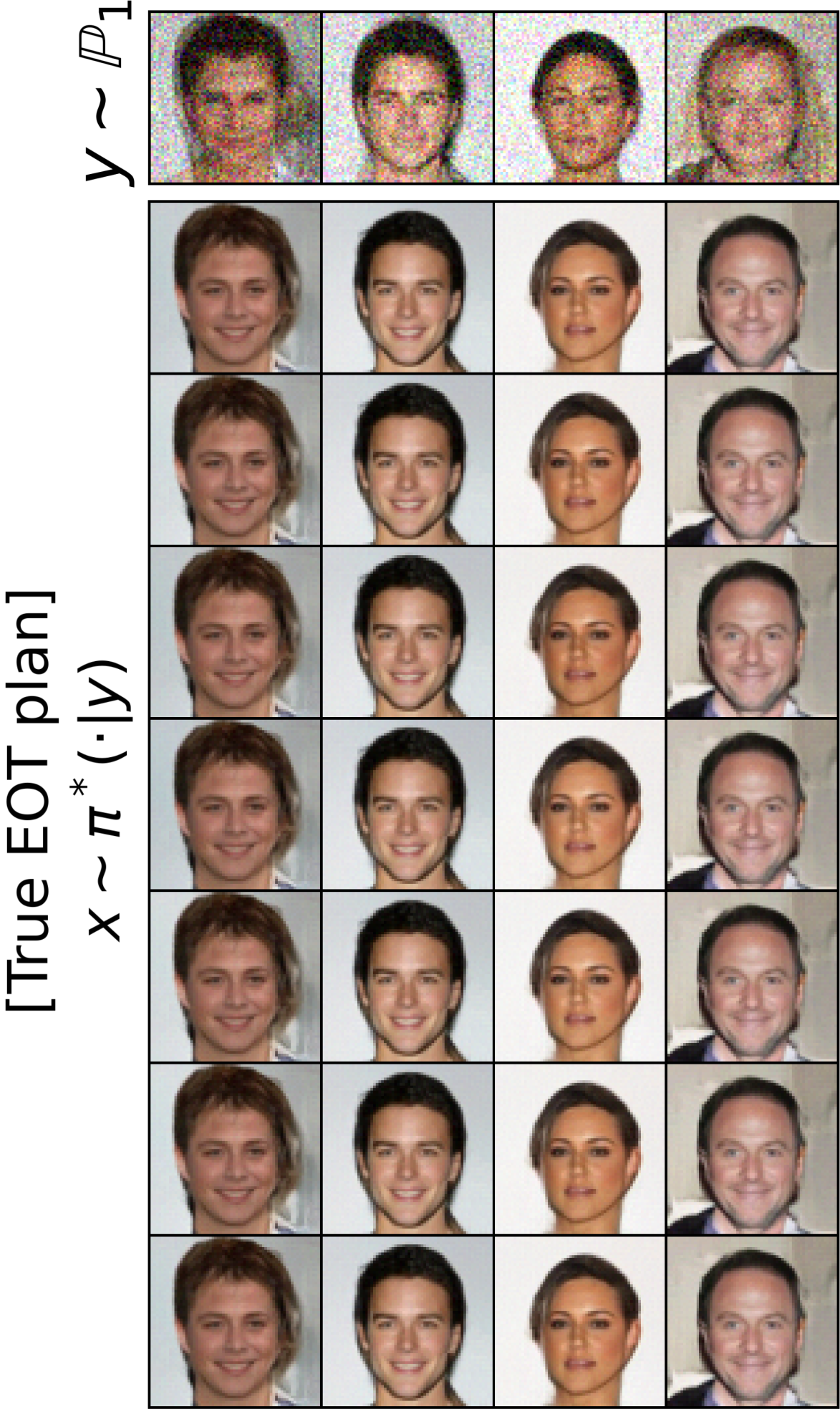}\caption{\centering $\epsilon=0.1$}
\vspace{-1mm}
\end{subfigure}
\vspace{-1mm}\hfill\begin{subfigure}[b]{0.3005\linewidth}
\centering
\includegraphics[width=0.995\linewidth]{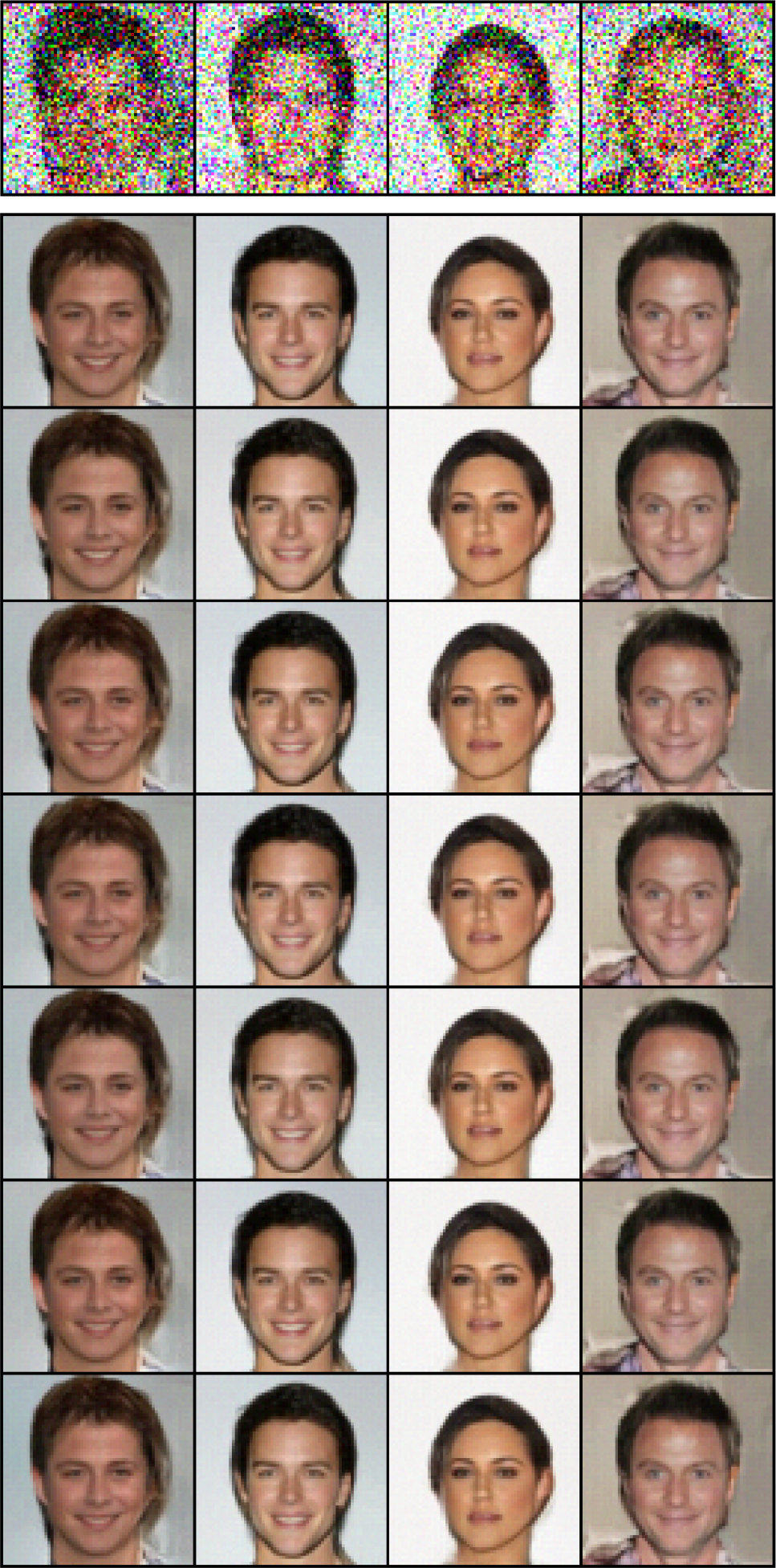}\caption{$\epsilon=1$}
\vspace{-1mm}
\end{subfigure}
\hfill\begin{subfigure}[b]{0.30005\linewidth}
\centering
\includegraphics[width=0.995\linewidth]{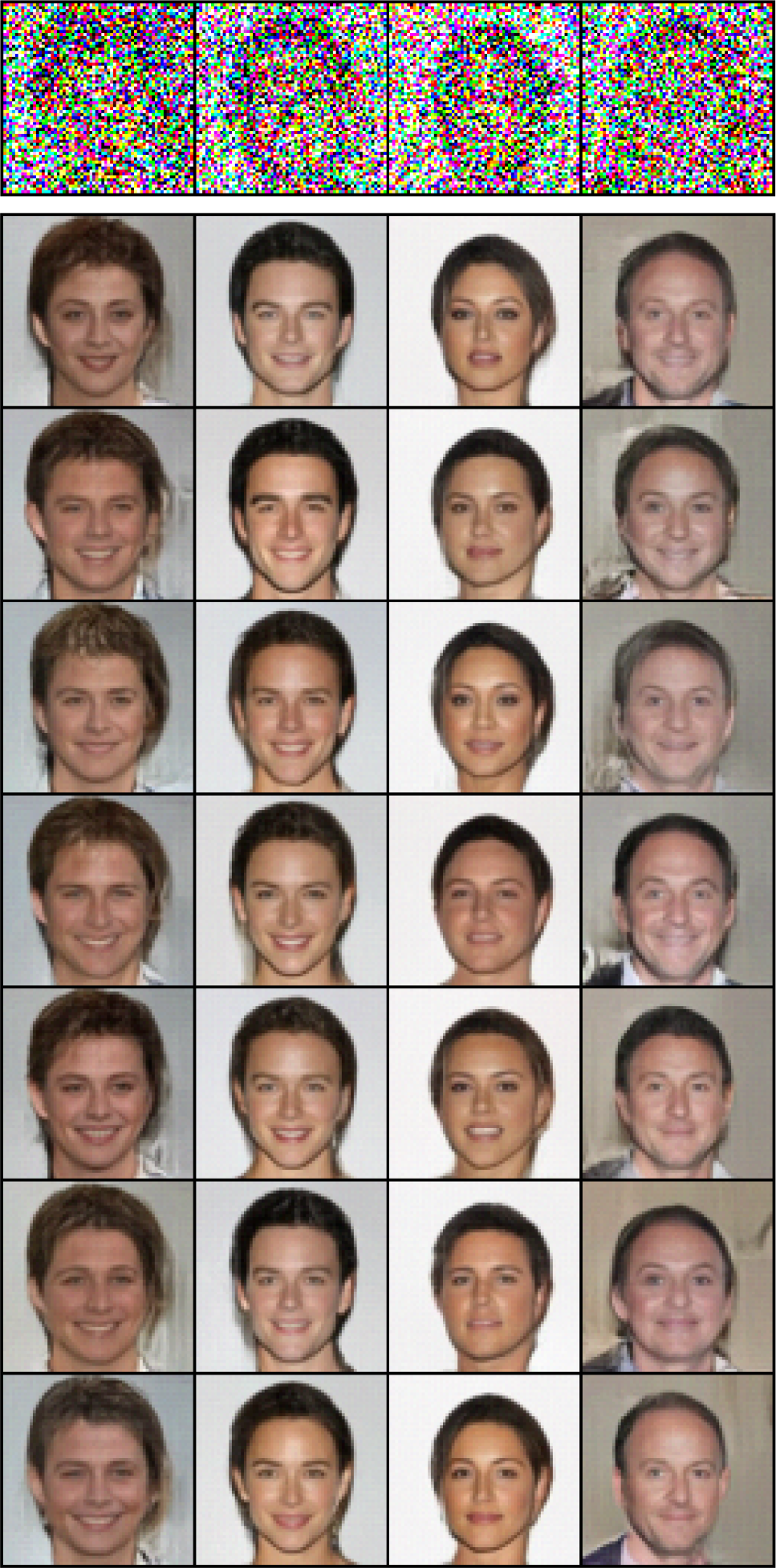}\caption{\centering $\epsilon=10$}
\vspace{-1mm}
\end{subfigure}
\caption{\centering Ground truth samples $x \sim\pi^{*}(\cdot|y)$ on images benchmark pairs. }
\label{fig:enot-extras-denoising}
\end{figure*}

\begin{figure*}[h]
\begin{subfigure}[b]{0.365\linewidth}
\centering
\includegraphics[width=0.995\linewidth]{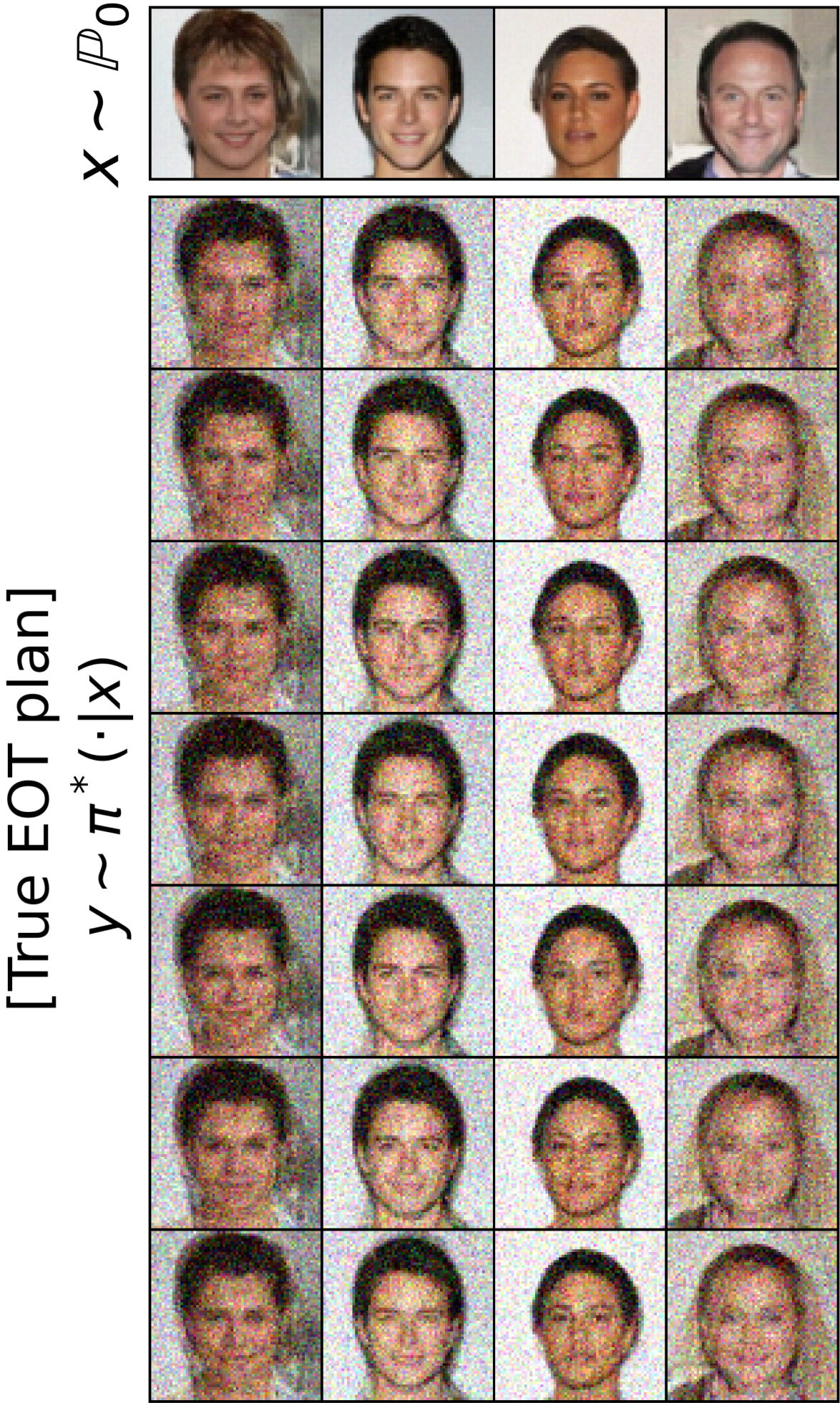}\caption{\centering $\epsilon=0.1$}
\vspace{-1mm}
\end{subfigure}
\vspace{-1mm}\hfill\begin{subfigure}[b]{0.3005\linewidth}
\centering
\includegraphics[width=0.995\linewidth]{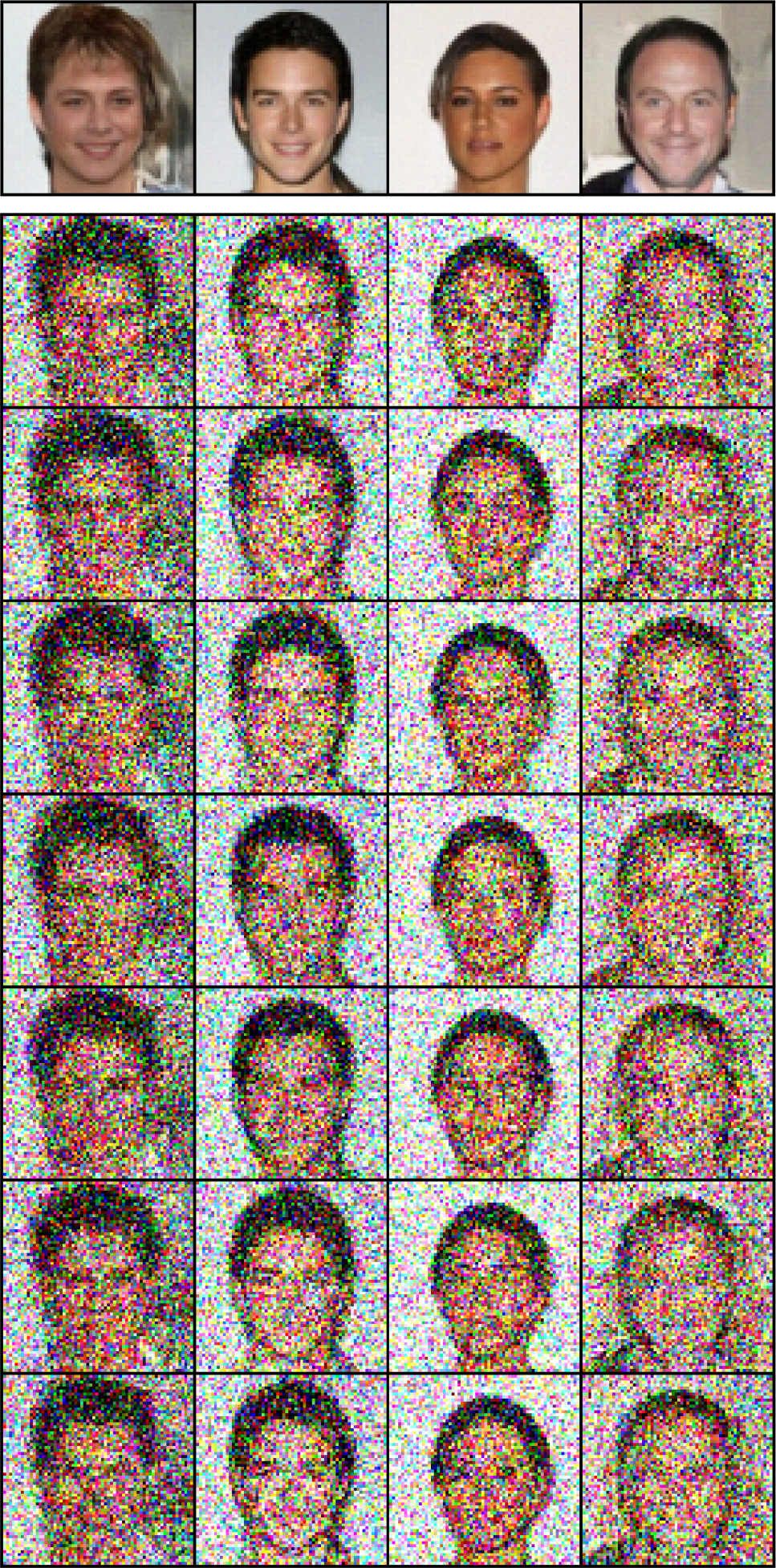}\caption{$\epsilon=1$}
\vspace{-1mm}
\end{subfigure}
\hfill\begin{subfigure}[b]{0.3005\linewidth}
\centering
\includegraphics[width=0.995\linewidth]{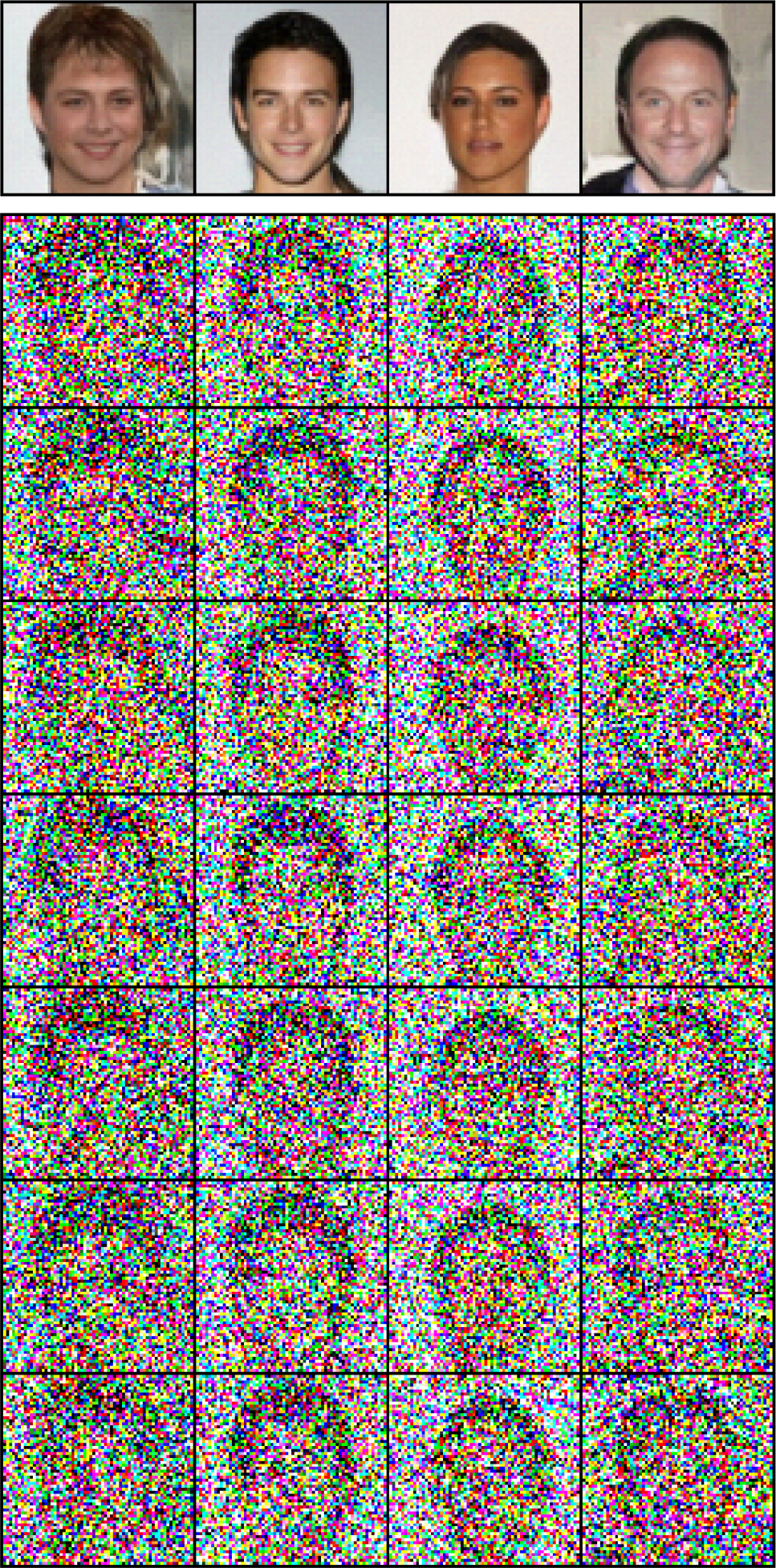}\caption{\centering $\epsilon=10$}
\vspace{-1mm}
\end{subfigure}
\caption{\centering Ground truth samples $y \sim\pi^{*}(\cdot|x)$ on images benchmark pairs. }
\label{fig:enot-extras-noising}
\end{figure*}

\begin{table*}[!t]
\vspace{-2mm}
\color{black}
\begin{minipage}{.49\textwidth}
\begin{center}
\begin{tabular}{ |c|c|c|c| } 
\hline
  $\epsilon$ & \textbf{0.1} & \textbf{1} & \textbf{10} \\ 
 \hline
 \textbf{FID}  & 5.99 & 3.21 &  4.9 \\
\hline
\end{tabular}
\vspace{-1mm}
\captionsetup{justification=centering}
 \caption{Test FID of $\lfloor\textbf{ENOT}\rceil$ on our images benchmark pairs.}
 \label{table-fid}
 \end{center}
\end{minipage}
\begin{minipage}{.49\textwidth}
\begin{center}
\begin{tabular}{ |c|c|c|c| } 
\hline
  $\epsilon$ & \textbf{0.1} & \textbf{1} & \textbf{10} \\ 
 \hline
\textbf{cFID}  & 40.5 & 19.8 &  14.47 \\
\hline
\end{tabular}
\vspace{-1mm}
\captionsetup{justification=centering}
 \caption{Test conditional FID of $\lfloor\textbf{ENOT}\rceil$ on our images benchmark pairs}
 \label{table-cfid}
\end{center}
\end{minipage}
\vspace{0mm}
\end{table*}

\section{Details of EOT/SB Solvers}\label{sec:methods-details}
\subsection{Mixtures Benchmark Pairs}

$\lfloor\textbf{LSOT}\rceil$ \cite{seguy2018large}. We use the part of the code of $\lfloor\textbf{SCONES}\rceil$ solver from the authors' repository 
\begin{center}
\url{https://github.com/mdnls/scones-synthetic/blob/main/cpat.py}
\end{center}
corresponding to learning dual OT potentials \texttt{blob/main/cpat.py} and the barycentric projection \texttt{blob/main/bproj.py} in the Gaussian case with configuration \texttt{blob/main/config.py}.

$\lfloor\textbf{SCONES}\rceil$ \cite{daniels2021score}. We use the aforementioned official code for training of dual OT potentials. We employ \texttt{sklearn.mixture.GaussianMixture} with 20 components to approximate the score of the target distribution. For the rest, \color{black}we employ their configuration \texttt{blob/main/config.py} with batch size=1024 and the learning rate for Langevin sampling is $5\cdot 10^{-4}$.

\begin{wraptable}{r}{0.45\textwidth}
\vspace*{-4mm}
\scriptsize
  \centering
  \begin{tabular}{ccc}
  \hline
  $lr$ (potential and transport map) & $T_{\mathrm{steps}}$ & $\sigma_z$ \\
      \hline
      $1e-4$ & $99$ & $1.0$ \\
    \hline
  \end{tabular}
  \vspace{-1mm}\caption{$\lfloor\textbf{NOT}\rceil$ training parameters\\ for the mixture benchmark pairs experiment.}
  \label{table-not-params}
\vspace*{-5mm}
\end{wraptable}$\lfloor\textbf{NOT}\rceil$ This algorithm \cite[Algorithm 1]{korotin2023neural} is a generic algorithm for weak OT. It works for transport costs $C\big(x,\pi(\cdot|x)\big)$ which are straightforward to estimate by using samples of $\pi(\cdot|x)$. Entropic cost $C_{c,\epsilon}$ \eqref{weak-entropic-ot-cost} does not fit this requirement, as it is not easy to estimate entropy from samples. To do it, one has to know the density of $\pi(\cdot|x)$. Thus, the authors of $\lfloor\textbf{NOT}\rceil$ skipped EOT setting. We fill this gap and do a minor modification to their algorithm. As the base implementation, we use
\begin{center}
    \url{https://github.com/iamalexkorotin/NeuralOptimalTransport}
\end{center}
Instead of the multi-layer perceptron generator, we take a conditional normalizing flow with RealNVP architecture with context-dependent latent normal distribution. This enables the access to the density of $\pi(\cdot|x)$ and allows applying $\lfloor\textbf{NOT}\rceil$ algorithm to EOT. Our reimplementation is available at
\begin{center}
    \url{https://github.com/Penchekrak/FlowNOT}
\end{center}
Due to the decreased expressivity of RealNVP compared to MLP from $\lfloor\textbf{NOT}\rceil$, we do more optimization steps for the transport map before updating potential as well as larger parameter count compared to the original solver implementation for a similar task. We use the same set of hyperparameters across all experiments with different $(\epsilon, D)$. The hyperparameters are summarized in Table \ref{table-not-params}.


$\lfloor\textbf{EgNOT}\rceil$ \cite{mokrov2023energy} We use the official code for $\lfloor\textbf{EgNOT}\rceil$ from
\begin{center}
    \url{https://github.com/PetrMokrov/Energy-guided-Entropic-OT}.
\end{center}

\begin{wraptable}{r}{0.45\textwidth}
\vspace*{-2mm}
\scriptsize
  \centering
  \begin{tabular}{ccccc}
  \hline
  $K$ & $K_{\text{test}}$ & $\sqrt{\eta}$ & $\sigma_0$ & $N$ \\
      \hline
      $500$ & $1000$ & $0.05$ & $1.0$ & 1024 \\
    \hline
  \end{tabular}
\vspace{-1mm}\caption{$\lfloor\textbf{EgNOT}\rceil$ training parameters\\ for the mixture benchmark pairs experiment.}
  \label{table-egeot-params-g2mog}
  \vspace{-2mm}
\end{wraptable} For our mixture benchmark pairs experiment, we adapt the author's setup for the Gaussian-to-Gaussian experiment from their original paper \cite[\S 5.2]{mokrov2023energy}. In particular, we use the same architectures of neural networks, see \cite[Appendix C.2]{mokrov2023energy}, but change the hyper-parameters of \cite[Algorithm 1]{mokrov2023energy}, since the original ones do not work properly when fitting Gaussian-to-Mixture. We hypothesize that the observed failure is due to the short-run nature of the energy-based training algorithm. We suppose that significantly increasing the number of Langevin steps $K$ used at the training stage may leverage the problem. The specific hyper-parameters of $\lfloor\textbf{EgNOT}\rceil$ algorithm are the same for all $(\epsilon, D)$ pairs and provided in Table \ref{table-egeot-params-g2mog}.

We initialize the learning rate as $\text{\textit{lr}} = 10^{-5}$ and decrease its value during the training. Similar to the original implementation of \cite{mokrov2023energy} we use a replay buffer but found that a high probability ($p = 0.95$) of samples reusage does not improve the quality and sometimes leads to unstable training. In turn, we choose $p = 0.5$. The reported numbers in Tables \ref{tbl:bw-uvp-target}, \ref{tbl:bw-uvp-plan} are gathered by launching the training process for approximately 50K iterations and reporting the best-obtained metric. We understand that such an evaluation procedure is not ideal and does not provide statistically significant results. However, the qualitative results reported in Table \ref{table-metrics-results} seem to show the behaviour of $\lfloor\textbf{EgNOT}\rceil$ solver on our benchmark setup and reveal the key properties of the approach.

$\lfloor\textbf{ENOT}\rceil$ \cite{gushchin2023entropic}  We use the official code from 
\begin{center}
\url{https://github.com/ngushchin/EntropicNeuralOptimalTransport} 
\end{center}
We use the same hyperparameters for this setup as the authors \cite[Appendix E]{gushchin2023entropic}, except the number of discretization steps N, which we set to 200 as well as for other Schrödinger Bridge based methods. We also change the learning rate of the potential to $3\cdot10^{-4}$ for the setups with $\epsilon=10$.

\color{black}

$\lfloor\textbf{MLE-SB}\rceil$ \cite{vargas2021solving}. We tested the official code from 
\begin{center}
\url{https://github.com/franciscovargas/GP_Sinkhorn} 
\end{center}
Instead of Gaussian processes, we used a neural network as for $\lfloor\textbf{ENOT}\rceil$. We use $N=200$ discretization steps as for other SB solvers, $5000$ IPF iterations, and $512$ samples from distributions $\mathbb{P}_0$ and $\mathbb{P}_1$ in each of them. We use the Adam optimizer with $lr=10^{-4}$ for optimization.

$\lfloor\textbf{DiffSB}\rceil$\cite{de2021diffusion}. We utilize  the official code from 
\begin{center}
    \url{https://github.com/JTT94/diffusion_schrodinger_bridge}
\end{center}
with their configuration \texttt{blob/main/conf/dataset/2d.yaml} for toy problems. We increase the number of steps of dynamics to 200 and the number of steps of the IPF procedure for dimensions 16, 64 and 128 to 30, 40 and 60, respectively. 

$\lfloor\textbf{FB-SDE-J}\rceil$\cite{chen2022likelihood}. We utilize the official code from 
\begin{center}
\url{https://github.com/ghliu/SB-FBSDE}
\end{center}
with their configuration \texttt{blob/main/configs/default\_checkerboard\_config.py} for the checkerboard-to-noise toy experiment, changing the number of steps of dynamics from 100 to 200 steps. Since their hyper-parameters are developed for their 2-dimensional experiments, we increase the number of iterations for dimensions 16, 64 and 128 to 15 000.

$\lfloor\textbf{FB-SDE-A}\rceil$ \cite{chen2022likelihood}. We also take the code from the same repository as above. We base our configuration on the authors' one (\texttt{blob/main/configs/default\_moon\_to\_spiral\_config.py}) for the moon-to-spiral experiment. As earlier, we increase the number of steps of dynamics up to 200. Also, we change the number of training epochs during one IPF procedure for dimensions 16, 64 and 128 to 2,4 and 8 correspondingly. 

\subsection{Images Benchmark Pairs}

$\lfloor\textbf{ENOT}\rceil$ \cite{gushchin2023entropic}  As well as for the mixtures benchmark pairs, we use the official code from 
\begin{center}
\url{https://github.com/ngushchin/EntropicNeuralOptimalTransport} 
\end{center}
We use the same hyperparameters for this setup as the authors \cite[Appendix F]{gushchin2023entropic} except the batch size which we set to 16 (\texttt{/blob/main/notebooks/Image\_experiments.ipynb}). 

\section{Additional Study of Hyperparameters of Solvers}\label{sec:hyperparameters-study}

To show that the default solvers parameters described in Appendix~\ref{sec:methods-details} are already a good choice, we additionally try different values of some of the most important hyperparameters. We consider each of the solvers except $\lfloor\textbf{LSOT}\rceil$ because it is anyway known to poorly perform due to the systematic bias in its solutions \cite{korotin2022kantorovich, korotin2021neural}. For the evaluation, we consider the mixtures benchmark pair with $D=64$ and $\epsilon=1$ where most of the solvers perform reasonably well. In the tables below, we use "$*$"\ to mark the hyperparameters that we use for comparisons in \wasyparagraph\ref{sec:constructed-pairs}.

For $\lfloor\textbf{ENOT}\rceil$ solver, we consider the number of inner and outer problem iterations during the optimization and present the results in Table~\ref{tbl:hyper-enot}. The obtained results show that the performance increases slowly with increasing number of iterations of both types.

\begin{table}[!h]
\begin{center}
\begin{tabular}{|c|c|c|c|c|}
\hline
\backslashbox{Outer iters}{Inner iters} & $1$  & $5$  & $10$  & $20$ \\
\hline
$100$                  & $131.1$ & $130.3$ & $74.5$ & $129.3$  \\ 
\hline
$1000$                  & $28.77$  & $47.36$ & $25.91$ & $20.16$ \\
\hline
$10000$                  & $24.46$  & $37.36$ & $23.07*$ & $\mathbf{18.03}$ \\
\hline
\end{tabular}
\vspace{1mm}
\captionsetup{justification=centering}
\caption{Comparison of $\text{cB}\mathbb{W}_{2}^{2}\text{-UVP}\downarrow$ (\%) for $\lfloor\textbf{ENOT}\rceil$ on mixtures benchmark pairs for $D=64$, $\epsilon=1$ and different hyperparameters.} \label{tbl:hyper-enot}
\end{center}
\vspace{-5mm}
\end{table}

For IPF-based SB solvers $\lfloor\textbf{MLE-SB}\rceil$, $\lfloor\textbf{DiffSB}\rceil$, $\lfloor\textbf{FB-SDE-A}\rceil$ and  $\lfloor\textbf{FB-SDE-J}\rceil$, we try different numbers of IPF iterations and the number of samples used in each iteration. We present the results in Tables~\ref{tbl:hyper-mle-sb}, \ref{tbl:hyper_diffsb}, \ref{tbl:hyper_joint}, \ref{tbl:hyper_chen}. All of the IPF-based solvers learn an inversion of a diffusion process at each IPF step but they differ in the way how this is done. The typical number of IPF steps used by each algorithm is affected by this difference. The performance increases slowly with the increase of the two hyperparameters considered, at the cost of a proportional increase in iterations or in the number of samples used. 

\begin{table}[!h]
\begin{center}
\begin{tabular}{|c|c|c|c|c|}
\hline
\backslashbox{IPF iters}{Samples per iter} & $64$  & $128$  & $256$  & $512$ \\
\hline
$100$                  & $23.45$ & $24.50$ & $16.64$ & $14.23$  \\ 
\hline
$1000$                  & $16.95$  & $15.35$ & $10.71$ & $8.74$ \\
\hline
$5000$                  & $11.55$  & $11.24$ & $12.96$ & $\mathbf{8.41}*$ \\
\hline
\end{tabular}
\vspace{1mm}
\captionsetup{justification=centering}
\caption{Comparison of $\text{cB}\mathbb{W}_{2}^{2}\text{-UVP}\downarrow$ (\%) for $\lfloor\textbf{MLE-SB}\rceil$ on mixtures benchmark pairs for $D=64$, $\epsilon=1$ and different hyperparameters.}\label{tbl:hyper-mle-sb}
\end{center}
\vspace{-5mm}
\end{table}

\begin{table}[!h]
\vspace{-2mm}
\begin{center}
\begin{tabular}{|c|c|c|c|c|}
\hline
\backslashbox{IPF iters}{Samples per iter} & $64$    & $256$  & $512$  & $1024$ \\
\hline
$16$                  & $62.66$ & $60.42$ & $58.88$ & $57.02$  \\ 
\hline
$32$                  & $62.90$  & $59.42$ & $57.76*$ & $55.08$ \\
\hline
$64$                  & $62.84$  & $59.46$ & $57.78$ & $\mathbf{55.01}$ \\
\hline
\end{tabular}
\vspace{1mm}
\captionsetup{justification=centering}
\caption{Comparison of $\text{cB}\mathbb{W}_{2}^{2}\text{-UVP}\downarrow$ (\%) for $\lfloor\textbf{DiffSB}\rceil$ on mixtures benchmark pairs for $D=64$, $\epsilon=1$ and different hyperparameters.}\label{tbl:hyper_diffsb}
\end{center}
\vspace{-5mm}
\end{table}

\begin{table}[!h]
\begin{center}
\begin{tabular}{|c|c|c|c|c|}
\hline
\backslashbox{IPF iters}{Samples per iter} & $64$    & $256$  & $512$  \\
\hline
$15000$                  & $173.16$ & $163.04$ & $160.5*$ \\
\hline
$30000$                  & $168.86$  & $165.06$ & $\mathbf{156.5}$ \\
\hline
\end{tabular}
\vspace{1mm}
\captionsetup{justification=centering}
\caption{Comparison of $\text{cB}\mathbb{W}_{2}^{2}\text{-UVP}\downarrow$ (\%) for $\lfloor\textbf{FB-SDE-J}\rceil$ on mixtures benchmark pairs for $D=64$, $\epsilon=1$ and different hyperparameters.}\label{tbl:hyper_joint}
\end{center}
\vspace{-5mm}
\end{table}

\begin{table}[!h]
\begin{center}
\begin{tabular}{|c|c|c|c|c|}
\hline
\backslashbox{IPF iters}{Samples per iter} & $64  $  & $256 $ & $512 $ & $1024$ \\
\hline
$16$                  & $40.86$ & $40.43$ & $39.76$ & $37.74$  \\ 
\hline
$32$                  & $40.44$  &$ 38.90$ & $38.36*$ & $35.46$ \\
\hline
$64$                  & $40.00$  & $38.86$ & $38.31$ & $\mathbf{35.4} $\\
\hline
\end{tabular}
\vspace{1mm}
\captionsetup{justification=centering}
\caption{Comparison of $\text{cB}\mathbb{W}_{2}^{2}\text{-UVP}\downarrow$ (\%) for $\lfloor\textbf{FB-SDE-A}\rceil$ on mixtures benchmark pairs for $D=64$, $\epsilon=1$ and different hyperparameters.}\label{tbl:hyper_chen}
\end{center}
\vspace{-5mm}
\end{table}

For $\lfloor\textbf{SCONES}\rceil$ and $\lfloor\textbf{EgNOT}\rceil$ solvers, we consider the number of Langevin steps and the Langevin step size and present the results in Table~\ref{tbl:hyper-scones} and Table~\ref{tbl:hyper-egnot}. For $\lfloor\textbf{SCONES}\rceil$ the results obtained show that the performance increases slowly with increasing Langevin steps and decreasing Langevin step size. For $\lfloor\textbf{EgNOT}\rceil$ the trends are slightly different, since the optimal Langevin step size seems to be in the interval $[0.1, 0.2]$. Anyway, our selected parameters are reasonable ones because specifying an enormously large number of Langevin steps for these solvers is sort of impractical.

Finally, for $\lfloor\textbf{NOT}\rceil$ we consider the number of inner problem steps and the hidden size of the used neural network (conditional normalizing flow). We present results in Table~\ref{tbl:hyper_not}.

\begin{table}[!h]
\begin{center}
\begin{tabular}{|c|c|c|c|c|}
\hline
\backslashbox{Langevin step size}{Langevin steps} & $64$  & $256$  & $512$  & $1024$ \\
\hline
$10^{-4}$                  & $92.35$ & $89.17$ & $86.48$ & $\mathbf{86.33}*$  \\ 
\hline
$10^{-3}$                  & $93.51$  & $90.41$ & $88.22$ & $87.74$ \\
\hline
\end{tabular}
\vspace{1mm}
\captionsetup{justification=centering}
\caption{Comparison of $\text{cB}\mathbb{W}_{2}^{2}\text{-UVP}\downarrow$ (\%) for $\lfloor\textbf{SCONES}\rceil$ on mixtures benchmark pairs for $D=64$, $\epsilon=1$ and different hyperparameters.} \label{tbl:hyper-scones}
\end{center}
\vspace{-5mm}
\end{table}

\begin{table}[!h]
\begin{center}
\begin{tabular}{|c|c|c|c|c|}
\hline
\backslashbox{Langevin step size}{Langevin steps} & $100$  & $200$  & $500$  & $1000$ \\
\hline
$0.01$                  & $70.9$ & $70.98$ & $72.9$ & $68.13$  \\ 
\hline
$0.02$                  & $71.31$  & $67.14$ & $69.11$ & $69.02$ \\
\hline
$0.05$                  & $68.78$  & $68.59$ & $63.73*$ & $56.84$ \\
\hline
$0.1$                  & $64.52$  & $57.45$ & $52.35$ & $51.9$ \\
\hline
$0.2$                  & $58.22$  & $60.08$ & $58.93$ & $\mathbf{41.31}$ \\
\hline
\end{tabular}
\vspace{1mm}
\captionsetup{justification=centering}
\caption{Comparison of $\text{cB}\mathbb{W}_{2}^{2}\text{-UVP}\downarrow$ (\%) for $\lfloor\textbf{EgNOT}\rceil$ on mixtures benchmark pairs for $D=64$, $\epsilon=1$ and different hyperparameters.} \label{tbl:hyper-egnot}
\end{center}
\vspace{-5mm}
\end{table}

\begin{table}[!h]
\vspace{-2mm}
\hspace{-10mm}
\begin{tabular}{|c|c|c|c|c|c|c|c|c|}
\hline
\backslashbox{Inner steps}{Hidden size} & $64$ & $128$ & $192$ & $256$ & $320$ & $384$ & $448$ & $512$ \\
\hline
1 & $93.14$ & $167.05$ & $149.52$ & $189.0$ & $89.1$ & $161.66$ & $176.43$ & $175.67$ \\
\hline
5 & $82.64$ & $86.09$ & $82.18$ & $190.04$ & $147.31$ & $105.46$ & $103.5$ & $150.76$ \\
\hline
10 & $163.47$ & $146.68$ & $53.26$ & $137.47$ & $100.84$ & $171.65$ & $115.84$ & $126.96$ \\
\hline
100 & $18.68$ & $21.4$ & $\mathbf{14.64}$ & $18.08$ & $16.66$ & $20.64*$ & $18.71$ & $15.15$ \\
\hline
200 & $61.99$ & $52.74$ & $58.63$ & $53.89$ & $52.44$ & $55.3$ & $55.02$ & $54.75$ \\
\hline
\end{tabular}
\vspace{1mm}
\captionsetup{justification=centering}
\caption{Comparison of $\text{cB}\mathbb{W}_{2}^{2}\text{-UVP}\downarrow$ (\%) for $\lfloor\textbf{NOT}\rceil$ on mixtures benchmark pairs for $D=64$, $\epsilon=1$ and different hyperparameters.}\label{tbl:hyper_not}
\vspace{-5mm}
\end{table}

\textbf{Discussion.} From the results it can be seen that for the most solvers' dependence on the considered hyperparameters is almost monotonic and the hyperparameters chosen for the solver comparison on the mixtures setup are in the region where the metric growth is almost saturated.

\section{Qualitative Evaluation of the Drift Learned with SB methods}

Our benchmark primarily aimed at quantifying the recovered \textbf{conditional EOT plan} $\widehat{\pi}(\cdot|x)$. Thanks to our Proposition \ref{cor:sb-closed-form-lse}, our benchmark provides not only the ground truth conditional EOT plan $\pi^{*}(\cdot|x)$, but the \textbf{optimal SB drift} $v^{*}(x,t)$ as well. This means that for SB solvers we may additionally compare their recovered SB drift $\widehat{v}$ with the ground truth drift $v^{*}$. Here we do this for $\lfloor\textbf{MLE-SB}\rceil, \lfloor\textbf{DiffSB}\rceil$, $\lfloor\textbf{ENOT}\rceil$, $\lfloor\textbf{FB-SDE-A}\rceil$, $\lfloor\textbf{FB-SDE-J}\rceil$ solvers by using our mixtures pairs.

\textsc{Metrics.} Recall that $T_{v^{*}}$ is the Schrödinger bridge \eqref{diff} and let $T_{\widehat{v}}$ denote the learned process:
$$dX_{t}=\widehat{v}(x,t)dt+\sqrt{\epsilon}dW_t, \qquad X_{0}\sim\mathbb{P}_0.$$
Both $T_{v^{*}}$ and $T_{\widehat{v}}$ are diffusion processes which start at distribution $\sP_0$ at $t=0$ and have fixed volatility $\epsilon$. Their respective drifts are $v^{*}$ and $\widehat{v}$. For each time $t\in [0,1]$, consider
\begin{equation}\mathcal{L}^{2}_{\text{fwd}}[t]\defeq \mathbb{E}_{T_{v^{*}}}\|v^{*}(X_t,t)-\widehat{v}(X_t,t)\|^{2},\label{l2-fwd}
\end{equation}
\begin{equation}\mathcal{L}^{2}_{\text{rev}}[t]\defeq \mathbb{E}_{T_{\widehat{v}}}\|v^{*}(X_t,t)-\widehat{v}(X_t,t)\|^{2}.\label{l2-rev}
\end{equation}
which are the expected squared differences between the ground truth $v^{*}$ and learned $\widehat{v}$ drifts at the time $t$. In \eqref{l2-fwd}, the expectation is w.r.t. $X_{t}$ coming from the true SB trajectories of $T_{v^{*}}$, while in \eqref{l2-rev} -- w.r.t. the learned trajectories from $T_{\widehat{v}}$. Reporting this metric for all the time steps, all the mixtures pairs and solvers would be an overkill. In what follows, we use this metric for quantitative analysis.

\textbf{First}, for $D=16$ and $\epsilon\in \{0.1,10\}$, we plot these metrics (as a function of time $t$). The results for all the solvers are shown in Figure \ref{fig:kl-revkl}. \textbf{Second}, we provide Table \ref{tbl:rev-kl} where for $D\in\{2,16,64,128\}$ and $\epsilon\in \{0.1,10\}$ report $\mathcal{L}^{2}$ metrics averaged over $t\in[0,1]$. Namely, we report
\begin{equation}
    \KL{T_{v^{*}}}{T_{\widehat{v}}}\defeq \frac{1}{2\epsilon}\int_{0}^{1}\mathcal{L}^{2}_{\text{fwd}}[t]dt\qquad \text{and}\qquad \RKL{T_{v^{*}}}{T_{\widehat{v}}}\defeq \frac{1}{2\epsilon}\int_{0}^{1}\mathcal{L}^{2}_{\text{rev}}[t]dt.
    \label{kl-and-rkl-metrics}
\end{equation}
We write "KL"\ and "RKL"\ not by an accident. Thanks to the well-celebrated Girsanov's theorem, these are indeed the \textbf{forward and reverse KL divergences} between processes $T_{v^{*}}$ and $T_{\widehat{v}}$.

In all the SB solvers, we consider $200$ time discretization steps $t=\{\frac{1}{200},\frac{2}{200},\dots 1\}$ for their training. During testing, we evaluate $\mathcal{L}^{2}$ metrics \eqref{l2-fwd} and \eqref{l2-rev} on the same time steps. To estimate \eqref{l2-fwd} and \eqref{l2-rev}, we use $10^{5}$ samples $X_{t}$ which are taken from random trajectories of processes $T_{v^{*}}$ and $T_{\widehat{v}}$. These trajectories are simulated via the standard Euler–Maruyama method.

\begin{figure}[!t] 
\begin{subfigure}[b]{0.49\linewidth}
\centering
\includegraphics[width=1\linewidth]{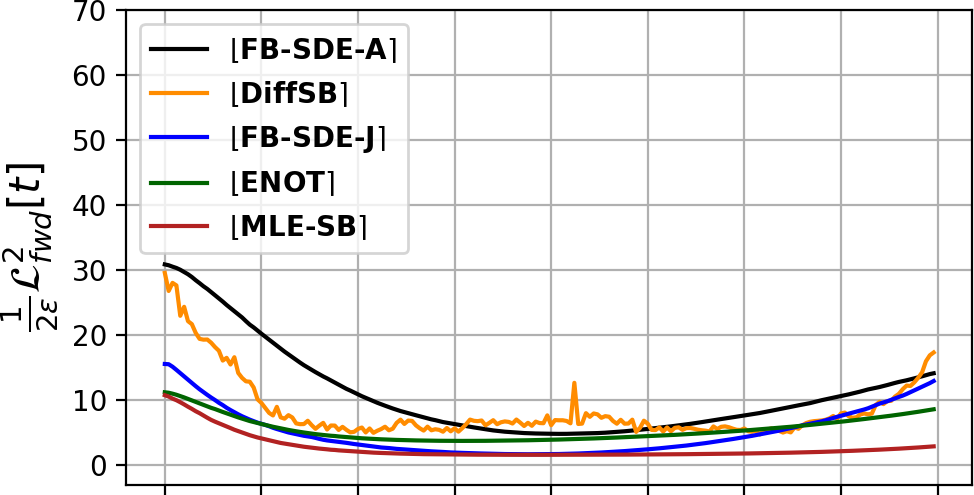}
\caption{$\frac{1}{2\epsilon}\mathcal{L}^{2}_{\text{fwd}}[t]$ for $(D,\epsilon)=(16,0.1)$}
\label{fig:kl_dim_16_eps_0.1}
\end{subfigure}
\begin{subfigure}[b]{0.49\linewidth}
\centering
\vspace{-8mm}
\includegraphics[width=0.88\linewidth]{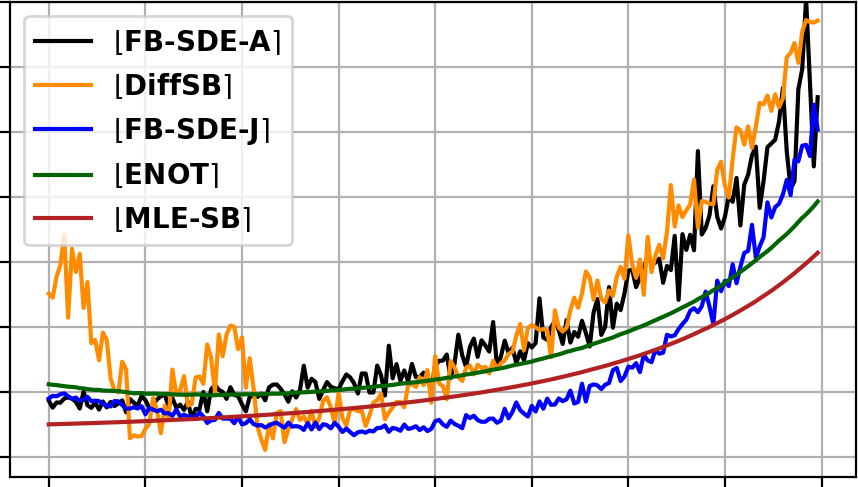}
\caption{$\frac{1}{2\epsilon}\mathcal{L}^{2}_{\text{rev}}[t]$ for $(D,\epsilon)=(16,0.1)$}
\label{fig:rev_kl_dim_16_eps_0.1}
\end{subfigure}
\vspace{2mm}
\begin{subfigure}[b]{0.49\linewidth}
\centering
\hspace{-1 mm}\includegraphics[width=1\linewidth]{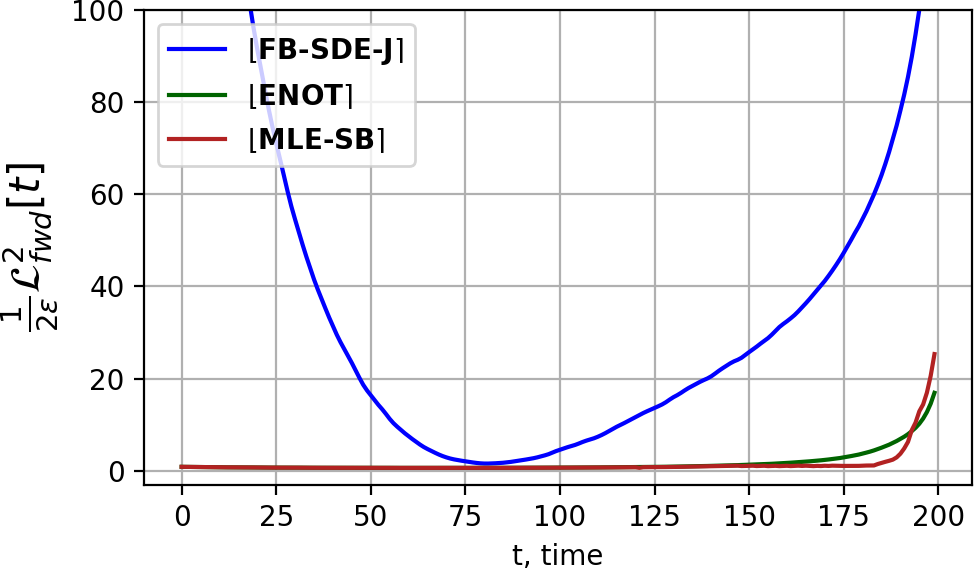}
\caption{$\frac{1}{2\epsilon}\mathcal{L}^{2}_{\text{fwd}}[t]$ for $(D,\epsilon)=(16,1)$}
\label{fig:kl_dim_16_eps_1}
\end{subfigure}
\begin{subfigure}[b]{0.49\linewidth}
\centering
\hspace{2 mm}\includegraphics[width=0.87\linewidth]{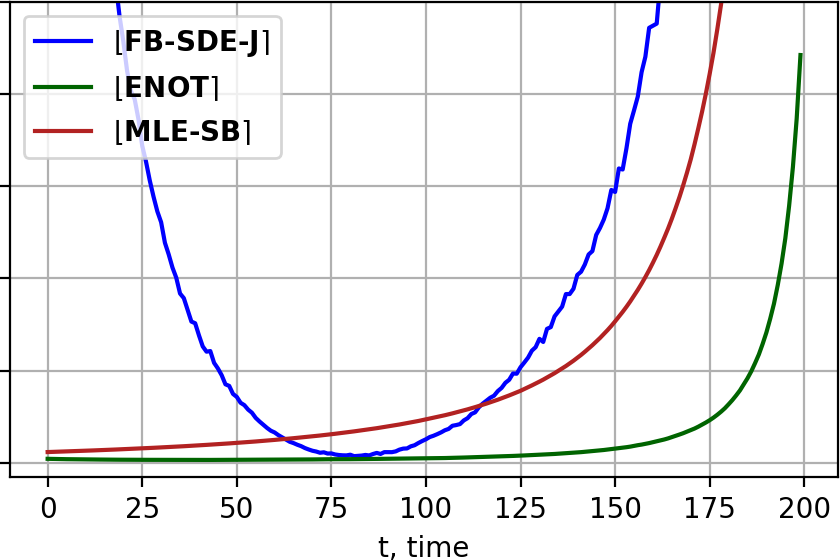}
\caption{$\frac{1}{2\epsilon}\mathcal{L}^{2}_{\text{rev}}[t]$ for $(D,\epsilon)=(16,1)$}
\label{fig: rev_kl_dim_16_eps_1}
\end{subfigure}
\caption{$\mathcal{L}^{2}$ metrics between the ground truth drift $v^{*}$ and the drift $\widehat{v}$ learned by SB solvers.}
\label{fig:kl-revkl}
\end{figure}

\begin{table*}[h]\centering
\scriptsize
\setlength{\tabcolsep}{3pt}
\begin{tabular}{@{}c*{12}{S[table-format=0]}@{}}
\toprule
& \multicolumn{4}{c}{$\epsilon\!=\!0.1$} & \multicolumn{4}{c}{$\epsilon\!=\!1$} \\
\cmidrule(lr){2-5} \cmidrule(lr){6-9} \cmidrule(l){10-13}
& {$D\!=\!2$} & {$D\!=\!16$} & {$D\!=\!64$} & {$D\!=\!128$} & {$D\!=\!2$} & {$D\!=\!16$} & {$D\!=\!64$} & {$D\!=\!128$} \\
\midrule  
$\lfloor \textbf{ENOT}\rceil$   & 0.61& 5.49& 6.59& 10.36& 0.86&1.64&11.43& 37.53   \\
$\lfloor \textbf{DiffSB}\rceil$ &6.96 &12.89 &\text{-} &\text{-} &12,28 & \textcolor{red}{>1000} & \textcolor{red}{>1000} &  \textcolor{red}{>1000} \\   
$\lfloor \textbf{FB-SDE-A}\rceil$ &6.9& 11.08  & \text{-} & \text{-} &10,59 & \textcolor{red}{>1000}&\textcolor{red}{>1000} & \textcolor{red}{>1000}  \\   
$\lfloor \textbf{FB-SDE-J}\rceil$ &3.02&5.02&9.60&28.85&18.79&44.79& 629.28&\textcolor{red}{>1000} \\ 
$\lfloor \textbf{MLE-SB}\rceil$&0.62&2.63&4.76&7.86&0.96& 1.86&9.66&34.95& \\
\bottomrule
\end{tabular}
\captionsetup{justification=centering}
\caption{Forward KL between the ground truth SB process $T_{v^{*}}$ and the process $T_{\widehat{v}}$\protect\linebreak learned with SB solvers on our mixtures benchmark pairs.}\label{tbl:kl}
\end{table*}

\begin{table*}[h]\centering
\scriptsize
\setlength{\tabcolsep}{3pt}
\begin{tabular}{@{}c*{12}{S[table-format=0]}@{}}
\toprule
& \multicolumn{4}{c}{$\epsilon\!=\!0.1$} & \multicolumn{4}{c}{$\epsilon\!=\!1$} \\
\cmidrule(lr){2-5} \cmidrule(lr){6-9} \cmidrule(l){10-13}
& {$D\!=\!2$} & {$D\!=\!16$} & {$D\!=\!64$} & {$D\!=\!128$} & {$D\!=\!2$} & {$D\!=\!16$} & {$D\!=\!64$} & {$D\!=\!128$} \\
\midrule  
$\lfloor \textbf{ENOT}\rceil$   & 72.86& 78.98& 135.29& 221.26& 18.40& 49.65& 177.02& 348.05  \\
$\lfloor \textbf{DiffSB}\rceil$ &11,85 &21,16 &\text{-} &\text{-} &121,43 & \textcolor{red}{>1000} & \textcolor{red}{>1000} &  \textcolor{red}{>1000} \\   
$\lfloor \textbf{FB-SDE-A}\rceil$ &12.29& 19.40  & \text{-} & \text{-} &100,22 & \textcolor{red}{>1000}&\textcolor{red}{>1000} & \textcolor{red}{>1000}  \\  
$\lfloor \textbf{FB-SDE-J}\rceil$ &8.03&12.11&17.16&49.32&64.37&123.68&\textcolor{red}{>1000}&\textcolor{red}{>1000} \\ 
$\lfloor \textbf{MLE-SB}\rceil$ &18.03&28.24&163.34&254.16&22.80&86.07&296.97&636.27\\
\bottomrule
\end{tabular}
\captionsetup{justification=centering}
\caption{Reverse KL between the ground truth SB process $T_{v^{*}}$ and the process $T_{\widehat{v}}$\protect\linebreak learned with SB solvers on our mixtures benchmark pairs.}\label{tbl:rev-kl}
\end{table*}

\textsc{Discussion.} Interestingly, we see that the forward KL divergence shows a smoother behaviour than the RKL for almost all SB solvers. According to our evaluation, the $\lfloor\textbf{MLE-SB}\rceil$ and $\lfloor \textbf{ENOT}\rceil$ solvers mostly beat every other solver in the forward KL metric. At the same time, the RKL metric of $\lfloor \textbf{ENOT}\rceil$ is surprisingly the worst. While we make all these observations, we do not know how to explain them. We hope that the question of the interpretation of the KL and RKL values will be addressed in future SB studies.

\section{Potential Societal Impact} Our proposed approach deals with generative models based on Entropic Optimal Transport and Schrödinger Bridge principles. Such models form and emergent subarea in the field of machine learning research and could be used for various purposes in the industry including image manipulation, artificial content rendering, graphical design, etc. Our benchmark is a step towards improving the reliability, robustness and transparency of these models. One potential negative of our work is that improving generative models may lead to transforming some jobs in the industry.

\section{Building Benchmarks from Real Data}
\label{sec-benchmark-from-data}
In this section, we present a simple heuristic recipe to build benchmark pairs similar to some given real-world data. To illustrate the recipe, we consider toy 2D data example and several single-cell datasets \cite{koshizuka2022neural, bunne2023learning}. Code and data for the experiments in this section can be found in the \texttt{benchmark\_construction\_examplesdata} folder of our repository.

\subsection{Recipe for Building Benchmark Pairs form Data.} \label{sec:benchcmark-construction}
For constructing distribution pairs similar to some given data, we consider a pair of original and target datasets obtained from the true distributions $\mathbb{P}_0$ and $\mathbb{P}_1$, respectively. We heuristically initialize the LSE potential \eqref{eq:general-potential-form} $f^*(y) = \epsilon \log \sum_{n=1}^N w_n \mathcal{Q}(y|b_n, \epsilon^{-1}A_n)$ with $b_n$ as cluster centers obtained from the K-means clustering algorithm applied to the target data from $\mathbb{P}_{1}$. The weights $w_n$ are chosen to be $1/N$ and matrices $A_n = \lambda I$ are diagonal where $\lambda$ is a manually-chosen parameter (shared between all $A_n$).  For any $x$ the conditional plan $\pi^*(\cdot|x)$ for LSE potential $f^{*}$ is just a Gaussian mixture and the mean of each its component is largely determined by $b_n$ (Proposition \ref{prop:entropic-ot-solution-log-sum-exp}). We empirically found that the resulting constructed distribution $d\widehat{\mathbb{P}}_1(y)=d\pi^*_1(y)=\int d\pi^{*}(y|x)d\mathbb{P}_0(x)$ from $\mathbb{P}_{0}$ resembles the Gaussian mixture approximation of the target dataset if one managed to find proper value of $\lambda$. 

In the rest of this section, we use the described recipe to construct benchmark pairs from data to show that the LSE parameterization of the potential provides a wide class of EOT/SB solutions and even allows constructing a benchmark similar to real data.

\subsection{Benchmark Pairs for 2D data.}
Code and data for the experiment described in this section can be found in the folder \texttt{benchmark\_construction\_examples/2d\_data} of our repository. 

To begin with, we present the results of constructing a benchmark pair from 2D data. We consider a Gaussian distribution $\mathbb{P}_0$ as the source distribution and two moons $\mathbb{P}_1$ as the target distribution. We aim to use the previously described recipe \wasyparagraph\ref{sec:benchcmark-construction} to find parameters of the LSE potential to construct an EOT solution between $\mathbb{P}_0$ and an approximation of $\mathbb{P}_1$ denoted as $\widehat{\mathbb{P}}_1$. Here we consider EOT
with $\epsilon=0.05$, use $N=100$ for LSE potentials, and choose $\lambda=50$.  The result is in Figure \ref{fig:bench-from-data-2-moons}.

\begin{figure}[h]
\hspace{-10mm}
\includegraphics[width=1.1\linewidth]{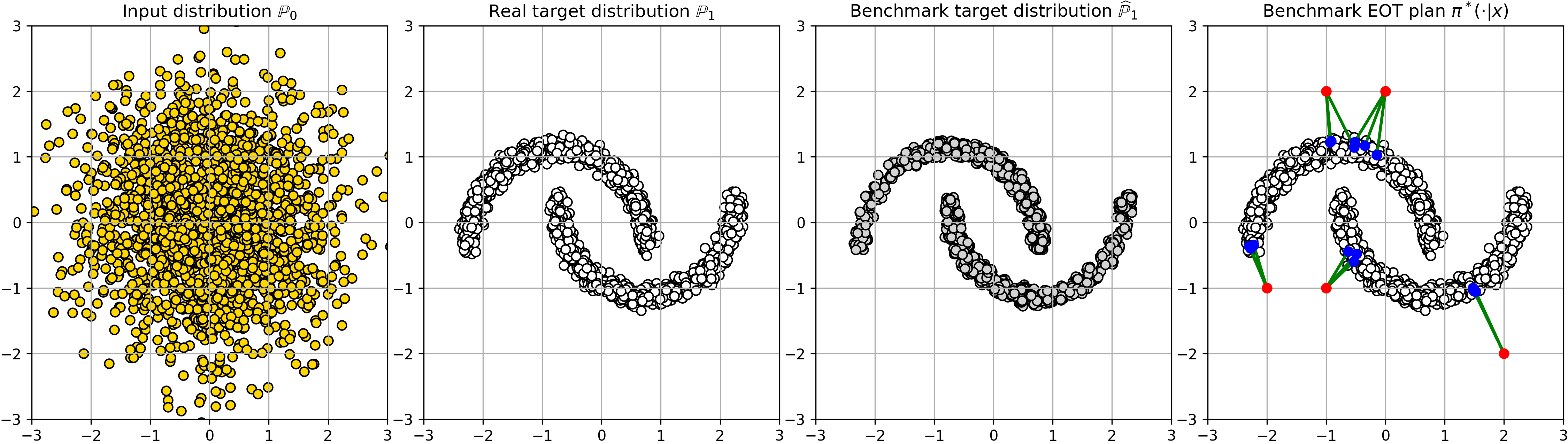}
\caption{\centering \textit{Gaussian} $\rightarrow$ \textit{Two Moons} benchmark pair.}
\label{fig:bench-from-data-2-moons}
\end{figure}

As seen from the figure, the constructed target benchmark distribution $\widehat{\mathbb{P}}_1$ is similar to the target distribution $\mathbb{P}_1$. In turn, the EOT plan maps $x\sim\mathbb{P}_{0}$ to the close regions of the target distribution.

\subsection{Single-cell RNA Data} \label{sec:rna-single-cell}
Code and data for the experiment described in this section can be found in the folder \texttt{benchmark\_construction\_examples/single\_cell\_rna} of our repository. 

We consider the same setup as in \cite[\wasyparagraph 5.2]{koshizuka2022neural}. We use their data from the supplementary materials.\footnote{\url{https://openreview.net/forum?id=d3QNWD_pcFv}} The provided data displays the progression of human embryonic stem cells as they differentiate from embryoid bodies into a range of cell types, such as mesoderm, endoderm, neuroectoderm, and neural crest, throughout a span of 27 days. The cell samples (approximately $2000$ ones per each time period) were gathered at five distinct intervals ($t_0$: day 0 to 3, $t_1$: day 6 to 9, $t_2$: day 12 to 15, $t_3$: day 18 to 21, $t_4$: day 24 to 27). These collected cells were evaluated via scRNAseq, subjected to quality control filtering, and then projected onto a $5$-dimensional feature space utilizing principal component analysis (PCA).

To construct the benchmark pair using the LSE potential, we consider $N=250$, $\epsilon=100$ and $\lambda=100$ and employ the train data at times $t_0$ and $t_4$. Then we use the constructed benchmark plan $\pi^*(\cdot|x)$ to map source data at time $t_0$ to the data at time $t_4$ and obtain benchmark target distribution samples $\widehat{\mathbb{P}}_1$. Finally, we fit TSNE \cite{van2008visualizing} to the combined dataset of samples from $\mathbb{P}_1$ and $\widehat{\mathbb{P}}_1$ and then plot their projections in Figure~\ref{fig:tsne-rna-single-cell}. The resulting plots are very similar, confirming that the constructed benchmark target data resembles the considered single-cell target data.

\begin{figure}[h]
\centering
\includegraphics[width=1\linewidth]{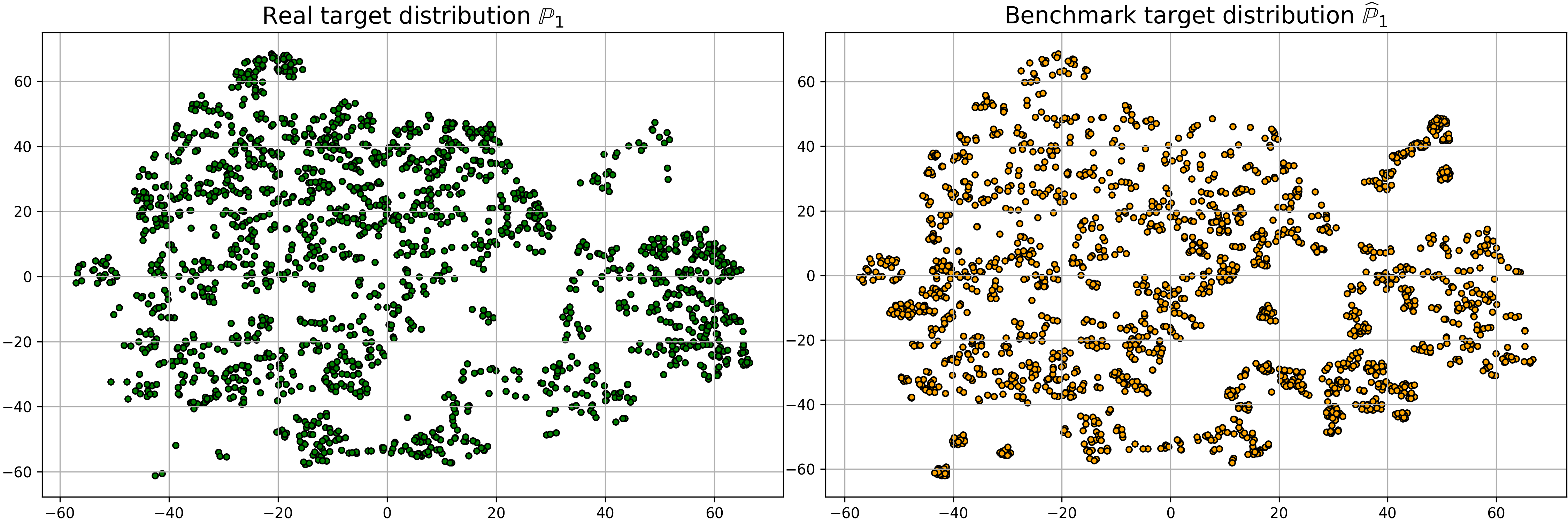}
\caption{\centering TSNE visualization of Single-cell RNA target data and our constructed target data.}
\label{fig:tsne-rna-single-cell}
\end{figure}

\subsection{Single-cell Drugs Data}
Code and data for the experiment described in this section can be found in the folder \texttt{benchmark\_construction\_examples/single\_cell\_drugs} of our repository. 

\begin{table*}[!h]
\centering
\normalsize
\begin{tabular}{ |c|c|c|c|c|c| } 
\hline
 Method & scGen & cAE & CellOT \cite{bunne2023learning} & EOT Benchmark (ours) \\ 
 \hline
 MMD$\downarrow$ & $0.0241$ & $0.0074$ & $0.0013$ & $0.0036$ \\ 
\hline
\end{tabular}
\captionsetup{justification=centering}
 \caption{\normalsize MMD$\downarrow$ distances (on the test data) between the observed perturbed cells $\mathbb{P}_1$ \protect\linebreak and predicted responses from control cells $\widehat{\mathbb{P}}_1$.} \label{tbl:bunne-mmd}
\end{table*}

In \cite{bunne2023learning}, the authors consider the problem of predicting single-cell drug responses for drugs with different molecular effects, using melanoma cell lines profiled by 4i technology (single-cell technology). Utilizing a blend of two melanoma tumor cell lines at a 1:1 ratio, a total of 21,650 cells were imaged. Within this dataset, 11,526 cells existed in the untreated control state, 2,364 received Erlotinib treatment, 2,650 underwent Imatinib treatment, 2,683 were subjected to Trametinib treatment, and 2,417 were treated with a combination of Trametinib and Erlotinib. After preprocessing, each cell is described by 78 features. The train-test split with each drug is 80:20.

In this example, we consider cell data before treatment $(\mathbb{P}_0)$ and after treatment with Erlotninib $(\mathbb{P}_1)$. For the construction of the benchmark pair using an LSE potential, we consider $N=250$, $\epsilon=1$ and $\lambda=20$. As with the single cell RNA data \wasyparagraph\ref{sec:rna-single-cell}, we fit the TSNE \cite{van2008visualizing} on a combined dataset of samples from $\mathbb{P}_1$ and $\widehat{\mathbb{P}}_1$ and then plot their projections in Figure~\ref{fig:tsne-rna-single-cell}. As seen from the visualizations, the TSNE projections of the real data and the mapped data are similar.

\begin{figure*}[h]
\includegraphics[width=1\linewidth]{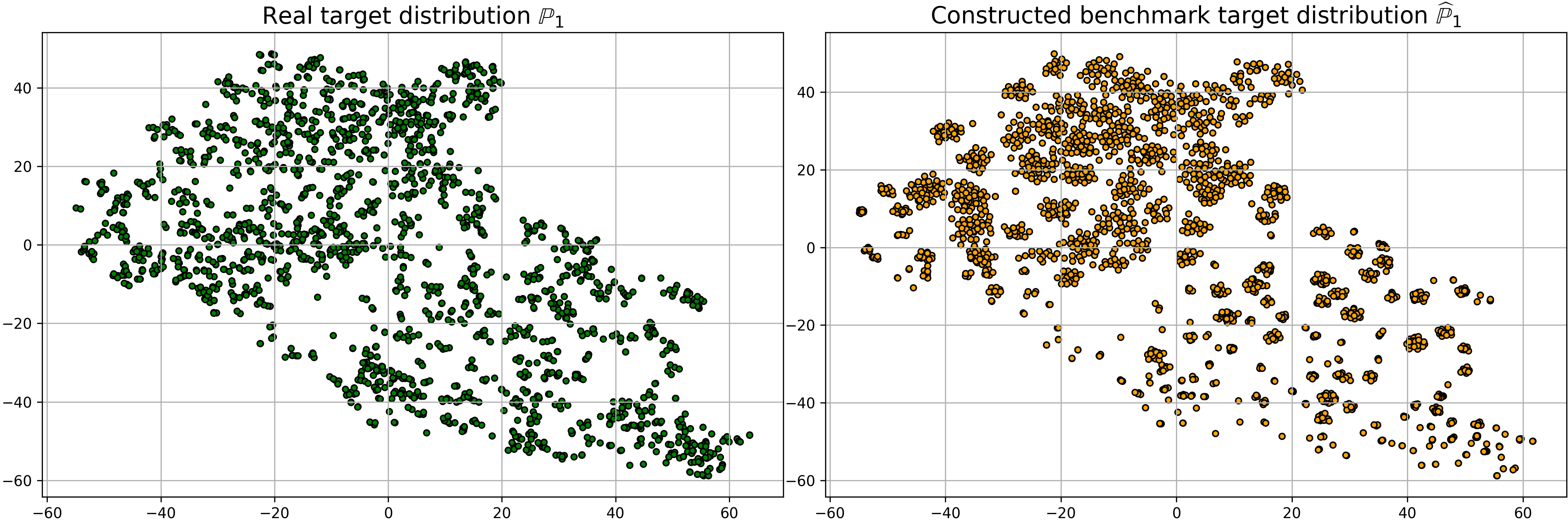}
\caption{\centering TSNE visualization of Single-cell Drugs target data and our constructed target data.}
\end{figure*}

In addition, we quantitatively evaluate on the test data how well the constructed target distribution $\widehat{P}_{1}$ matches the true data distribution $\mathbb{P}_{1}$. We employ the same MMD metric as the authors and present the results in Table~\ref{tbl:bunne-mmd}. The data for the baselines scGen, cAE and the authors' method CellOT are taken from \cite{bunne2023learning}. As one can see, our approach is even better than two of the baselines considered.

\end{document}